\newcommand{\Bin}{\mathrm{Bin}}
\tikzstyle{dot}=[circle,fill,black,inner sep=1pt]
\tikzset{
  on each segment/.style={
    decorate,
    decoration={
      show path construction,
      moveto code={},
      lineto code={
        \path [#1]
        (\tikzinputsegmentfirst) -- (\tikzinputsegmentlast);
      },
      curveto code={
        \path [#1] (\tikzinputsegmentfirst)
        .. controls
        (\tikzinputsegmentsupporta) and (\tikzinputsegmentsupportb)
        ..
        (\tikzinputsegmentlast);
      },
      closepath code={
        \path [#1]
        (\tikzinputsegmentfirst) -- (\tikzinputsegmentlast);
      },
    },
  },
  mid arrow/.style={postaction={decorate,decoration={
        markings,
        mark=at position .5 with {\arrow[#1]{stealth}}
      }}},
  early arrow/.style={postaction={decorate,decoration={
        markings,
        mark=at position .2 with {\arrow[#1]{stealth}}
      }}},
}
\newcommand{\singleedge}[2]{\path #1 edge[black] #2 ; \node[dot] at #1 {}; \node[dot] at #2 {};}
\def\alternatecolorred{%
    \pgfkeysalso{red}%
    \global\let\alternatecolor\alternatecolorblue 
}
\def\alternatecolorblue{%
    \pgfkeysalso{blue}%
    \global\let\alternatecolor\alternatecolorred 
}
\newcommand{\altred}{\let\alternatecolor\alternatecolorred 
\tikzset{every edge/.append code = {%
    \global\let\currenttarget\tikztotarget 
    \pgfkeysalso{append after command={(\currenttarget)}}
			\alternatecolor
}}
}
\newcommand{\altblue}{\let\alternatecolor\alternatecolorblue 
\tikzset{every edge/.append code = {%
    \global\let\currenttarget\tikztotarget 
    \pgfkeysalso{append after command={(\currenttarget)}}
			\alternatecolor
}}
}
\tikzstyle{vertexdot}=[circle, draw, fill=black, minimum size=3,inner sep=0pt]
\newtheorem{theorem}{Theorem}
\newtheorem{lemma}{Lemma}
\newtheorem{proposition}{Proposition}
\newtheorem{corollary}{Corollary}
\newtheorem{definition}{Definition}
\newtheorem{question}{Question}
\newtheorem{remark}{Remark} 
\newtheorem{claim}{Claim}
\newcommand{\reals}{{\mathbb{R}}}
\newcommand{\integers}{{\mathbb{Z}}}
\newcommand{\expect}[1]{\mathbb{E}\left[ #1 \right]}
\newcommand{\prob}[1]{ \mathbb{P}\left\{ #1 \right\} }
\newcommand{\Bern}{{\rm Bern}}
\newcommand{\Binom}{{\rm Binom}}
\newcommand{\Pois}{{\rm Pois}}
\newcommand{\eg}{e.g.\xspace}
\newcommand{\ie}{i.e.\xspace}
\newcommand{\iprod}[2]{\left \langle #1, #2 \right\rangle}
\newcommand{\indc}[1]{{\mathbf{1}_{\left\{{#1}\right\}}}}
\newcommand{\calA}{{\mathcal{A}}}
\newcommand{\calB}{{\mathcal{B}}}
\newcommand{\calC}{{\mathcal{C}}}
\newcommand{\calD}{{\mathcal{D}}}
\newcommand{\calE}{{\mathcal{E}}}
\newcommand{\calF}{{\mathcal{F}}}
\newcommand{\calG}{{\mathcal{G}}}
\newcommand{\calH}{{\mathcal{H}}}
\newcommand{\calI}{{\mathcal{I}}}
\newcommand{\calN}{{\mathcal{N}}}
\newcommand{\calP}{{\mathcal{P}}}
\newcommand{\calQ}{{\mathcal{Q}}}
\newcommand{\calR}{{\mathcal{R}}}
\newcommand{\calS}{{\mathcal{S}}}
\newcommand{\calT}{{\mathcal{T}}}
\newcommand{\calU}{{\mathcal{U}}}
\newcommand{\calX}{{\mathcal{X}}}
\newcommand{\calY}{{\mathcal{Y}}}
\DeclareMathAlphabet{\varmathbb}{U}{bbold}{m}{n}
\renewcommand{\hat}{\widehat}
\renewcommand{\tilde}{\widetilde}
\newcommand{\ER}{Erd\H{o}s-R\'{e}nyi\xspace}
\begin{document}

\pgfdeclarelayer{background}
\pgfdeclarelayer{foreground}
\pgfsetlayers{background,main,foreground}

\title{Seeded Graph Matching via Large Neighborhood Statistics}

\author{
Elchanan Mossel \\
MIT\\
{elmos@mit.edu}
\and 
Jiaming Xu\\
Duke University\\
{jiaming.xu868@duke.edu}
}

\maketitle

\begin{abstract}
We study a well known noisy model of the graph isomorphism problem. In this model, the goal is to perfectly recover the
vertex correspondence between two edge-correlated graphs, with an initial seed set of correctly matched vertex pairs revealed as side information. Specifically, the model first generates a parent graph $G_0$ from Erd\H{o}s-R\'{e}nyi random graph $\mathcal{G}(n,p)$  and then obtains two children graphs $G_1$ and $G_2$ by subsampling the edge set of $G_0$ twice independently with probability $s=\Theta(1)$. The vertex correspondence between $G_1$ and $G_2$ is obscured by randomly permuting the vertex labels of $G_1$ according to a latent permutation $\pi^*$. Finally, for each $i$, $\pi^*(i)$ is revealed independently with probability $\alpha$ as seeds.   

In the sparse graph regime where $np \le n^{\epsilon}$ for any $\epsilon<1/6$, we give a polynomial-time algorithm which perfectly recovers $\pi^*$, provided that $nps^2 - \log n \to +\infty$ and $\alpha \ge n^{-1+3\epsilon}$. 
This further leads to a sub-exponential-time, $\exp \left( n^{O(\epsilon) } \right)$, matching algorithm even without seeds. On the contrary, if $nps^2 - \log n =O(1),$ then perfect recovery is information-theoretically impossible as long as $\alpha$ is bounded away from $1$. 

In the dense graph regime, where $np = b n^{a}$, for fixed constants $a, b \in (0,1]$, we give a polynomial-time algorithm which succeeds when $b=O(s)$ and $\alpha = \Omega \left( (np)^{-\lfloor 1/a \rfloor } \log n\right) $. In particular, when $a=1/k$ for an integer $k \ge 1$, $\alpha = \Omega(\log n/n)$ suffices, yielding a quasi-polynomial-time $n^{O(\log n)}$ algorithm matching the best known algorithm by Barak et al. for the problem of graph matching without seeds when $k \geq 153$ and extending their result to new values of $p$ for $k=2,\ldots,152$.  

Unlike previous work on graph matching, which used small neighborhoods or small subgraphs with a logarithmic number of vertices in order to match vertices, our algorithms match vertices if their large neighborhoods have a significant overlap in the number of seeds.

\newpage

\end{abstract}

\section{Introduction}

In this paper, we study a well-known model of noisy graph isomorphism. Our main interest is in polynomial time algorithms for seeded problems where the matching between a small subset of the nodes is revealed. For seeded problems, our result provides a dramatic improvement over previously known results. Our results also 
shed light on the unseeded problem. In particular, we give (the first) sub-exponential time algorithms for sparse models and an $n^{O(\log n)}$ algorithm for dense models for some parameters, 
including some that are not covered by recent results of Barak et al.~\cite{barak2018nearly}.  

We recall that two graphs are isomorphic if there exists an edge-preserving bijection between their vertex sets.  The Graph Isomorphism problem is not known to be solvable in polynomial time, except in special cases such as graphs of bounded degree~\cite{luks1980isomorphism} and bounded eigenvalue multiplicity~\cite{babai1982isomorphism}.  However, a recent breakthrough of Babai~\cite{Babai2016} gave a quasi-polynomial time algorithm.

In a number of applications including network security~\cite{narayanan2009anonymizing,narayanan2008robust}, systems biology~\cite{singh2008global}, computer vision~\cite{conte2004thirty,schellewald2005probabilistic}, and natural language processing~\cite{haghighi2005robust},
we are given two graphs as input which we believe have an underlying isomorphism between them.  However, they are not exactly isomorphic because they have each been perturbed in some way, adding or deleting edges randomly. 
This suggests a noisy version of Graph Isomorphism also known as \emph{graph matching}~\cite{Livi2013}, where we seek a bijection that minimizes the number of edge disagreements.

Given two graphs with adjacency matrices $G_1$ and $G_2$, if our goal is to minimize the $\ell_2$ distance between $G_1$ and 
some permuted version of $G_2$, then graph matching can be viewed 
as a special case of the \emph{quadratic assignment problem} (QAP)~\cite{burkard1998quadratic}: namely, 
\begin{align}
\min_{\Pi}  \| G_1 - \Pi G_2 \Pi^\top \|^2_F , \label{eq:QAP}
\end{align}
where $\Pi$ ranges over all $n \times n$ permutation matrices, and $\| A \|^2_F = \sum_{ij} A_{ij}^2$ denotes the Frobenius norm.  QAP is NP-hard in the worst case.  There are 
exact search methods for QAP based on branch-and-bound and cutting planes, as well as various approximation algorithms  
based on linearization schemes, and convex/semidefinite programming relaxations (see~\cite{feizi2016spectral} and the references therein).  However, approximating QAP within a factor $2^{\log^{1-\epsilon} (n) }$ for $\epsilon > 0$ is NP-hard~\cite{makarychev2010maximum}. 

These hardness results only apply in the worst case, where the two graphs are designed by an adversary.  However, in many 
aforementioned applications, we are not interested in worst-case instances, but rather in instances for which
there is enough information in the data to recover the underlying isomorphism, \ie, when the amount of data or signal-to-noise ratio is above the information-theoretic limit. The key question is whether there exists an efficient algorithm that is successful all the way down to this limit. 
In this vein, we consider the following random graph model denoted by $\calG(n,p;s)$~\cite{Pedarsani2011}.

\begin{definition}[The Correlated \ER model $\calG(n,p;s)$]
Suppose we generate a parent graph $G_0$ from the \ER random graph model $\calG(n,p)$. For a fixed 
realization of $G_0$, we generate two subgraphs $G_1$ and $G_2$ by subsampling the edges of $G_0$ 
twice. More specifically, 
\begin{itemize}
\item We let $G^*_1$ be a random subgraph of $G_0$ obtained by including every
edge of $G_0$ with probability $s$ independently. 
\item We repeat the above subsampling procedure, but independently
to obtain another random subgraph of $G_0$, denoted by $G_2$. 
\end{itemize}
To further model the scenario that we do not know the vertex correspondence between $G_1$ and
$G_2$ a prior, we sample a random permutation $\pi^*$ over $[n]$ and let $G_1$ denote the
graph obtained by relabeling every vertex $i$ in $G^*_1$ as $\pi^*(i).$  
\end{definition}
The goal is to exactly recover $\pi^*$ from the observation of $G_1$ and $G_2$ with high probability, \ie,
to design an estimator $\hat{\pi}$ based on $G_1$ and $G_2$ such that
$$
\prob{ \hat{\pi} (G_1, G_2) = \pi^* } \to 1, \quad \text{ as } n \to \infty.
$$

As a motivating example, we can model $G_0$ as some true underlying friendship network of $n$ persons, 
$G_1$ is an anonymized Facebook network of the same set of persons, and $G_2$ is a Twitter network with
known person identities. If we can recover the vertex correspondence between $G_1$ and $G_2$, then we
can de-anonymize the Facebook network $G_1$ (this example ignores many important facts such as additional graph structures in real life networks).

Note that $s$ is equal to the probability of $e\in E(G_2)$ conditional on $e \in E(G_1)$, and hence can be viewed as a measure of the edge correlations. Throughout this paper, without further specifications, we shall assume $s=\Theta(1)$.

In the fully sampling case $s=1$, graph matching under $\calG(n,p;1)$ reduces to 
the Graph Automorphism problem for \ER\ graphs. In this case, a celebrated result~\cite{wright1971graphs} shows that
if $ \log n+ \omega(1)  \le np \le n- \log n -\omega(1) $,
then with probability $1-o(1)$, 
the size of the automorphism group  of $G_0$ is $1$ and hence the underlying permutation $\pi^*$
can be exactly recovered; otherwise, with probability $1-o(1)$, the size of the automorphism group  of $G_0$ is 
strictly bigger than $1$ and hence exact recovery of the underlying permutation is information-theoretically
impossible. 
Recent work~\cite{cullina2016improved,cullina2017exact}\footnote{
In fact, a more general correlated \ER random graph model is considered in~\cite{cullina2016improved,cullina2017exact}, where $\prob{G_1(i,j)=a, G_2(i,j)=b} =p_{a,b}$ for $a,b\in\{0,1\}$. } has extended this result to the partially sampling case $s=\Theta(1)$ and
$p \le 1/2$, showing that the Maximum Likelihood Estimator, or equivalently the optimum of QAP \prettyref{eq:QAP}, coincides with the 
ground truth $\pi^*$ with high probability, provided that $nps^2 \ge \log n + \omega(1)$; on the contrary, any estimator is correct with probability $o(1)$, if $nps^2 \le \log n - \omega(1)$.

From a computational perspective, in the fully sampling case $s=1$, there exist linear-time algorithms 
which attain the recovery threshold, in the sense that they exactly recover the underlying permutation with high probability whenever $np= \log n+\omega(1)$~\cite{bollobas1982distinguishing,czajka2008improved}. However, 
in the partially sampling case, it is still open whether any efficient algorithm can succeed close to the threshold. 
A recent breakthrough result~\cite{barak2018nearly} obtains a quasi-polynomial-time ($n^{O(\log n)}$) algorithm which
succeeds when $np \ge n^{o(1)}$ and $s \ge (\log n)^{-o(1)}.$ However, this is still far away from the information-theoretic limit $nps^2 \ge \log n + \omega(1)$.

Another line of work~\cite{Pedarsani2011,yartseva2013performance,korula2014efficient,Lyzinski2013Seeded,Fishkind2018Seeded,Shirani2017Seeded} 
in this area considers a relaxed version of the graph matching problem, 
where an initial seed set of correctly matched vertex pairs is revealed as side information. This is motivated by the fact that in many
real applications, some side information on the vertex identities are available and have been successfully utilized to match many real-world networks~\cite{narayanan2009anonymizing,narayanan2008robust}. Formally, in this paper, 
we assume the seed set
is randomly generated as follows. 
\begin{definition}[Seeded graph matching under $\calG(n,p;s,\alpha)$]
In addition to $G_1, G_2$ that are generated under $\calG(n,p;s)$ with a latent permutation $\pi^*$, 
we have access to $\pi_0$ such that $\pi_0(i)=\pi^*(i)$ with probability $\alpha$
and $\pi_0(i)=?$ with probability $1-\alpha$ independently across different $i.$
The goal is to recover $\pi^*$ based on $G_1,$ $G_2,$ and $\pi_0.$
\end{definition}
The vertex $i$ such that $\pi_0(i)=\pi^*(i)$ is called seeded vertices and 
the set of seed vertices is denoted by $I_0$. Note that according to our model, 
the number of seeds $|I_0|$ 
is distributed as $\Binom(n,\alpha)$.  For a given size $K$, we could also consider a deterministic size model where $I_0$ is chosen uniformly at random from all possible subsets of 
$[n]$ with size $K$. The main results of this paper readily extend to this deterministic size model with 
$K=\lfloor n \alpha \rfloor$. 

The results of the seeded graph matching turn out to be useful for 
designing graph matching without seeds. On the one hand, when a seed set of size $K$ is not given, we could obtain it in $n^{O(K)}$ steps by randomly choosing a set of $K$ vertices and then enumerating all the possible mapping. This is known as the beacon set approach to graph isomorphism~\cite{Lipton1978}. On the other hand, we could first apply a seedless graph matching algorithm and then apply a seeded graph matching algorithm to boost its accuracy. This two-step algorithms have been successful both theoretically~\cite{babai1980random}~\cite[Section 3.5]{bollobas1998random} and empirically~\cite{Lyzinski2013Seeded}.

In the sparse graph regime $np=\Theta(\log n)$, it is shown in~\cite{yartseva2013performance} 
that if $\alpha =\Omega(1/\log^2 n)$, or equivalently, the size of the seed set is $\Omega(n/\log^2 n)$, 
then a percolation-based graph matching
algorithm correctly matches $n-o(n)$ vertices  in polynomial-time with high probability. In the dense graph regime $np=n^{\delta}$ for some
constant $\delta \in (0,1)$, a seed set of size $
\Theta(n^{1-\delta})$ suffices as shown in~\cite{yartseva2013performance}.  
Another work~\cite{korula2014efficient}
shows that if $nps^2 \alpha \ge 24 \log n $, then one can match all vertices correctly in polynomial-time with high probability
based on counting the number of ``common'' seeded vertices. 
Note that this exact recovery result requires the seed set size to be
linear in $n$ in the sparse graph regime $np=\Theta(\log n)$. 

In summary, despite a significant amount of previous work on seedless and seeded graph matching, the following two fundamental questions remain elusive:

\begin{question}
In terms of graph sparsity, can we achieve the information-theoretic limit $nps^2 - \log n \to +\infty$ in sub-exponential, or polynomial time?
\end{question}
\begin{question}
In terms of seed set, what is the minimum number of seeds required for exact recovery in sub-exponential, or polynomial time?
\end{question}

Our main results shed light on this two questions by improving the state-of-the-art 
of seeded graph matching. 
First, we show that it is possible to achieve the information theoretic limit $nps^2 \ge \log n+\omega(1)$ of graph sparsity in polynomial-time.
Then, we show the number of seeds needed for 
exact recovery in polynomial-time can be as low as $n^{\epsilon}$ in the sparse graph regime ($np \le n^{\epsilon}$) and $\Omega(\log n)$ in the dense graph regime.

\subsection{Main Results}

We first consider the sparse graph regime. 

\begin{theorem}\label{thm:main}
Suppose $np \le n^{1/2-\epsilon}$ for a 
fixed constant $\epsilon>0$ and $s=\Theta(1)$. Assume 
\begin{align}
nps^2 - \log n & \to + \infty \label{eq:it_limit} \\
\alpha & \ge n^{-1/2+3\epsilon}. \label{eq:seed_limit}
\end{align} 
Then there exists a polynomial-time algorithm, namely \prettyref{alg:graph_matching_ell_hop}, which outputs $\hat{\pi}=\pi^*$ with probability at least $1-o(1)$ under the seeded $\calG(n,p;s,\alpha)$ model.
\end{theorem}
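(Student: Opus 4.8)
Pass to the aligned coordinates $\pi^*=\mathrm{id}$, so that $G_1,G_2$ are two independent $s$-subsamples of $G_0\sim\calG(n,p)$ and the task is to recognize, for each vertex $v$, that ``the $v$ of $G_1$'' is matched to ``the $v$ of $G_2$''; write $B^{G}_\ell(x)$ for the $\ell$-hop ball of $x$ in $G$. The algorithm scores each ordered pair $(u,v)$ by the normalized seed-overlap of large neighborhoods, $\Psi_\ell(u,v)=\Phi_\ell(u,v)/D_\ell(v)$ with $\Phi_\ell(u,v)=|\{w\in I_0:\ w\in B^{G_2}_\ell(v),\ \pi_0(w)\in B^{G_1}_\ell(u)\}|$ and $D_\ell(v)=|\{w\in I_0:\ w\in B^{G_2}_\ell(v)\}|$, and matches $v$ to $\argmax_u\Psi_\ell(u,v)$, abstaining when the maximizer is not well separated. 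I would first fix $\ell$ so that $(np)^\ell$ lands in a window that is $[\,n^{1/2-3\epsilon},\ n^{1-3\epsilon}\,]$ up to logarithmic and $s^{\pm\ell}$ corrections: the lower end makes $D_\ell(v)\gtrsim\alpha(nps)^\ell\gg\log n$ for every $v$, which is where the seed hypothesis $\alpha\ge n^{-1/2+3\epsilon}$ enters, and the upper end makes false pairs polynomially weaker than the true pair. Since consecutive powers of $np$ differ by $np\le n^{1/2-\epsilon}$, comfortably less than the multiplicative width of this window, an integer $\ell\ge2$ always lands in it --- this is the role of the sparsity hypothesis; denser graphs, handled by a separate theorem, correspond to smaller $\ell$ and more seeds.

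\textbf{Core estimates via a branching-process coupling.} Fix $v$ and couple the breadth-first exploration of $B^{G_0}_\ell(v)$ with the associated branching process (offspring mean $np$); because $(np)^\ell\le n^{1-3\epsilon}$, all but an $o(n)$ fraction of vertices $v$ have $B^{G_0}_\ell(v)$ a tree of size $(1\pm o(1))(np)^\ell$, and conditionally on it $B^{G_1}_\ell(v)$ and $B^{G_2}_\ell(v)$ are two essentially independent $s$-thinnings. For the true pair this yields that $\Phi_\ell(v,v)$ stochastically dominates a sum of $\gtrsim(np)^\ell$ near-independent Bernoullis --- one per depth-$\ell$ vertex, firing when that vertex is a seed \emph{and} its (unique) tree path survives in both $G_1$ and $G_2$ --- with total mean $\gtrsim\alpha(nps^2)^\ell\gg\log n$, and that $\Psi_\ell(v,v)$ concentrates around $s^\ell$, the chance a depth-$\ell$ tree path survives a second independent thinning. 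For a false pair $(u,v)$, $u\ne v$, any counted seed lies within $\ell$ of both $u$ and $v$ in $G_0$, hence on a path of length $\le2\ell$ joining two unrelated vertices; so $\mathbb{E}\,\Phi_\ell(u,v)\lesssim\alpha(nps)^{2\ell}/n$ and $\Psi_\ell(u,v)\lesssim(nps)^\ell/n=n^{-\Omega(\epsilon)}$. An upper-tail Chernoff bound for $\Phi_\ell(u,v)$ and lower-tail bounds for $\Phi_\ell(v,v)$ and $D_\ell(v)$, each with exponent $\Omega\!\big(\epsilon\log n\cdot\alpha(nps^2)^\ell\big)\gg\log n$, survive the union bound over all $\Theta(n^2)$ pairs; hence $\argmax_u\Psi_\ell(u,v)=v$ for every vertex with a typical $\ell$-ball, i.e.\ for all but $o(n)$ of them.

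\textbf{From almost-all to all, and the role of $nps^2-\log n\to\infty$.} Let $S$ be the set of vertices matched correctly and unambiguously above, so $|S^c|=o(n)$ and the matching restricted to $S$ acts as a noiseless seed set of size $n-o(n)$. Re-running the same test with $S$ in place of $I_0$: the true-pair numerator becomes $\gtrsim(nps^2)^\ell=n^{1-o(1)}$ while false pairs still lose an $n^{-\Omega(\epsilon)}$ factor, the concentration is strong enough that the union bound over the $o(n)$ remaining vertices and their candidates is trivial, and iterating a constant number of rounds empties $S^c$. The hypothesis $nps^2-\log n\to+\infty$ is used exactly here and is tight: it is the threshold at which $G_1\wedge G_2\sim\calG(n,ps^2)$ --- the graph of edges present in both graphs, after alignment --- first has minimum degree $\ge1$, which is precisely what guarantees that every vertex keeps, through doubly-present edges, enough local structure reachable from the growing pseudo-seed set to be pinned down; below it, two such vertices are genuinely interchangeable, matching the impossibility half of the companion result.

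\textbf{Main obstacle.} The crux is the second step: controlling $\Psi_\ell$ two-sidedly and \emph{simultaneously} over all $\Theta(n^2)$ candidate pairs close to the information-theoretic boundary. This is what dictates the precise window for $\ell$ --- hence the hypotheses $np\le n^{1/2-\epsilon}$ and $\alpha\ge n^{-1/2+3\epsilon}$ --- and it is where the work concentrates: quantifying the near-tree structure of typical $\ell$-balls with explicit error terms, establishing the conditional independence of the two $s$-thinnings, and, most delicately, controlling the correlations created by the $o(n)$ atypical vertices so that they are cleanly quarantined for the boosting phase rather than contaminating it. Once $nps^2-\log n\to+\infty$ supplies the minimum-degree property of $\calG(n,ps^2)$, the boosting phase itself is comparatively routine.
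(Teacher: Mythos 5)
Your proposed algorithm is not the paper's Algorithm 1 (which tests for $m$ vertex-disjoint $\ell$-paths to a common set of $m$ seeds and hinges on a subgraph-counting lemma showing the union graph contains no two starlike trees with the same seeded leaves but different roots); it is essentially the ball-overlap statistic of the paper's Algorithms 2--3. That would be fine if the argument closed, but it has a genuine gap exactly where the paper warns the naive ball-overlap test breaks in the sparse regime: your false-pair bound $\Psi_\ell(u,v)\lesssim (nps)^\ell/n=n^{-\Omega(\epsilon)}$ is only valid for pairs whose $\ell$-balls are essentially disjoint. For nearby pairs it fails: if $u$ and $v$ are adjacent (or share a neighbor), then $B_\ell^{G_1}(u)\supseteq B_{\ell-1}^{G_1}(z)$ and $B_\ell^{G_2}(v)\supseteq B_{\ell-1}^{G_2}(z)$ for a common $z$, so $\Phi_\ell(u,v)\gtrsim \alpha (nps^2)^{\ell-1}$, i.e.\ only a factor $\approx 1/(nps^2)\le 1/\log n$ below the true-pair value -- not $n^{-\Omega(\epsilon)}$. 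Worse, in the regime of the theorem only $nps^2-\log n\to+\infty$ is assumed, so with high probability there are vertices $v$ of degree $1$ in the intersection graph $G_1^*\wedge G_2$, attached through a single doubly-present edge to some $z$. For such $v$, both $\Phi_\ell(v,v)$ and $\Phi_\ell(u,v)$ for \emph{every} $G_1$-neighbor $u$ of $z$ are governed by the seeds in $B_{\ell-1}(z)$, so $\Psi_\ell(u,v)\approx\Psi_\ell(v,v)$ with no separation in expectation at all; no Chernoff/union-bound refinement can make the argmax pick $v$ among $z$'s neighbors. (This is also why the paper's sparse-regime ball-overlap variant, Algorithm 3, removes the vertices $u,v$, tests neighbor-by-neighbor, and invokes Kim--Vu polynomial concentration rather than a plain union bound -- the indicator thresholds are graph-dependent and the $\Theta(n^2)$ pair statistics share randomness, another point your sketch only acknowledges.)

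Your "boosting" phase does not repair this, because the failure for such pendant-type vertices is structural rather than a matter of seed density: re-running the same $\ell$-ball overlap test with the enlarged pseudo-seed set still yields a statistic that is blind to which neighbor of $z$ one is looking at. The paper resolves exactly this by a different mechanism: high-degree vertices (degree $\ge \tau$ in $G_1^*\wedge G_2$) are matched by the path/tree test, and the remaining low-degree vertices are then matched by a one-hop rule (match $i_1$ to $i_2$ if both are adjacent to an already-matched vertex), whose correctness requires the additional structural fact (the paper's event $\calE_3$, Lemma 2) that no two low-degree vertices lie within distance $2$ of each other in $G_1^*\vee G_2$, so the one-hop rule creates no conflicts. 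Your appeal to "minimum degree $\ge 1$ of $\calG(n,ps^2)$" identifies the right threshold but is not the ingredient that pins these vertices down; some analogue of the paper's low-degree matching step and of Lemma 2 is missing, and as written your second phase would fail on precisely the vertices it is supposed to rescue.
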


Notice that $\prettyref{eq:it_limit}$ is the information-theoretic limit
for graph matching under the seedless $\calG(n,p;s)$ model. In fact, \prettyref{thm:converse} shows that 
\prettyref{eq:it_limit} is necessary for seeded graph matching
as long as $\alpha$ is bounded away from $1.$
Its proof is standard and can be found in 
\prettyref{app:converse}.

\begin{theorem}\label{thm:converse}
If 
$$
nps^2-\log n = O(1),  
$$
then 
any algorithm outputs $\hat{\pi} \neq \pi^*$ 
with at least a probability of $\Omega
\left((1-\alpha)^2 \right)$ 
under the seeded $\calG(n,p;s,\alpha)$ model.
\end{theorem}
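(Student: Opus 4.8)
The plan is to go through the Bayes-optimal estimator. Since $\pi^*$ is uniform, for the $0/1$ loss $\mathbf 1\{\hat\pi\ne\pi^*\}$ the optimal rule is the maximum-likelihood (equivalently MAP) rule, so for \emph{every} estimator $\hat\pi$,
\[
\P[\hat\pi\ne\pi^*]\ \ge\ 1-\Exp\Big[\max_{\rho\sim\pi_0}\mu_{\mathrm{obs}}(\rho)\Big],\qquad
\mu_{\mathrm{obs}}(\rho)=\frac{\1\{\rho\sim\pi_0\}\,L(\rho)}{\sum_{\rho'\sim\pi_0}L(\rho')},\quad L(\rho):=\P[G_1,G_2\mid\pi^*=\rho],
\]
where $\rho\sim\pi_0$ means $\rho$ agrees with the revealed seeds. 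On the observation-measurable event $\mathcal A=\{\,L(\cdot)$ attains its maximum over $\{\rho\sim\pi_0\}$ at two or more permutations$\,\}$ one has $\max_\rho\mu_{\mathrm{obs}}(\rho)\le\frac12$, so it suffices to show $\P[\mathcal A]=\Omega((1-\alpha)^2)$.

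Next I would identify the relevant ``confusable'' structure. Writing $q_{ab}=\P[G_1^*(e)=a,\,G_2(e)=b]$ for a potential edge $e$ (so $q_{11}=ps^2$, $q_{10}=q_{01}=ps(1-s)$, $q_{00}=1-2ps+ps^2$) and using $G_1^*=(\pi^*)^{-1}\!\cdot G_1$, one has $L(\pi^*)=\prod_{\{k,l\}}q_{G_1^*(k,l),G_2(k,l)}$ and $L(\pi^*\circ(i\,j))=\prod_{\{k,l\}}q_{G_1^*((i\,j)k,(i\,j)l),G_2(k,l)}$. Cancelling the common terms gives
\[
\frac{L(\pi^*)}{L(\pi^*\circ(i\,j))}=\prod_{m\notin\{i,j\}}\frac{q_{G_1^*(i,m),G_2(i,m)}\,q_{G_1^*(j,m),G_2(j,m)}}{q_{G_1^*(j,m),G_2(i,m)}\,q_{G_1^*(i,m),G_2(j,m)}},
\]
each factor of which is $1$ whenever $G_1^*(i,m)=G_1^*(j,m)$ or $G_2(i,m)=G_2(j,m)$. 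Call $\{i,j\}$ a \emph{confusable} pair if this holds for every $m\notin\{i,j\}$; then $L(\pi^*\circ(i\,j))=L(\pi^*)$, and if in addition $i,j\notin I_0$ then $\pi^*\circ(i\,j)\sim\pi_0$ and $\pi^*\circ(i\,j)\ne\pi^*$.

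The quantitative heart is a first/second moment estimate on $Z:=\#\{\text{confusable }\{i,j\}:\,i,j\notin I_0\}$. Conditioning on $G_0$, the per-$m$ probability that the confusability condition holds is $1-2ps^2+O(p^2)$, hence $\P[\{i,j\}\text{ confusable}]=(1-2ps^2+O(p^2))^{\,n-2}$. Since $s=\Theta(1)$, the hypothesis $nps^2=\log n+O(1)$ forces $np=\Theta(\log n)$ and thus $np^2=o(1)$, so $\P[\{i,j\}\text{ confusable}]=e^{-2nps^2}(1+o(1))=\Theta(n^{-2})$ and $\Exp[Z]=\binom n2(1-\alpha)^2\,\Theta(n^{-2})=\Theta((1-\alpha)^2)$. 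A similar computation bounds the joint probability that two pairs are confusable by $e^{-3nps^2}$ when they share a vertex and by $e^{-4nps^2}$ when disjoint; combined with the $(1-\alpha)^3$ and $(1-\alpha)^4$ ``both unseeded'' factors this gives $\Exp[Z^2]=\Theta((1-\alpha)^2)+O((1-\alpha)^3)+O((1-\alpha)^4)=\Theta((1-\alpha)^2)$, and Paley--Zygmund yields $\P[Z\ge1]\ge\Exp[Z]^2/\Exp[Z^2]=\Omega((1-\alpha)^2)$.

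Finally I would close the argument: if $Z\ge1$ \emph{and} $\pi^*$ is itself a maximizer of $L$ over $\{\rho\sim\pi_0\}$, then $\pi^*$ and $\pi^*\circ(i\,j)$ are two distinct maximizers and $\mathcal A$ holds; thus $\P[\mathcal A]\ge\P[Z\ge1]-\P[\pi^*\text{ is not a maximizer}]$. The last probability is at most $\P[\exists\,\gamma\ne\mathrm{id}\text{ supported on }[n]\setminus I_0:\ L(\pi^*\circ\gamma)>L(\pi^*)]$, and a routine first-moment estimate over transpositions (and, with more bookkeeping, over longer cycles) shows it is $n^{-\Omega(1)}$, hence $o((1-\alpha)^2)$ in the regime of interest, certainly when $\alpha$ is bounded away from $1$. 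Combining the pieces, $\P[\mathcal A]=\Omega((1-\alpha)^2)$, so $\P[\hat\pi\ne\pi^*]\ge\frac12\P[\mathcal A]=\Omega((1-\alpha)^2)$. I expect the main obstacle to be precisely this last step: a confusable pair is \emph{not} in general a graph automorphism of $G_1^*$, so one cannot exhibit a likelihood-preserving relabeling and run a clean two-point/involution argument (the observation itself would change under the swap); instead one must separately verify that $\pi^*$ remains likelihood-optimal, i.e. that at the threshold the maximum likelihood is lost only to ties. The two nontrivial ingredients are therefore the moment computations for $Z$ and this optimality estimate for $\pi^*$; the rest is bookkeeping.
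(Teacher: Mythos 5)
Your overall strategy is the same as the paper's (reduce to failure of the MLE/MAP rule and exhibit, with probability $\Omega((1-\alpha)^2)$, an unseeded local structure that makes the truth non-identifiable), and your first/second-moment computation for ``confusable'' pairs is essentially sound: the per-vertex failure probability $2ps^2+O(p^2)$, the resulting $\Exp[Z]=\Theta((1-\alpha)^2)$, and the Paley--Zygmund step all check out. But the proposal has a genuine gap exactly where you flagged it, and it is not a bookkeeping issue. Because your confusable pairs only certify an \emph{exact tie} $L(\pi^*\circ(i\,j))=L(\pi^*)$, you must additionally prove that $\pi^*$ is itself a likelihood maximizer over seed-consistent permutations, with failure probability $o(1)$, \emph{in the regime} $nps^2=\log n+O(1)$. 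That is an achievability-type statement at (or below) the information-theoretic threshold: the cited results (Cullina--Kiyavash) only give that the MLE equals $\pi^*$ w.h.p.\ when $nps^2-\log n\to+\infty$, and what you need is the finer claim that at the threshold the MLE loses only to ties. This cannot be obtained by ``a routine first-moment estimate over transpositions'': a permutation of larger support can strictly improve the likelihood even when no transposition does, so one must union-bound over all permutations (handling cycle structure and correlated edge events), which is precisely the hard part of MLE analyses; moreover, under the one-sided reading of $nps^2-\log n=O(1)$ the number of isolated vertices of $G_1^*\wedge G_2$ can be large and strict improvements are no longer negligible, so the claimed $n^{-\Omega(1)}$ bound is unsupported and possibly false on part of the claimed range.

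The paper's proof avoids this obstacle entirely by choosing a different confusing structure: two \emph{isolated, unseeded} vertices of the intersection graph $G_1^*\wedge G_2$. Since the truth scores zero common edges on the rows of such vertices, \emph{every} permutation that fixes the seeded and non-isolated vertices and rearranges the isolated unseeded ones has likelihood \emph{at least as large} as $\pi^*$; hence no optimality of $\pi^*$ is needed — either some alternative strictly beats $\pi^*$ (MLE errs) or there are $\ge 2$ tied maximizers (MLE errs with conditional probability at least $1/2$). The probability $\Omega((1-\alpha)^2)$ of two unseeded isolated vertices also comes for free from the classical Poisson limit for the number of isolated vertices in $\calG(n,ps^2)$, replacing your second-moment argument. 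If you want to salvage your route, the fix is the same idea: work with alternatives that are guaranteed to be at least as good as the truth (isolated vertices of $G_1^*\wedge G_2$), rather than exactly tied alternatives plus a separate optimality estimate for $\pi^*$.
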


Also, the condition~\prettyref{eq:seed_limit}
requires that the size of the seed set is $n^{1/2+3\epsilon}$ compared to the best previously known results that 
required the seed set to be almost linear in $n$. 

It is natural to ask if $n^{1/2}$ seeded nodes are required for polynomial time algorithm. 
While from the proof of~\prettyref{thm:main}, it might look that $n^{1/2}$ is optimal due to the birthday paradox effect, it turns out we can do better! 

The following result relaxes the size of seed set needed to 
$n^{3\epsilon}$.
\begin{theorem}\label{thm:sparse_improved}
Suppose $np \le n^{\epsilon}$ for a fixed constant $\epsilon< 1/6$ and $s=\Theta(1)$. Assume 
\begin{align}
nps^2 - \log n & \to + \infty \label{eq:it_limit} \\
\alpha & \ge n^{-1+3\epsilon}. \label{eq:seed_limit}
\end{align} 
Then there exists a polynomial-time algorithm, namely \prettyref{alg:graph_matching_ell_hop_witness_sparse}, which outputs $\hat{\pi}=\pi^*$ with probability at least $1-o(1)$ under the seeded $\calG(n,p;s,\alpha)$ model.
\end{theorem}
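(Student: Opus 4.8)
The plan is to analyze a rule that declares $\hat\pi(i)=a$ (for $i\in V(G_2)$, $a\in V(G_1)$) when $i$ and $a$ share many \emph{seed witnesses}. Recall a seed is a pair $(u,\pi_0(u))=(u,\pi^*(u))$ known to the algorithm; call it a common witness of $(i,a)$ if $u$ lies in a large neighborhood of $i$ in $G_2$, $\pi_0(u)$ lies in the corresponding neighborhood of $a$ in $G_1$, and the two exploration depths agree up to an additive constant. Concretely, fix a radius $\ell=\ell(n)$ — a constant when $np\ge n^{c}$ for fixed $c>0$, and $\ell=\Theta(\log n/\log(nps^2))$ in general — chosen so that the $\ell$-hop ball in $G_1$ or $G_2$ around any vertex has size $\beta=n^{1-3\epsilon+o(1)}=o(n)$ with $\beta\alpha\ge\log^2 n$ (one may instead truncate breadth-first search at exactly $\beta$ vertices; the controlled growth uses $np\le n^\epsilon$, so each extra layer multiplies the ball by at most $n^\epsilon$). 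Set
\[
W(i,a)=\#\{u\in I_0\cap B_{G_2}(i):\ \pi_0(u)\in B_{G_1}(a),\ |d_{G_2}(i,u)-d_{G_1}(a,\pi_0(u))|\le 1\},
\]
and let \prettyref{alg:graph_matching_ell_hop_witness_sparse} output, for each $i$, an $a$ maximizing $W(i,a)$, followed by a refinement step and a final clean-up pass described below.

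\emph{Signal.} Identify the shared vertex labels of $G_1^*$ and $G_2$. When $a=\pi^*(i)$, every seed in the radius-$\ell$ ball of $i$ in $G_1^*\wedge G_2\sim\calG(n,ps^2)$ is counted, since its depths from $i$ in $G_2$ and in $G_1^*$ are both within $O(1)$ of its $G_0$-distance. Because $nps^2-\log n\to\infty$, this graph has minimum degree $\ge2$ w.h.p.\ and every neighborhood grows geometrically until it is $o(n)$, so every such ball has size $\Theta(\beta)$ with probability $1-n^{-\omega(1)}$ — a routine \ER exploration estimate, though it must be run where the ball is far from a tree. Conditioning on $(G_0,G_1^*,G_2)$, the seed set is an independent $\alpha$-thinning, so $W(i,\pi^*(i))$ stochastically dominates $\Binom(\Theta(\beta),\alpha)$; since $\beta\alpha\ge\log^2 n$, Chernoff and a union bound over $i$ give $W(i,\pi^*(i))\ge\tfrac12\beta\alpha$ for all $i$ w.h.p.

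\emph{Noise.} Conditioning on $(G_0,G_1^*,G_2)$ (not on $\pi^*$ or $I_0$) and writing $b=(\pi^*)^{-1}(a)$, testing $a\ne\pi^*(i)$ means testing $b\ne i$. If $d_{G_0}(i,b)>2\ell+O(1)$ then $B_{G_2}(i)$ and $B_{G_1^*}(b)$ are disjoint and $W(i,a)=0$. For $b$ within distance $2\ell+O(1)$ but at $G_0$-distance $\ge2$, the depth filter does the work: on the locally tree-like neighborhood of $i$, any seed $u$ with $|d_{G_2}(i,u)-d_{G_1^*}(b,u)|\le1$ must attach to the $i$--$b$ geodesic near its midpoint, hence lies in one of the $O(1)$ subtrees hanging off that midpoint — an $O(1/(np))$-fraction of $B_{G_2}(i)$. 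So $W(i,a)$ is stochastically dominated by $\Binom(O(\beta/(np)),\alpha)$ with mean $o(\beta\alpha)$, and a Chernoff/Poisson tail beating a union bound over all $\le n^2$ pairs yields $W(i,a)<\tfrac12\beta\alpha$ w.h.p. The constraint $\epsilon<1/6$ places us in the hard regime — the number of seeds is $n\alpha=n^{3\epsilon}=o(\sqrt n)$, below the birthday threshold exploited in \prettyref{thm:main} — and it is also what leaves room in the exponent: $\beta$ must be large enough that $\beta\alpha\gg\log n$ (forcing $\beta\gg n^{1-3\epsilon}$) yet still $n^{1-\Omega(1)}=o(n)$.

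\emph{The main obstacle, and finishing.} The above breaks for candidates $b$ at $G_0$-distance exactly $1$ from $i$: then $d_{G_1^*}(b,u)=d_{G_2}(i,u)\pm O(1)$ for essentially every seed $u$, so the depth filter does not thin the count, and $W(i,\pi^*(b))$ stays of order $\beta\alpha$ — the same as the signal. Thus the crude rule only localizes $\pi^*(i)$ to the cluster of images of the $O(np)$ graph-neighbors of $i$, and the crux of \prettyref{thm:sparse_improved} is to break this local tie — to distinguish a vertex from its own neighbors. I expect the resolution to exploit that on the neighborhood of the cluster, the geodesics from a neighbor's image $\pi^*(b)$ to distant seeds all re-enter through $\pi^*(i)$, so $\pi^*(i)$ is the unique member of the cluster through which few such geodesics pass; alternatively, a short round of local consistency propagation that uses the (now $\Theta(n)$-many) tentatively matched pairs as additional witnesses — presumably the meaning of ``witness'' in the algorithm's name. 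Once the exact match is known for all but $o(n)$ vertices, a final pass of the witness rule with the matched set playing the role of $I_0$ — effectively $\alpha=\Theta(1)$ — corrects the remainder and drives the overall failure probability to $o(1)$.
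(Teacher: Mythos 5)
You have correctly isolated the crux of the theorem—distinguishing a vertex from its own graph-neighbors when only $n^{3\epsilon}$ seeds are available—but your proposal stops exactly there: the paragraph beginning ``I expect the resolution to exploit\ldots'' is speculation, not a proof, and without it no correct partial matching is ever produced, so the final ``clean-up pass with the matched set playing the role of $I_0$'' has nothing to bootstrap from (the crude rule only localizes $\pi^*(i)$ to a cluster of $O(np)$ candidates, which is not a matching). This is a genuine gap, and it is precisely the point where the paper's argument does something different. In \prettyref{alg:graph_matching_ell_hop_witness_sparse} the test statistic for a candidate pair $(u,v)$ is not a seed count in the balls around $u$ and $v$ (with or without a depth filter); it is the count $Z_{u,v}$ in \prettyref{eq:Z_sparse} of \emph{pairs of their neighbors} $i\in\Gamma_1^{G_1}(u)$, $j\in\Gamma_1^{G_2}(v)$ whose $\ell$-neighborhoods, computed in the graphs with $u$, $v$ (and a worst-case extra vertex $x$, $y$, via the minimum in \prettyref{eq:def_w_sparse}) deleted, share at least $\eta$ seeds. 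Deleting $u$ and $v$ is what kills the adjacency tie: even if $u$ and $v$ are neighbors, once both are removed their remaining neighbors $i\neq j$ are typically far apart, so only $O(1)$ neighbor-pairs can have $\eta$ common seeds (\prettyref{lmm:noise_sparse}), whereas for the true pair every common neighbor contributes, giving $Z_{u,u}\ge d_u-1\approx\tau\gg\log n/\log\log n$ (\prettyref{lmm:signal_sparse}). The noise bound itself is not a routine union bound: the deletion (and the minimum over $x,y$) makes the indicators $b_{ij}$ measurable with respect to $G\setminus\{u,v\}$ and hence independent of the edges incident to $u$ and $v$, so $Z_{u,v}$ is dominated by a degree-2 polynomial in those edge variables, controlled by the Kim--Vu type concentration inequality \cite[Corollary 4.9]{Vu2002} together with \prettyref{lmm:neighbor_intersect_small}. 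None of your sketched fixes (geodesic re-entry counts, consistency propagation) supplies this mechanism.

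Two secondary problems would also need repair even for the part you do argue. First, under the sole assumption $nps^2-\log n\to+\infty$ the intersection graph can have vertices of degree $1$ (minimum degree $\ge 2$ requires $nps^2\ge\log n+\log\log n+\omega(1)$), so your claim that every $\ell$-ball has size $\Theta(\beta)$ with a uniform threshold $\tfrac12\beta\alpha$ fails for low-degree vertices; the paper handles these separately, matching only high-degree vertices (degree $\ge\tau$ in $G_1^*\wedge G_2$) by the witness statistic and then matching low-degree vertices through an already-matched high-degree neighbor, using \prettyref{lmm:graph_properties} and \prettyref{lmm:low_degree_path}. Second, your noise bound for non-adjacent wrong pairs leans on an unproven uniform ``locally tree-like'' statement about where depth-compatible seeds can sit, and on correlations between ball sizes and the candidate pair that are not controlled by conditioning on the graphs alone; the paper avoids this entirely by the neighbor-pair construction above.
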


We next consider the dense graph regime, where we assume the average degree $np$ is parameterized as: 
\begin{align}
np = b n^{a} \label{eq:dense_regime}
\end{align} 
for some fixed constants $a, b \in (0,1]$. 
Let 
\begin{align}
d = \left \lfloor \frac{1}{a} \right\rfloor +1, \label{eq:def_d}
\end{align}

\begin{theorem}\label{thm:dense}
Consider the dense graph regime \prettyref{eq:dense_regime}.
Assume
\begin{align}
b \le \frac{s}{16(2-s)^2},  \label{eq:assumption_b}
\end{align}
and 
\begin{align}
\alpha \ge  \frac{ 300 \log n}{(nps^2)^{d-1}}, \label{eq:assumption_alpha_dense}
\end{align}
where $d$ is given in \prettyref{eq:def_d}. 
Then there exists an polynomial-time algorithm, namely \prettyref{alg:graph_matching_ell_hop_witness}, which 
outputs $\hat{\pi} =\pi^*$ with probability $1-4n^{-1}$
under the seeded $\calG(n,p;s,\alpha)$ model.
\end{theorem}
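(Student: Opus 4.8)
The plan is to match $u\in V(G_1)$ with $v\in V(G_2)$ precisely when a large number of seeds fall into the intersection of their $\ell$-hop neighborhoods, where $\ell:=d-1=\lfloor 1/a\rfloor$ is the largest radius for which the ball $B_{G_0}(v,\ell)$ in the parent graph still has at most $\Theta((np)^\ell)\le n$ vertices (beyond $\ell$ hops the ball saturates). Writing $B_H(x,r):=\{y:\dist_H(x,y)\le r\}$, \prettyref{alg:graph_matching_ell_hop_witness} computes, for every pair $(u,v)$, the witness count
\begin{align}
W(u,v):=\Bigl|\bigl\{\, i\in I_0 \ :\ \dist_{G_1}\bigl(u,\pi_0(i)\bigr)\le\ell \ \text{ and } \ \dist_{G_2}(v,i)\le\ell \,\bigr\}\Bigr|,
\end{align}
which is a function of $(G_1,G_2,\pi_0)$, and outputs the relation $\{(u,v):W(u,v)\ge\tau\}$ for a threshold $\tau$ that is a suitable constant multiple of $\log n$, which we will show is the graph of $\pi^*$. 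Since a seed $i$ carries $G_1$-label $\pi_0(i)=\pi^*(i)$ and $G_2$-label $i$, for the true pair $(u,v)=(\pi^*(v),v)$ the first condition reads $\dist_{G_1^*}(v,i)\le\ell$, so $W(\pi^*(v),v)$ counts seeds within $\ell$ hops of $v$ in \emph{both} subsamples, whereas for a false pair $(\pi^*(w),v)$ with $w\ne v$ it counts seeds within $\ell$ hops of $w$ in $G_1^*$ and of $v$ in $G_2$. The entire proof consists of separating these two counts by a factor that beats the logarithmic slack in \prettyref{eq:assumption_alpha_dense}.

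I would first establish, using only the randomness of $G_0$ and the hypothesis $b\le s/(16(2-s)^2)$, the structural facts that make the $2\ell$-neighborhoods of $G_0$ essentially tree-like: with probability $1-O(n^{-1})$, (i) for every $v$ the sizes $|B_{G_0}(v,\ell)|$ and $|S_{G_0}(v,\ell)|$ (the sphere at distance exactly $\ell$) are $\Theta((np)^\ell)$, and every vertex at distance $\ell$ from $v$ is reached by $O(1)$ geodesics; and (ii), crucially, for every pair $(v,w)$ with $1\le\dist_{G_0}(v,w)\le 2\ell$, $\;|B_{G_0}(v,\ell)\cap B_{G_0}(w,\ell)|\le C_1\,b^{\ell}(np)^\ell$ for a small constant $C_1$, while for $\dist_{G_0}(v,w)>2\ell$ the two balls are disjoint. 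Fact (ii) is the heart of the matter: a boundary vertex $i$ of $B_{G_0}(v,\ell)$ lies in $B_{G_0}(w,\ell)$ only if some neighbor of $w$ falls into $B_{G_0}(i,\ell-1)$, which happens with probability $\approx np\cdot(np)^{\ell-1}/n=(np)^\ell/n=O(b^\ell)$; the point of taking $b$ small is exactly that this overlap fraction stays well below $1$ in the borderline case $a=1/k$, where $(np)^\ell=\Theta(n)$. Both facts follow from first/second-moment bounds on the numbers of short paths and short cycles through each vertex, with the $b$-condition supplying geometric decay of the error terms, and then a union bound over the $\le n^2$ relevant pairs; the factor $(2-s)^2$ arises because the relevant branching process is that of $G_0$-paths competing against the $G_1\cup G_2$-neighborhood, whose expected degree is $np\,s(2-s)$.

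Conditioned on a typical $G_0$, the two moment estimates are then routine. For a true pair, $W(\pi^*(v),v)=\sum_{i\in I_0\cap B_{G_0}(v,\ell)}\1\{i\in B_{G_1^*}(v,\ell)\}\,\1\{i\in B_{G_2}(v,\ell)\}$, and since any fixed geodesic survives a subsample with probability $s^\ell$ we get $\Exp[W(\pi^*(v),v)]\ge c_2\,\alpha(np)^\ell s^{2\ell}=c_2\,\alpha(nps^2)^{\ell}\ge 300c_2\log n$ by \prettyref{eq:assumption_alpha_dense}; grouping the seeds according to which of $v$'s $\Theta(np)$ (nearly) disjoint pendant subtrees contains them turns this into a sum of independent, bounded contributions, so a Chernoff/Bernstein bound yields $W(\pi^*(v),v)\ge\tau$ with probability $1-n^{-\Omega(1)}$, and we union over the $n$ choices of $v$. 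For a false pair $(\pi^*(w),v)$: if $\dist_{G_0}(w,v)>2\ell$ then $W=0$ deterministically, and otherwise fact (ii) gives $\Exp[W(\pi^*(w),v)]\le \alpha\sum_{i\in I_0\cap B_{G_0}(v,\ell)\cap B_{G_0}(w,\ell)}\Theta(s^{2\ell})\le C_1 b^\ell\cdot\Theta(\alpha(nps^2)^\ell)$, which by the $b$-condition is at most $\tau/2$; a Chernoff bound then gives $W(\pi^*(w),v)<\tau$ with probability $1-n^{-\Omega(1)}$, and we union over all $n^2$ pairs $(u,v)$. On the intersection of these $O(1)$ good events — whose total failure probability can be tuned to be at most $4n^{-1}$ — for every $u$ the only $v$ with $W(u,v)\ge\tau$ is $v=\pi^{*-1}(u)$, so the output relation is the graph of $\pi^*$. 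Computing all the balls and intersecting seed sets is clearly polynomial.

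The main obstacle is the structural step, in particular the uniform overlap bound (ii) and the concentration used for true pairs: both require that \emph{all} $\Theta(n)$ radius-$2\ell$ neighborhoods simultaneously behave like trees up to lower-order corrections, and the whole theorem hinges on choosing the constant in $b\le s/(16(2-s)^2)$ so that the overlap fraction $O(b^\ell)$ and the correlation corrections in the Chernoff step stay below the constant $300$ appearing in \prettyref{eq:assumption_alpha_dense}. Pushing the \emph{explicit} constants through the union bounds over $\le n^2$ pairs — rather than the asymptotics sketched above — is where the real work lies.
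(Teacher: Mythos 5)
Your true-pair/false-pair separation is the right shape, but the false-pair step contains a genuine gap. You bound the overlap of balls in the parent graph, $|B_{G_0}(v,\ell)\cap B_{G_0}(w,\ell)|\le C_1 b^{\ell}(np)^{\ell}$, and then charge each seed in this overlap a survival factor ``$\Theta(s^{2\ell})$''. That factor is not a valid per-seed upper bound: a seed at $G_0$-distance $j<\ell$ from $w$ (e.g.\ when $v\sim w$ in $G_0$, all of $B_{G_0}(v,\ell-1)$ lies in the overlap) is within distance $\ell$ of $w$ in $G_1^*$ with probability of order $s^{j}\gg s^{\ell}$, and even for seeds at distance exactly $\ell$ the probability is (number of $G_0$-paths)$\times s^{\ell}$, and the path multiplicity is not $O(1)$ uniformly over all $n^2$ pairs (for $a=1/2$, $\ell=2$ the maximum number of common neighbors of a pair grows like $\log n/\log\log n$). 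If you retreat to the safe bound $\Exp[W(\mathrm{false})]\le\alpha\,C_1 b^{\ell}(np)^{\ell}$, the comparison with the true-pair signal $\asymp\alpha (nps^2)^{\ell}=\alpha s^{2\ell}(np)^{\ell}$ needs $b\lesssim s^{2}$, which does \emph{not} follow from \prettyref{eq:assumption_b} when $s$ is a small constant (the hypothesis only gives $b(2-s)^2\le s/16$, and for $16s(2-s)^2<1$ it even permits $b>s^2$); recovering the $s^{2\ell}$ factor honestly requires a distance-stratified path-survival computation with multiplicity control holding simultaneously for all pairs, which your structural fact (ii) does not provide. A second missing piece: even granting the expectation bound, you need $W(\mathrm{false})$ below threshold for all $\sim n^2$ pairs, i.e.\ tails of size $o(n^{-2})$, but conditional on $G_0$ the distance indicators of different seeds share subsampled edges and are strongly correlated, so ``a Chernoff bound'' does not apply; the same correlation issue undercuts your true-pair step, where the pendant subtrees are neither disjoint nor do they give bounded, independent increments (each can carry $\Theta(\alpha(np)^{\ell-1})$ seeds). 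Finally, a fixed cutoff $\tau\asymp\log n$ only separates at the boundary of \prettyref{eq:assumption_alpha_dense}; for larger $\alpha$ the false counts themselves exceed $\Theta(\log n)$, so the threshold must scale like $\alpha(nps^2)^{d-1}$.

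The paper avoids all of this by never conditioning on $G_0$. For false pairs it notes that every witness lies in $N^{G_1^*\vee G_2}_{d-1}(u)\cap N^{G_1^*\vee G_2}_{d-1}(v)$, and the union graph is marginally $\calG(n,ps(2-s))$, so \prettyref{lmm:bollobas_neighbor_expansion}(ii) bounds the overlap by $8n^{2d-3}(ps(2-s))^{2d-2}$; since $(np)^{d-1}\le b^{d-1}n$, condition \prettyref{eq:assumption_b}, i.e.\ $(b(2-s)^2)^{d-1}\le (s/16)^{d-1}$, is exactly what makes this at most $\tfrac12(nps^2)^{d-1}$ --- the density factor $s^2(2-s)^2$ of the union graph is precisely the ingredient your $G_0$-based overlap bound discards. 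Given that deterministic bound (event $\calB$), the false-pair witness count depends only on the independent seed Bernoullis, so a plain Binomial tail (event $\calD$) suffices; symmetrically, true pairs are handled through the intersection graph $G_1^*\wedge G_2\sim\calG(n,ps^2)$ via \prettyref{lmm:bollobas_neighbor_expansion}(i) and a Binomial tail (events $\calA,\calC$), with no subsampling-randomness concentration needed anywhere. To repair your route you would have to carry out the stratified expectation bound and then a polynomial-concentration or martingale argument over the correlated subsample randomness; the union/intersection-graph reduction is the idea your proposal is missing.
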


Our results for seeded graph matching also imply the results for graph matching without seeds. 
\begin{theorem}\label{thm:seedless_graph_matching}
Suppose a Seeded Graph Matching algorithm 
outputs $\hat{\pi}=\pi^*$ with high probability under the seeded graph matching model $\calG(n,p;s,\alpha)$. Assume 
$
nps^2 -\log n \to +\infty
$
and $\alpha n \to +\infty$. 
Then there exists an algorithm, namely \prettyref{alg:graph_matching_seedless}, which calls the Seeded Graph Matching algorithm $n^{O(\alpha n) }$ times and outputs 
$\hat{\pi}=\pi^*$ under the seedless model $\calG(n,p;s)$ with high probability. 
\end{theorem}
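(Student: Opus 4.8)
\emph{Proof plan.} The plan is to carry out the classical ``beacon set'' reduction: simulate the revealed seeds by \emph{guessing} a partial matching on a random set of vertices, run the given Seeded Graph Matching algorithm once for every guess, and return whichever output best aligns $G_1$ and $G_2$. One guess will be the true restriction of $\pi^*$ --- on which the seeded algorithm succeeds --- and the correct output among the (exponentially many) candidates can be singled out using the information-theoretic optimality of the maximum-likelihood estimator in the regime $nps^2 - \log n \to +\infty$.

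\emph{The algorithm} (\prettyref{alg:graph_matching_seedless}). Given $G_1, G_2$, first draw $I_0 \subseteq [n]$ by including each vertex independently with probability $\alpha$ --- exactly the rule by which the seed set is generated in $\calG(n,p;s,\alpha)$. If $I_0 = \emptyset$ or $|I_0| > 2\alpha n$, output an arbitrary permutation and stop. Otherwise enumerate all injections $\sigma : I_0 \to [n]$, of which there are at most $n^{|I_0|} \le n^{2\alpha n}$. For each $\sigma$, run the Seeded Graph Matching algorithm on $(G_1, G_2)$ with seed data $\pi_0^{\sigma}$ equal to $\sigma$ on $I_0$ and ``$?$'' elsewhere, obtaining a candidate $\hat\pi_{\sigma}$. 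Finally, output a candidate $\hat\pi_{\sigma}$ minimizing the QAP cost $\| G_1 - \Pi_{\hat\pi_{\sigma}} G_2 \Pi_{\hat\pi_{\sigma}}^\top\|_F^2$ of \prettyref{eq:QAP} (equivalently, maximizing the number of edges shared by $G_1$ and $G_2$ under $\hat\pi_{\sigma}$), breaking ties arbitrarily.

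\emph{Why it works.} Let $\calE := \{\, I_0 \ne \emptyset,\ |I_0| \le 2\alpha n\,\}$; since $\alpha n \to +\infty$, Chernoff bounds give $\P[\calE] = 1 - o(1)$, and $\calE$ depends only on the seed-set randomness, independent of $(G_1, G_2, \pi^*)$, so on $\calE$ the enumeration stays within the stated budget. The key point is that for the single guess $\sigma^\star := \pi^*|_{I_0}$ the triple $(G_1, G_2, \pi_0^{\sigma^\star})$ has \emph{exactly} the law of a seeded instance of $\calG(n,p;s,\alpha)$ --- this is why $I_0$ is redrawn with the model's own rule --- hence $\P[\hat\pi_{\sigma^\star} = \pi^*] = 1 - o(1)$ by hypothesis, and this survives conditioning on $\calE$ upon dividing by $\P[\calE]$. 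Separately, under $nps^2 - \log n \to +\infty$ the optimum of \prettyref{eq:QAP}, equivalently the maximum-likelihood estimator, is unique and equals $\pi^*$ with probability $1 - o(1)$: this is the exact-recovery statement recalled in the introduction~\cite{cullina2016improved,cullina2017exact}, which applies whenever $p \le 1/2$ (true in the settings of Theorems~\ref{thm:main}, \ref{thm:sparse_improved}, \ref{thm:dense}; the case $p > 1/2$ would be analogous), and it depends only on $(G_1, G_2, \pi^*)$, so it is independent of $\calE$. On the intersection of these two events --- still of probability $1 - o(1)$ given $\calE$ --- the candidate $\hat\pi_{\sigma^\star} = \pi^*$ appears in the list and is the unique QAP optimum, so the algorithm returns $\pi^*$; unconditioning yields $\P[\hat\pi = \pi^*] \ge \P[\calE]\,(1 - o(1)) = 1 - o(1)$. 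The Seeded Graph Matching algorithm is invoked at most $n^{2\alpha n} = n^{O(\alpha n)}$ times, with $O(n^2)$ overhead per call.

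\emph{Main obstacle.} No step requires new technical machinery; the two delicate points are conceptual. First, one must ensure the lucky guess yields a genuine seeded instance: regenerating $I_0$ with the model's own sampling rule does this exactly (alternatively, fix a set of size $\lfloor \alpha n\rfloor$ and invoke the remark that the main theorems extend to the deterministic-size seed model). Second --- the real crux --- one must certify which of the $n^{\Theta(\alpha n)}$ candidates equals $\pi^*$ without access to $\pi^*$; this comes for free from the known information-theoretic optimality of the QAP/ML estimator when $nps^2 - \log n \to +\infty$, so that maximizing the edge overlap over the candidate list suffices. (To bypass the global MLE, a single candidate could instead be certified directly by testing whether its edge overlap exceeds a threshold of order $n^2 ps^2$, with $p, s$ estimated from $|E(G_1)|$ and $|E(G_2)|$; but the MLE route needs no new concentration estimates.)
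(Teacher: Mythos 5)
Your proposal is correct and follows essentially the same route as the paper: guess the seed labels by enumerating all mappings on a randomly sampled $I_0$ (so that the lucky guess $\pi^*|_{I_0}$ reproduces a genuine seeded instance on which the given algorithm succeeds), select the candidate minimizing the QAP objective, and certify it via the exact-recovery result of \cite{cullina2017exact}, with $|I_0|\le 2\alpha n$ whp bounding the number of calls by $n^{O(\alpha n)}$. Your added remarks on uniqueness of the QAP optimizer and on the independence of the seed-set event from $(G_1,G_2,\pi^*)$ only make explicit details the paper leaves implicit.
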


\begin{remark}
Consider the dense regime 
\prettyref{eq:dense_regime} with $a =1/k$ for an integer $k \ge 1$. 
Then $d=k+1$ and 
$(np)^{d-1} = b^{k} n$. Hence, as shown by \prettyref{thm:dense}, 
$\alpha n \ge 300 \log n (bs^2)^{-k}$,
or equivalently $\Omega(\log n)$ number of 
seeds, suffice for exact recovery in polynomial-time.  Since we can enumerate over
all possible matchings for $\log n$ seeds in 
quasi-polynomial $ n^{ O( \log n) }$ time, this implies a quasi-polynomial 
time matching algorithm even without seeds, as shown by \prettyref{thm:seedless_graph_matching}. 
The previous work~\cite{barak2018nearly} gives a quasi-polynomial time
matching algorithm in the range 
$$ 
np \in \left[ n^{o(1)}, n^{1/153}] \cup [n^{2/3}, n^{1-\epsilon} \right].
$$
Our results complement their results by filling in gaps in the above
range with points $np \in \{ b n^{1/k}: 1 \le k \le 152 \}$. 
\end{remark}

\subsection{Key Algorithmic Ideas and Analysis Techniques}
Most previous work~\cite{Pedarsani2011,yartseva2013performance,korula2014efficient,Lyzinski2013Seeded,Fishkind2018Seeded,Shirani2017Seeded} on seeded graph matching exploits the seeded information by looking at the number of seeded vertices that are \emph{direct} neighbors of a given vertex. Since the average degree of a vertex is $np$, $np \alpha \gg 1$ is needed so that there are sufficiently many seeded vertices that are direct neighbors of a given vertex.

Our idea is to explore much bigger (``global'') neighborhoods of a given vertex up to radius $\ell$ for a suitably chosen $\ell$, and match two vertices 
by comparing the set of seeded vertices 
in their $\ell$-th local neighborhoods. 
This idea was used before in the noiseless and seedless case, in~\cite{bollobas1982distinguishing,czajka2008improved} but to the best of our knowledge was not used
in the noisy and seeded case. 
Since we are looking at global neighborhoods, we can only perform very simple tests. Indeed, the test we perform to check if two vertices are matched is just to count how many seeded vertices do the two neighborhoods have in common. Thus, our algorithms are very simple. 

The main challenge in the analysis is to control the size of neighborhoods of the coupled graphs $G_0, G_1$ and $G_2$. In this regard, we draw on a number of tools from the literature on studying subgraph counts~\cite{Janson1997Random} and the diameter in random graphs~\cite{bollobas1998random}.
See \prettyref{app:neighborhood_expansion} for details. 


\section{Our Algorithms}




Before presenting our algorithms, we first explain why \prettyref{eq:it_limit} is needed for graph matching
under $\calG(n,p,s)$. Denote the intersection graph 
and the union graph by $G_1^* \wedge G_2$ and $G_1^* \vee G_2.$
Then 
$$
G_1^* \wedge G_2 \sim \calG(n, ps^2) \quad \text{and} \quad  
G_1^* \vee G_2 \sim \calG(n, ps(2-s)).
$$
Notice that  $G_1^* \wedge G_2$ contains the statistical signature
for matching vertices, as a subgraph in $G_1^* \wedge G_2$ will appear
in both $G_1$ and $G_2.$ If $nps^2 -\log n =O(1)$, then classical random
graph theory implies that with high probability, 
$G_1^* \wedge G_2$ contains isolated vertices. The underlying true vertex correspondence of these isolated vertices cannot be correctly matched; hence the impossibility of exact recovery. See \prettyref{app:converse} for a precise argument.  

In contrast, if $nps^2 -\log n \to +\infty$, then $G_1^* \wedge G_2$ is connected
with high probability. Moreover, for a high-degree vertex $i$ in $G_1^* \wedge G_2$, its local neighbhorhood  grows like a branching process.
In particular, the number of
vertices at distance $\ell$ from $i$ is approximately $(nps^2)^\ell$.
Furthermore, for a pair of two vertices $i,j$ 
chosen at random in $G_1^* \vee G_2$, 
the intersection of the local neighborhoods of $i$ and $j$ is
typically of size $O \left( (nps)^{2\ell} n^{-1} \right)$. Therefore, if $(nps^2)^\ell \gg (nps)^{2\ell} n^{-1}$ and $\alpha (nps^2)^\ell \gg 1$, a large number of vertices can be distinguished with high probability based on the set of seeded vertices in their $\ell$-th local neighborhoods. This is the key idea underlying our algorithms.

We shall use the following notations of local neighborhoods. For a given graph $G$, we denote by $\Gamma^G_k(u)$ the set of vertices at distance
$k$ from $v$ in $G$:
\begin{align}
\Gamma^G_k (u) = \{ v \in V(G): d(u,v) = k \}
\label{eq:def_Gamma_neighbor}
\end{align}
and write $N^G_k(u)$ for the set of vertices within distance $k$ from $u$:
\begin{align}
N^G_k (u) = \cup_{i=0}^k \Gamma_i (u).
\label{eq:def_N_neighbor}
\end{align}
When the context is clear, we abbreviate 
$\Gamma^G_k (u)$ and $N^G_k (u)$ as $\Gamma_k(u)$ 
and $N_k(u)$ for simplicity.

\subsection{A Simple Algorithm in Sparse Graph Regime}
We first present a simple seeded graph matching algorithm which succeeds up to the information-theoretic limit in terms of graph sparsity when the initial seed set is of size $n^{1/2+3\epsilon}$. 

The idea of the algorithm is based on matching $\ell$-th local neighborhoods of two vertices by finding independent paths (vertex-disjoint except for the starting vertex) to seeded vertices. The $\ell$ is chosen such that $(np)^\ell \approx n^{1/2-\epsilon}$. In this setting, we 
expect that if $i$ in $G_1$ and $j$ in $G_2$ are true matches, then their local neighborhoods intersect a lot; 
if $i$ and $j$ are wrong matches, then their local
neighborhoods barely intersect. Hence, if $\alpha (nps^2)^\ell \gg 1$,
then we can find a sufficiently large number of, say $m$, 
independent (vertex-disjoint except for $i$) paths of length 
$\ell$ from $i$ to $m$ seeded vertices in $\Gamma^{G_1^* \wedge G_2}_\ell (i)$. 
Such $m$ paths of length $\ell$
form a starlike tree $T$ in $G_1^* \wedge G_2$ with root vertex $i$ and 
a set of $m$ seeded leaves, denoted by $L$ (See 
 \prettyref{fig:STexample} for an example of $m=3$ and $\ell=2$). 
Note that $T$ will appear in $G_2$ with root vertex $i$
and the set of seeded leaves $L$; it will also appear in 
$G_1$ with root vertex $\pi^*(i)$ and 
the corresponding set of seeded leaves $\pi^*(L)$.  
However, since the $\ell$-th local neighborhoods of two distinct vertices barely intersect,  
$T$ will \emph{not} appear in $G_1^* \vee G_2$ with a
root vertex other than $i$. Therefore, we can correctly match the vertex $\pi^*(i)$ in $G_1$ with the 
high-degree vertex $i$ in $G_2$
by finding such a starlike tree $T$, or equivalently $m$ independent $\ell$-paths  to
a set of $m$ common seeded vertices.

\begin{algorithm}
\caption{Graph matching based on counting independent $\ell$-paths to seeded vertices
}\label{alg:graph_matching_ell_hop}
\begin{algorithmic}[1]
\STATE {\bfseries Input:} $G_1$, $G_2$, $\pi_0$, $m, \ell \in \integers$ 
\STATE {\bfseries Output:}  $\hat{\pi}$.
\STATE {\bfseries Match high-degree vertices:} 
For each pair of unseeded vertices $i_1 \in V(G_1)$ 
and $i_2 \in V(G_2)$, 
if there are $m$ independent $\ell$-paths in $G_2$ from $i_2$ to a set of $m$ seeded vertices $L \subset \Gamma^{G_2}_\ell(i_2)$, 
and there are 
$m$ independent $\ell$-paths in $G_1$ from $i_1$ to the corresponding set of $m$ seeded vertices $\pi_0(L) \subset
\Gamma^{G_1}_\ell(i_1)$, 
then 
set $\hat{\pi}(i_2)=i_1$. 
Declare failure if there is any conflict. 
\STATE {\bfseries Match low-degree vertices:} For every $i_2 \in I_0$, set $\hat{\pi}(i_2)=\pi_0(i_2)$. 
For all the pairs of unmatched vertices $(i_1,i_2)$, if $i_1$ is adjacent to a matched vertex $j_1$ in $G_1$ 
and $i_2$ is adjacent to vertex $\hat{\pi}(j_1)$ in $G_2$, set $\hat{\pi}(i_2)=i_1$. 
Declare failure if there is any conflict. 
\STATE Output $\hat{\pi}$ to be a random permutation when failure is declared or there is any vertex unmatched. 
\end{algorithmic}
\end{algorithm}

There are two tuning parameters $\ell$ and $m$ in
\prettyref{alg:graph_matching_ell_hop}. Later in our analysis, we will optimally choose
\begin{align}
\ell = \left\lfloor  \left( \frac{1}{2} - \epsilon \right) \frac{\log n}{\log (nps^2) } \right\rfloor \ge 1 \label{eq:def_ell_1}
\end{align}
and 
\begin{align}
m = \left \lceil \frac{2}{\epsilon} \right\rceil \label{eq:def_m}.
\end{align}

Note that when $nps^2 -\log n \to +\infty$, there may exist vertices with small degrees. 
In fact, classical random graph results say that the minimum degree of 
$\calG(n,p)$ is $k$ with
high probability for a fixed integer $k$, provided that 
$$ 
(k-1) \log \log n +\omega(1) \le 
nps^2 - \log n  \le  k \log \log n -\omega(1) ,
$$
see, \eg, \cite[Section 4.2]{frieze2015}.
Hence, due to the existence of low-degree vertices, we may not be able to 
match all vertices correctly at one time based on the number of independent
paths to seeded vertices. Our idea is to first match high-degree vertices and then match the remaining low-degree vertices
with the aid of high-degree vertices matched in the first step. In particular, we let 
\begin{align}
\tau = \frac{nps^2}{\log (nps^2) }. \label{eq:def_tau}
\end{align}
We say a vertex $i$ high-degree, if its degree $d_i \ge \tau $ in $G_1^*\wedge G_2$; 
otherwise, we say it is a low-degree vertex. As we will see in \prettyref{sec:sparse_simple}, conditioning  on  that 
$G_1^* \wedge G_2$ and $G_1^* \vee G_2$ satisfy some 
typical graph properties, all low-degree vertices can be
 easily matched correctly given a correct matching of high-degree vertices. 


In passing, we remark on the time complexity of \prettyref{alg:graph_matching_ell_hop}.
Note that for ease of presentation, in \prettyref{alg:graph_matching_ell_hop}, we do not specify  how to efficiently find out  
whether there exist $m$ independent $\ell$-paths in $G_2$ from $i_2$ to seed set $L \subset \Gamma^{G_2}_\ell(i_2)$, and $m$ independent 
$\ell$-paths in $G_1$ from $i_1$ to the corresponding seed set $\pi_0(L) \subset \Gamma^{G_1}_\ell(i_1)$. 
It turns out for a given pair of vertices $i_1, i_2$, this task can be reduced to a maximum flow problem in a directed graph, which can be solved via Ford--Fulkerson algorithm~\cite{ford1956maximal} 
in $O(n\alpha)$ time steps (see \prettyref{app:time_complexity_1} for details). Since there are at most $n^2$ pairs of vertices $i_1, i_2$ to consider, Step 3 of \prettyref{alg:graph_matching_ell_hop}
taks at most $O( n^3 \alpha )$. 
The Step 4 of matching low-degree vertices in \prettyref{alg:graph_matching_ell_hop} takes at most $O(n^3 p)$ time steps. Hence, in total  \prettyref{alg:graph_matching_ell_hop} takes at
most $O\left(n^3 (\alpha+p) \right)$ time steps. 

\subsection{A Simple Algorithm in Dense Graph Regime}

In this subsection, we consider the dense graph regime given in \prettyref{eq:dense_regime},
where $np=bn^{a}$ and $d=\lfloor 1/a\rfloor +1$. In this setting, since 
$p^d n^{d-1} - 2 \log n\to +\infty $ and $p^{d-1}n^{d-2}-2\log n \le -\infty$, it follows from \cite[Corollary 10.12]{bollobas1998random} that $\calG(n,p)$ has diameter $d$ with high probability. Thus, when $s=\Theta(1)$, both $G_1^* \wedge G_2$
and $G_1^* \vee G_2$  have diameter $d$ with high probability. 
Therefore, we present an algorithm based on 
matching the $d-1$-th local neighborhood of 
two vertices. 
%
More specifically, our algorithm matches $i_1 \in V(G_1)$ and vertex $i_2 \in V(G_2)$ based on the number of seeded vertices \emph{within} distance $d-1$ from $i_1$ in $G_1$ and \emph{within} distance $d-1$ from $i_2$ in $G_2$.

\begin{algorithm}
\caption{Graph matching based on $(d-1)$-hop witness in dense regime}\label{alg:graph_matching_ell_hop_witness}
\begin{algorithmic}[1]
\STATE {\bfseries Input:} $G_1$, $G_2$, $\pi_0$, $d \in \integers$. 
\STATE {\bfseries Output:}  $\hat{\pi}$.
\STATE {\bfseries Match all vertices:} For each pair of unseeded vertices $i_1 \in V(G_1)$ and $i_2 \in V(G_2)$, compute
\begin{align}
w_{i_1,i_2} = \left| \left\{ j \in I_0: \pi_0(j) \in N^{G_1}_{d-1} (i_1), \;
j \in N^{G_2}_{d-1} (i_2) \right\} \right|. 
\label{eq:def_witness_w}
\end{align}
Set $\hat{\pi}(i_2) \in \arg \max_{i_1}w_{i_1,i_2}$. Set $\hat{\pi}(i_2)=\pi_0(i_2)$ for each seeded vertex $i_2 \in I_0$. 
Declare failure if there is any conflict.
\end{algorithmic}
\end{algorithm}

\prettyref{alg:graph_matching_ell_hop_witness} runs in polynomial-time. The precise running time depends on the data structures for storing and processing graphs. To be specific, let us assume it takes one time step to fetch the set of direct neighbors of a given vertex. Then
fetching the set $N^{G}_\ell(i)$ of all vertices within distance $\ell$ from a
given vertex $i$ takes 
a total of $O(|N^{G}_\ell(i)|)=O(n)$ time steps. Thus computing $w_{i_1,i_2}$ in
\prettyref{eq:def_witness_w} for a given
pair of vertices $i_1, i_2$ takes at most $O(n)$ time steps. Hence, in total 
\prettyref{alg:graph_matching_ell_hop_witness} takes $O(n^3)$ time steps. One could possibly obtain a better running time via a more careful analysis or a better data structure.

The difference in the analysis compared to the first algorithm is that the $(d-1)$-th local neighborhoods are not tree-like anymore. Instead, we have to analyze the exposure process of the two neighborhoods, for which we use a previous result of \cite[Lemma 10.9]{bollobas1998random}
in studying the diameter of random graphs. 

\subsection{An Improved Algorithm in Sparse Graph Regime}
In the sparse regime where $np$ is poly-logarithmic, \prettyref{alg:graph_matching_ell_hop_witness} does not perform well. 
This is because for two distinct vertices $u,v$ that are close by, 
their $\ell$-th local neighborhoods have a large overlap, \ie, 
$| N^{G}_\ell(u) \cap N^{G}_\ell(v)|$ is not much smaller than 
$| N^G_\ell(u) |$ or $| N^{G}_\ell(v)|$, rendering  $w_{i_1,i_2}$ given in \prettyref{eq:def_witness_w} ineffective to distinguish $u$ from $v$. 

However, in the sparse regime, distinct vertices $u,v$ only have very few common neighbors. Moreover, if we remove vertices $u, v$, 
the non-common neighbors become far apart, 
and for distinct vertices far apart, their local neighborhoods only have a small overlap.   
Therefore, we expect most of $u,v$'s neighbor's $\ell$-th local neighborhoods (after removing vertices $u,v$) 
do not have large
intersections for a suitably chosen $\ell$. This gives rise to~\prettyref{alg:graph_matching_ell_hop_witness_sparse}.

\begin{algorithm}
\caption{Graph matching based on $(d-1)$-hop witness in sparse regime}\label{alg:graph_matching_ell_hop_witness_sparse}
\begin{algorithmic}[1]
\STATE {\bfseries Input:} $G_1$, $G_2$, $\pi_0$, $\ell \in \integers$, $\eta \in \reals_+$.
\STATE {\bfseries Output:}  $\hat{\pi}$.
\STATE {\bfseries Match high-degree vertices:} 
For all the pairs of unseeded vertices $(u,v)$ and for each pair of their neighbors $(i,j)$ with $i \in \Gamma_1^{G_1}(u)$ and $j \in \Gamma_1^{G_2} (v)$, compute
\begin{align}
w^{u,v}_{i,j} = 
\min_{x \in V(G_1), y \in V(G_2) }
\left\{ 
\left| \left\{ k \in I_0: \pi_0(k) \in N^{G_{1} \backslash \{ u, x\} }_{\ell} (i), \;
k \in N^{G_{2} \backslash \{v,y\} }_{\ell} (j) \right\} \right| \right\} , \label{eq:def_w_sparse}
\end{align}
where  $G\backslash S $ denotes $G$ with 
set of vertices $S$ removed. Let 
\begin{align}
Z_{u,v} = \sum_{i \in \Gamma_1^{G_1}(u)} \sum_{j \in \Gamma_1^{G_2} (v) } \indc{ w^{u,v}_{i,j} \ge \eta}.
\label{eq:Z_sparse}
\end{align}
If $ Z_{u,v} \ge \log n/\log \log n-1$, set $\hat{\pi}(v) = u$.
Declare failure if there is any conflict. 
\STATE The remaining two steps are the same as \prettyref{alg:graph_matching_ell_hop}.
\end{algorithmic}
\end{algorithm}

Note that in computing the number of seeded vertices within distance $\ell$ from
both vertex $i$ in $G_1$ and vertex $j$ in $G_2$ in \prettyref{eq:def_w_sparse},
we remove vertices $u, x$ in $G_1$ and
vertices $v,y$ in $G_2$, and take the minimum over all possible choices of $x$ and $y$. As a result,
\begin{align}
w^{u,v}_{i,j} 
\le \left| \left\{ k \in I_0: \pi_0(k) \in N^{G_{1} \backslash \{ u, v\} }_{\ell} (i), \;
k \in N^{G_{2} \backslash \{u,v\} }_{\ell} (j) \right\} \right|, 
\label{eq:def_w_remove}
\end{align}
where the right hand side becomes independent from the 
edges incident to $u$ and $v$ in $G^*_1 \vee G_2$. This independence is crucial in our analysis to ensure that 
$Z_{u,v}$ is small for $u \neq \pi^*(v)$ via concentration inequalities of multivariate polynomials~\cite{Vu2002}. 

There are two tuning parameters $\ell$ and $\eta$ in 
\prettyref{alg:graph_matching_ell_hop_witness_sparse}. 
In our analysis later, we will optimally choose 
\begin{align}
\ell = \left \lfloor \frac{(1-  \epsilon) \log n}{\log (np ) } \right \rfloor, \label{eq:def_ell}
\end{align}
and 
\begin{align}
\eta =  4^{2\ell+2} n^{1-2 \epsilon} \alpha.  \label{eq:def_eta}
\end{align}

As for time complexity, \prettyref{alg:graph_matching_ell_hop_witness_sparse}
takes at most $O(n^{5+2\epsilon})$ time steps. To see this, similar to \prettyref{alg:graph_matching_ell_hop_witness}, if we assume one unit time to fetch a set of direct neighbors of a given vertex, then it takes at most $O(n^3)$ time steps to 
compute \prettyref{eq:def_w_remove} 
for given pairs of vertices $(u,v)$ and $(i,j)$. There are
at most $n^{2+2\epsilon}$ such pairs. 
The step of matching low-degree
vertices as specified in \prettyref{alg:graph_matching_ell_hop}
takes $O(n^3p)$ time steps in total. 
Thus in total \prettyref{alg:graph_matching_ell_hop_witness_sparse} takes at most $O(n^{5+2\epsilon} )$ time steps.

\subsection{Graph Matching without Seeds}
Even without an initial seed set revealed as side information, we can select a random subset of vertices $I_0$ in $G_1$ and enumerate all the possible mappings
$f: I_0 \to [n]$ from $I_0$ to vertices in $G_2$ in at most $n^{|I_0|}$ steps. Each of the possible mappings can be viewed as seeds; thus we can apply our seeded graph matching algorithm. Among all possible $n^{|I_0|}$ mappings, we finally output the best matching which minimizes the edge disagreements. See \prettyref{alg:graph_matching_seedless} for details.

\begin{algorithm}
\caption{Seedless Graph matching via Seeded Graph Matching}\label{alg:graph_matching_seedless}
\begin{algorithmic}[1]
\STATE {\bfseries Input:} $G_1$, $G_2$
\STATE {\bfseries Output:}  $\hat{\pi}$.
\STATE Select a random subset $I_0$ of $V(G_1)$ by including each vertex with probability $\alpha$. 
\STATE For every possible mapping $f: I_0 \to [n]$, run Seeded Graph Matching 
Algorithm with a seed set $I_0$, and output $\pi_f$. 
\STATE Output 
$$
\hat{\pi} \in \arg \min_{\pi_f} \| G_1 - \Pi_f G_2 \Pi^\top \|_F^2,
$$
where $\Pi_f$ is the permutation matrix corresponding to $\pi_f$. 
\end{algorithmic}
\end{algorithm}

Since one of the possible mapping $f$ will correspond to the underlying true matches of vertices in $I_0$, it follows that if our seeded graph matching succeeds with high probability and we are above the information-theoretic limit (so that the true matching minimizes the edge disagreements with high probability), the final output will coincide with the true matching with high probability, as stated in \prettyref{thm:seedless_graph_matching}.  More specifically, the proof is sketched below. 

\begin{proof}[Proof of \prettyref{thm:seedless_graph_matching}]
If $f: I_0 \to [n]$ is such that 
$f(i)=\pi^*(i)$ for all $i \in I_0$, then 
since our seeded graph matching succeeds with high probability, it follows that 
$
\pi_f = \pi^*
$
with high probability. 

Moreover, since we are above the information-theoretic limit, it follows from \cite[Theorem 1]{cullina2017exact} that with high probability, 
$$
\pi^* \in \arg \min_{\pi} 
\| G_1 - \Pi G_2 \Pi^\top \|_F^2,
$$
where $\Pi$ is the permutation matrix corresponding to $\pi$. 

Therefore, $\hat{\pi} =\pi^*$ with high probability.
Finally, since $\alpha n \to \infty$, it follows that $|I_0|$ is at most $2\alpha n$ with high probability. Hence, \prettyref{alg:graph_matching_seedless} calls the Seeded Graph Matching algorithm at most $n^{O(\alpha n)}$ times with high probability. 
\end{proof}


\section{Analysis of \prettyref{alg:graph_matching_ell_hop} in 
Sparse Graph Regime}\label{sec:sparse_simple}

In this and next two sections, we give the analysis of our algorithms and 
prove our main theorems. For the sake of
analysis, we assume 
$\pi^*=id$, \ie, $\pi^*(i)=i$
for all $i\in[n],$
without loss of generality.



Our analysis of \prettyref{alg:graph_matching_ell_hop} uses the  technique for analyzing small subgraph containment~\cite{Janson1997Random}. 
Let $T$ denote a starlike tree formed by $m$ independent (vertex-disjoint expect the root vertex) paths of length $\ell$ from root vertex to $m$ distinct leaves for $\ell,m \ge 1$. 
Note that $T$ consists of $m\ell+1$ vertices and $m\ell$ edges (See 
 \prettyref{fig:STexample} for an example of $m=3$ and $\ell=2$).  
Let $r(T)$ denote the root vertex of $T$ and $L(T)$ denote the set of leaves of $T.$
We say $T$ is a subgraph of $G$, denoted by $T \subset G$, 
if $V(T)\subset V(G)$ and $E(T) \subset E(G).$
The key  of our proof is to show that under certain conditions with high probability:
\begin{enumerate}
\item For every vertex $i$, there exists a copy of $T$ rooted at $i$ with all leaves seeded 
in the intersection graph $G_1^* \wedge G_2$;
\item There is no copy of $T_1 \cup T_2$ 
in  the union graph $G_1^* \vee G_2$. 
\end{enumerate}
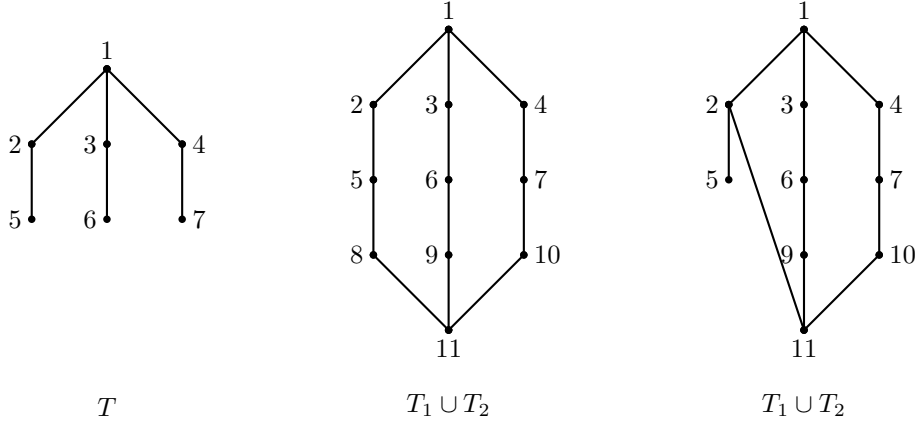
\begin{figure}[ht]%
\centering

\begin{tikzpicture}[scale=1,every edge/.append style = {thick,line cap=round},font=\small]

\node[coordinate,label=left:$6$] (v6) at (0,0-0.5)  {};
\node[coordinate,label=left:$5$] (v5) at (-1,0-0.5)  {};
\node[coordinate,label=right:$7$] (v7) at (1,0-0.5)  {};

\node[coordinate,label=left:$3$] (v3) at (0,1-0.5)  {};
\node[coordinate,label=left:$2$] (v2) at (-1,1-0.5)  {};
\node[coordinate,label=right:$4$] (v4) at (1,1-0.5)  {};

  \node[coordinate,label=above:$1$] (v1) at (0,2-0.5)  {};

 \singleedge{(v1)}{(v2)}
    \singleedge{(v1)}{(v3)}
    \singleedge{(v1)}{(v4)}

\singleedge{(v2)}{(v5)}
    \singleedge{(v3)}{(v6)}
    \singleedge{(v4)}{(v7)}
 
    \node at (0,-3) {$T$};
 \end{tikzpicture}
 \qquad \qquad
\begin{tikzpicture}[scale=1,every edge/.append style = {thick,line cap=round},font=\small]

\node[coordinate,label=left:$6$] (v6) at (0,0)  {};
\node[coordinate,label=left:$5$] (v5) at (-1,0)  {};
\node[coordinate,label=right:$7$] (v7) at (1,0)  {};

\node[coordinate,label=left:$3$] (v3) at (0,1)  {};
\node[coordinate,label=left:$2$] (v2) at (-1,1)  {};
\node[coordinate,label=right:$4$] (v4) at (1,1)  {};

  \node[coordinate,label=above:$1$] (v1) at (0,2)  {};

 \node[coordinate,label=left:$9$] (v9) at (0,-1)  {};
 \node[coordinate,label=left:$8$] (v8) at (-1,-1)  {};
 \node[coordinate,label=right:$10$] (v10) at (1,-1)  {};

  \node[coordinate,label=below:$11$] (v11) at (0,-2)  {};

 \singleedge{(v1)}{(v2)}
    \singleedge{(v1)}{(v3)}
    \singleedge{(v1)}{(v4)}

\singleedge{(v2)}{(v5)}
    \singleedge{(v3)}{(v6)}
    \singleedge{(v4)}{(v7)}

\singleedge{(v8)}{(v5)}
    \singleedge{(v9)}{(v6)}
    \singleedge{(v10)}{(v7)}

   \singleedge{(v11)}{(v8)}
    \singleedge{(v11)}{(v9)}
    \singleedge{(v11)}{(v10)}  
    \node at (0,-3) {$T_1 \cup T_2$};
 \end{tikzpicture}
 \qquad \qquad
\begin{tikzpicture}[scale=1,every edge/.append style = {thick,line cap=round},font=\small]

\node[coordinate,label=left:$6$] (v6) at (0,0)  {};
\node[coordinate,label=left:$5$] (v5) at (-1,0)  {};
\node[coordinate,label=right:$7$] (v7) at (1,0)  {};

\node[coordinate,label=left:$3$] (v3) at (0,1)  {};
\node[coordinate,label=left:$2$] (v2) at (-1,1)  {};
\node[coordinate,label=right:$4$] (v4) at (1,1)  {};

  \node[coordinate,label=above:$1$] (v1) at (0,2)  {};

\node[coordinate,label=left:$9$] (v9) at (0,-1)  {};
\node[coordinate,label=right:$10$] (v10) at (1,-1)  {};

 \node[coordinate,label=below:$11$] (v11) at (0,-2)  {};

 \singleedge{(v1)}{(v2)}
    \singleedge{(v1)}{(v3)}
    \singleedge{(v1)}{(v4)}

\singleedge{(v2)}{(v5)}
    \singleedge{(v3)}{(v6)}
    \singleedge{(v4)}{(v7)}

    \singleedge{(v9)}{(v6)}
    \singleedge{(v10)}{(v7)}

   \singleedge{(v11)}{(v2)}
    \singleedge{(v11)}{(v9)}
    \singleedge{(v11)}{(v10)}  
    \node at (0,-3) {$T_1 \cup T_2$};
 \end{tikzpicture}
\caption{Left: $T$ is a starlike tree with $m=3$, $\ell=2$, $r(T)=1$ and $L(T)=\{5,6,7\}.$
Middle and Right: Two examples of $T_1 \cup T_2$ such that $T_1, T_2$ are isomorphic to $T$, $r(T_1)\neq r(T_2)$, and $L(T_1)=L(T_2)=\{5,6,7\}$. 
For the middle, $V(T_1) \cap V(T_2)= \{5,6,7\}$; for the right, $V(T_1)\cap V(T_2) = \{ 2,5,6,7\}.$
}%
\label{fig:STexample}%
\end{figure}


\subsection{Success of \prettyref{alg:graph_matching_ell_hop} on the Intersection of Good Events}

We first introduce a sequence of ``good'' events on whose intersection, \prettyref{alg:graph_matching_ell_hop} 
correctly matches all vertices. We need the following graph properties:
\begin{enumerate}[(i)]
\item there is no isolated vertex;
  \item for any two adjacent vertices, there are at least $\tau$ vertices adjacent to
at least one of them;
\item For all vertices $i$ with $d_i \ge \tau $, 
there are at least $2m$ independent $\ell$-paths from $i$ to $2m$ distinct vertices in 
$I_0$;
\item There is no pairs of subgraphs $T_1, T_2 \subset G $ that are isomorphic to  $T$ 
such that $r(T_1)\neq r(T_2),$ and $L(T_1) = L(T_2)$ (See \prettyref{fig:STexample} for an illustration).
\item For every vertex $i$, there exist at most $m-1$ independent $\ell$-paths from $i$ 
to $m-1$ distinct vertices in $N^{G}_{\ell-1}(i)$. 
  \end{enumerate}

Let 
\begin{itemize}
\item $\calE_1$ denote the event such that $G_1^* \wedge G_2$ satisfy properties (i)--(iii);
\item $\calE_2$ denote the event such that $G_1^* \vee G_2$ satisfy properties (iv) and (v);
\item $\calE_3$ denote the event such that  
for any two vertices $i,j$ that are connected by a $2$-path in $G_1^* \vee G_2$,
  at least one  of the two vertices $i,j$ must be a high-degree vertex in $G_1^* \wedge G_2$.  
\end{itemize}

We claim that on event $\calE_1 \cap \calE_2 \cap \calE_3$, 
\prettyref{alg:graph_matching_ell_hop}
correctly matches all vertices. Recall that we can assume $\pi^*=id$ and thus $G_1=G^*_1$ without loss of generality. 

First, since $G_1^* \wedge G_2$ satisfy graph property (iii), 
it follows that in $G_1^* \wedge G_2$, for all high-degree vertices  $i$, 
there exist $2m$ independent $\ell$-paths to a set $S \subset \Gamma^{G_1^* \wedge G_2}_\ell (i)$ 
of $2m$ seeded vertices. 
Let $\tilde{S} = S \backslash N^{G^*_1 \vee G_2}_{\ell-1}(i)$. 
Since $G_1^* \vee G_2$ satisfy graph property (v), and 
$G_1^* \wedge G_2 \subset G_1^* \vee G_2$, it follows that 
$$
\left| S \cap N^{G^*_1 \vee G_2}_{\ell-1} (i) \right|  \le m-1
$$  
and thus 
$| \tilde{S} | \ge m+1 .$
Moreover, since $G_1^*\wedge G_2 \subset G_1, G_2$, it follows that 
$$
\tilde{S} \subset \Gamma^{G_1^* \wedge G_2}_\ell (i) 
\backslash N_{\ell-1}^{G_1^* \vee G_2}
\subset \Gamma^{G_1^*}_\ell (i) 
\cap \Gamma^{G_2}_\ell (i).
$$
Therefore, in both $G_1$ and $G_2$, there are 
 at least $m+1$ independent
$\ell$-paths from $i$ to $
\Gamma^{G_1^*}_\ell (i) 
\cap \Gamma^{G_2}_\ell (i).$

Second, note that on event $\calE_2$, $G_1^* \vee G_2$ satisfy graph property (iv). 
For the sake of contradiction, suppose there exist a pair of distinct vertices $i,j$
and a set $L$ of $m$ seeded vertices
such that there exist $m$ independent $\ell$-paths from $i$ to set $L$ in $G_1$
and $m$ independent $\ell$-paths from $j$ to set $L$ in $G_2$. 
Let $T_k$
denote the starlike tree
formed by the $m$ independent $\ell$-paths in $G_k$ for $k=1,2.$
Then $T_1,T_2 \subset G_1^* \vee G_2 $ are isomorphic to $T$ such that $r(T_1)=i, r(T_2)=j$ and 
$L(T_1)=L(T_2)=L$. This is in contradiction with the fact that 
$G_1^* \vee G_2$ satisfy graph property (iv). 

It follows from the above two points that \prettyref{alg:graph_matching_ell_hop} correctly matches all
high-degree vertices $i$ in $G_1^* \wedge G_2$, \ie, $\hat{\pi}(i)=i.$

Next, we show that all low-degree vertices are matched correctly. 
Fix a low-degree vertex $i$. Since $G_1^* \wedge G_2$ satisfy 
graph properties (i) and (ii), it must have a high-degree neighbor $j$ in $G_1^* \wedge G_2.$  Since 
the high-degree vertex $j$ has been matched 
correctly, $i$ is adjacent to $j$ in $G_1$ and $i$ is also adjacent to $\hat{\pi}(j)=j$ in $G_2.$
Moreover, for the sake of contradiction, suppose there exists a pair of two distinct low-degree vertices $i_1$ and $i_2$ such that
$i_1$ is adjacent to a matched vertex $j_1$ in $G_1$ and $i_2$ is adjacent to vertex $\hat{\pi}(j_1)$ in $G_2$.
Since $\hat{\pi}(j_1)=j_1$, it follows that $(i_1,j_1,i_2)$ form a $2$-path in $G_1^* \vee G_2.$ However, on event
$\calE_3$, $i_1$ and $i_2$ cannot be low-degree vertices simultaneously in $G_1^* \wedge G_2$, which leads to a contradiction. 
As a consequence, $\hat{\pi}(i)=i$ for every low-degree vertex $i.$

Finally, to prove \prettyref{thm:main}, it remains to show that under the theorem assumptions, 
$\prob{\calE_i} \to 1$ for all $i=1,2,3,$ which is done in the next subsection. 

\subsection{Bound the Probability of Good Events}
It is standard to prove that $G_1^* \wedge G_2$ satisfies properties (i)--(ii) with high probability and $\prob{\calE_3} \to 1$ using union bounds. For completeness, we state the 
lemmas and leave the proofs to appendices. 

\begin{lemma}\label{lmm:graph_properties}
Suppose $G \sim \calG(n,p)$ with $np - \log n \to +\infty$.
\begin{enumerate}[(i)]
  \item There is no isolated vertex in $G$ with probability at least $1-o(1)$;
\item Assume $\tau=o(np)$. With probability at least $1-n^{-1+o(1)}$,  for any two adjacent vertices, there are at least $\tau$ vertices adjacent to at least one of them in $G.$
\end{enumerate}
\end{lemma}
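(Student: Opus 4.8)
The plan is to establish both parts by elementary first/second moment arguments for $\calG(n,p)$, the only point requiring care being the Chernoff exponent in part (ii).

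For part (i) I would use the first moment method. The expected number of isolated vertices is $n(1-p)^{n-1}\le n\,\e^{-(n-1)p}=\exp\bigl(\log n-(n-1)p\bigr)$, and since $(n-1)p=np-p$ while $np-\log n\to+\infty$, the exponent $-(np-\log n)+p$ tends to $-\infty$. Hence this expectation is $o(1)$, and Markov's inequality gives property (i) with probability $1-o(1)$.

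For part (ii), first note that we may assume $p=o(1)$, since otherwise $np=\Omega(n)$ and the union bound below is already super-polynomially small. Fix distinct vertices $u,v$ and condition on the event that $u$ and $v$ are adjacent; the edges joining $\{u,v\}$ to the remaining $n-2$ vertices are unaffected by this conditioning, so the number $X_{u,v}$ of vertices $w\notin\{u,v\}$ adjacent to $u$ or to $v$ has distribution $\Bin(n-2,\,2p-p^2)$, independent of the conditioning event, with mean $\mu=(n-2)(2p-p^2)=(2-o(1))np$. Since $\tau=o(np)=o(\mu)$, I would invoke a Chernoff bound for the lower tail in the form $\prob{X_{u,v}\le a}\le\e^{-\mu}(\e\mu/a)^{a}$ valid for $0<a\le\mu$; taking $a=\tau$ and using $\tau\log(\mu/\tau)=o(\mu)$ this gives $\prob{X_{u,v}<\tau}\le\e^{-(1-o(1))\mu}=\e^{-(2-o(1))np}$. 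A union bound over the at most $\binom{n}{2}$ pairs, each weighted by the probability $p$ that $u$ and $v$ are adjacent, then bounds the probability that property (ii) fails by $\binom{n}{2}\,p\,\e^{-(2-o(1))np}\le\exp\bigl(\log n+\log(np)-(2-o(1))np\bigr)$.

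It remains to check that this last expression is $n^{-1+o(1)}$, which I would do by splitting on the size of $np$. If $np\ge2\log n$, then $(2-o(1))np\ge3\log n$ for large $n$, so the exponent is at most $\log n+\log(np)-3\log n\le-\log n$, using $\log(np)\le\log n$. If instead $np<2\log n$, then $\log(np)$ and the $o(np)$ error term are both $o(\log n)$, so since $np>\log n$ the exponent is at most $\log n-2np+o(\log n)\le-(1-o(1))\log n$. Either way property (ii) fails with probability $n^{-1+o(1)}$. The step I expect to be most delicate is obtaining the factor $2$ in the exponent $\e^{-(2-o(1))np}$---that is, exploiting that the joint neighborhood of the pair $\{u,v\}$ has expected size close to $2np$ rather than $np$, which is exactly where $2p-p^2\sim2p$ enters: the weaker exponent $\e^{-(1-o(1))np}$ would not suffice, because when $np$ exceeds $\log n$ only by a slowly growing amount the union bound would fail to close.
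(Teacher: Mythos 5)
Your proposal is correct and follows essentially the same route as the paper: part (i) is the identical first-moment bound, and part (ii) is the same union bound over adjacent pairs weighted by the edge probability $p$, with the crucial factor $\e^{-(2-o(1))np}$ coming from a binomial lower-tail estimate. The only (harmless) difference is how the factor $2$ in the exponent is obtained: you bound a single $\Bin(n-2,\,2p-p^2)$ lower tail for the union of the two neighborhoods (after reducing to $p=o(1)$), whereas the paper conditions on the edge being present and exploits the independence of the two endpoint degrees, squaring one $\Bin(n-2,p)$ lower-tail bound.
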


\medskip

\begin{lemma} \label{lmm:low_degree_path}
Assume 
$$
nps^2\ge \log n \quad \text{ and } \quad  \tau=o(nps^2) \quad \text{ and }  \quad
\log (np)=o(nps^2).
$$
With probability at least $1-n^{-1+o(1)}$, 
for any two vertices $i,j$ that are connected by a $2$-path in $G_1^* \vee
G_2$, at least one of the two vertices $i,j$ must have degree at least $\tau$ in $G_1^* \wedge G_2$. 
\end{lemma}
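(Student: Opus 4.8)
The plan is to bound the probability of the complementary (``bad'') event by a first-moment argument over triples of vertices, after a decoupling step. Write $W:=G_1^*\wedge G_2\sim\calG(n,ps^2)$ and $U:=G_1^*\vee G_2\sim\calG(n,ps(2-s))$; recall that $W$ and $U$ share the same vertex set, and that for each unordered pair of vertices the events ``the pair is an edge of $U$'' and ``the pair is an edge of $W$'' are determined by that pair's own parent edge together with its two subsampling coins, independently across pairs.

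For an ordered triple of distinct vertices $(i,k,j)$, let $B_{i,k,j}$ be the event that $\{i,k\},\{k,j\}\in E(U)$ and both $i,j$ have degree strictly less than $\tau$ in $W$. The event in the lemma is contained in $\bigcup_{(i,k,j)}B_{i,k,j}$, so by the union bound it suffices to prove $\sum_{(i,k,j)}\Prob\{B_{i,k,j}\}\le n^{-1+o(1)}$, the sum being over the at most $n^3$ such triples. To estimate a single term I would first decouple the two degree conditions from the two ``path edges'': for $u\in\{i,j\}$ let $d_u''$ be the number of neighbours of $u$ in $W$ lying outside $\{i,j,k\}$, so that $d_u''\le d_u$ and hence $B_{i,k,j}\subseteq\{\{i,k\},\{k,j\}\in E(U)\}\cap\{d_i''<\tau\}\cap\{d_j''<\tau\}$. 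The four events on the right are determined, respectively, by the pairwise disjoint sets of vertex pairs $\{i,k\}$; $\{k,j\}$; $\{\{i,l\}:l\notin\{i,j,k\}\}$; $\{\{j,l\}:l\notin\{i,j,k\}\}$, hence are mutually independent, which gives
$$
\Prob\{B_{i,k,j}\}\le\big(ps(2-s)\big)^2\,\Prob\{d_i''<\tau\}\,\Prob\{d_j''<\tau\}.
$$

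It remains to control $\Prob\{d_u''<\tau\}$, a lower-tail estimate for $d_u''\sim\Bin(n-3,ps^2)$ with mean $(1-o(1))nps^2$. Since $\tau=nps^2/\log(nps^2)=o(nps^2)$ and $nps^2\to\infty$ (as $nps^2\ge\log n$), the standard multiplicative Chernoff bound gives $\Prob\{d_u''<\tau\}\le\exp\big(-(1-o(1))nps^2\big)$, the $o(1)$ being $(1+\log\log(nps^2))/\log(nps^2)\to0$ and arising from $h(a)=a\log a-a+1$ at $a=(1+o(1))/\log(nps^2)$. Substituting, using $n^3\big(ps(2-s)\big)^2=\Theta\big(n(nps^2)^2\big)$, and then $\log(nps^2)=o(nps^2)$ to absorb the polynomial prefactor into the exponent,
$$
\sum_{(i,k,j)}\Prob\{B_{i,k,j}\}\le\Theta\big(n(nps^2)^2\big)\exp\big(-2(1-o(1))nps^2\big)=\Theta(n)\exp\big(-2(1-o(1))nps^2\big)\le\Theta(n)\,n^{-2+o(1)}=n^{-1+o(1)},
$$
the last step using $nps^2\ge\log n$.

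The step I expect to be the main obstacle is the decoupling bookkeeping. One must delete all three of $i,j,k$ from each of the two degree counts: $d_i$ and $d_j$ both ``see'' the pair $\{i,j\}$, so without removing it the two degree events would only be positively correlated (FKG), the wrong direction for an upper bound, and each degree count also sees its own path edge. Once $d_i''$ and $d_j''$ rest on genuinely disjoint sets of vertex pairs---from each other and from $\{i,k\}$ and $\{k,j\}$---exact independence is available, and everything else is a routine union bound plus a Chernoff estimate; the parameters close precisely because $n^3p^2$ is only $\mathrm{poly}(nps^2)$ times $n$ and is therefore overwhelmed by the factor $\exp\big(-2(1-o(1))nps^2\big)\le n^{-2+o(1)}$.
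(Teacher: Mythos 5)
Your proposal is correct and follows essentially the same route as the paper's proof: a union bound over the at most $n^3$ triples $(i,k,j)$, with the two degree events decoupled from the two path edges and each bounded by a $\Binom(n-3,ps^2)$ lower-tail estimate of $\exp\left(-(1-o(1))nps^2\right)$, and the same final bookkeeping using $nps^2\ge\log n$. The only cosmetic differences are that you make the independence explicit via disjoint sets of vertex pairs (the paper states the same bound directly as a conditional probability) and that you invoke the multiplicative Chernoff bound in place of the paper's Binomial tail inequality.
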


\medskip

It remains to show with high probability, $G_1^* \wedge G_2$ satisfy graph
property (iii) and $G_1^* \vee G_2$ satisfy graph properties (iv) and (v).

We will apply the following lemma to show that with high probability, 
for every high-degree vertex $i$ in $G_1^* \wedge G_2$, 
we can always
find at least $2m$ independent paths of length $\ell$ from 
$i$ to $2m$ distinct seeded vertices in $I_0.$

\begin{lemma}\label{lmm:existence_ell_path}
Suppose $G\sim \calG(n,p)$ and each vertex in $G$ is included in $I_0$ independently with probability $\alpha$.  
Assume  
$$
\alpha  (np/2)^{\ell-2} \tau (\tau-2m) - 2m \log \tau 
\ge 2\log n.
$$
and 
$$
p(4np)^{\ell} = o(1)
$$
and $\tau \to +\infty$. 
Then with high probability, for all vertices $i$ with $d_i \ge \tau$, there are at least $2m$ independent $\ell$-paths from $i$ to $I_0$.
\end{lemma}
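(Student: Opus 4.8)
The plan is to fix a vertex $i$ with $d_i \ge \tau$ in $G$ and show that, conditioned on this, the probability that $i$ fails to reach $2m$ distinct seeded vertices via $2m$ independent $\ell$-paths is much smaller than $n^{-1}$, so that a union bound over all $n$ vertices closes the argument. The idea is to build the paths greedily one neighbor at a time: from each of the $\ge \tau$ neighbors of $i$, we attempt to grow a single $(\ell-1)$-path avoiding all previously used vertices, ending at a fresh seeded vertex. Since at any point of the construction we have used at most $2m\ell = O(1)$ vertices, the greedy growth from a given neighbor essentially sees a "fresh" copy of $\calG(n,p)$.

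The key quantitative step is a lower bound on the probability that a single neighbor $u$ of $i$ successfully grows an $(\ell-1)$-path to $I_0$, avoiding a bounded set $W$ of forbidden vertices with $|W| = O(1)$. One exposes the neighborhood of $u$ level by level: at level $1$ there are, with good probability, at least $np/2$ available neighbors (by a Chernoff bound on $\Bin(n - |W| - \cdots, p)$, using $np \ge \log n \to\infty$); pick one, expose its neighborhood, again get $\ge np/2$ fresh vertices, and so on, for $\ell - 2$ further steps, so that at distance $\ell - 1$ from $u$ we have a set $S$ of at least $(np/2)^{\ell-2}$ vertices, each reached by a distinct path. Crucially we only need \emph{one} such path per neighbor, so we may be generous and just ask that $S$ be nonempty and contains a seeded vertex not yet used. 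Conditioned on $|S| \ge (np/2)^{\ell-2}$ and on the $O(1)$ previously chosen seeds, the number of seeded vertices in $S$ stochastically dominates $\Bin\bigl((np/2)^{\ell-2} - 2m,\; \alpha\bigr)$ (subtracting $2m$ for already-used seeds), which is nonzero with probability roughly $1 - \exp(-\alpha((np/2)^{\ell-2} - 2m))$; call this failure probability $q$ per neighbor. The bad event "fewer than $2m$ of the $\tau$ neighbors succeed" is dominated by $\Prob\{\Bin(\tau, 1-q) < 2m\} \le \binom{\tau}{2m} q^{\tau - 2m} \le \tau^{2m} q^{\tau - 2m}$, and taking logarithms this is at most $2m\log\tau - (\tau - 2m)\alpha(np/2)^{\ell-2}$ up to constants, which the hypothesis $\alpha(np/2)^{\ell-2}\tau(\tau - 2m) - 2m\log\tau \ge 2\log n$ forces to be $\le -2\log n$. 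A union bound over $i \in [n]$ then gives total failure probability $\le n \cdot n^{-2} = n^{-1} = o(1)$. The condition $p(4np)^{\ell} = o(1)$ is what guarantees the level-by-level exposure never runs into already-exposed vertices or creates unwanted collisions (the expected number of "back-edges" into the $\le (4np)^\ell$ explored vertices is $o(1)$), so the tree-like growth goes through, and $\tau \to \infty$ ensures $\tau - 2m \ge \tau/2$ and that the Chernoff bounds on the degrees are valid.

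The main obstacle is making the "fresh $\calG(n,p)$" heuristic rigorous while the paths are grown dependently: after fixing some paths, the available edge slots for the next neighbor's path are not literally independent $\Bern(p)$, because edges touching the already-exposed vertex set have been conditioned on. The clean way around this is to expose edges in a fixed order and only ever reveal edges from the current frontier to the set of \emph{not-yet-touched} vertices, discarding (never querying) edges among touched vertices; this gives a genuine stochastic-domination coupling with an independent branching-type process, at the cost of losing the $O(1)$ touched vertices at each level — which is exactly the harmless additive $2m\ell = O(1)$ and $2m$ corrections appearing above. One must also be slightly careful that the $\ell = 1$ and $\ell = 2$ edge cases are handled (for $\ell = 1$ the "path" is just a seeded neighbor, and the hypothesis reduces to $\alpha \tau(\tau - 2m) \gtrsim \log n$), but these follow the same template with fewer exposure rounds.
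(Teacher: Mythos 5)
Your overall strategy is the same as the paper's (grow a neighborhood tree from each neighbor of $i$, use the independence of the seeding from the graph to say each subtree hits a seed, then a $\Bin(\tau,\cdot)$ tail bound per vertex and a union bound over $i$), but there is a genuine quantitative gap in the final step: you are missing a factor of $\tau$ in the exponent, and the stated hypothesis does not close your bound. Concretely, you give each neighbor $u$ of $i$ a set $S$ of only $(np/2)^{\ell-2}$ depth-$(\ell-1)$ descendants, so your per-neighbor failure probability is $q\approx \exp\bigl(-\alpha\,(np/2)^{\ell-2}\bigr)$ and your per-vertex bound is $\tau^{2m}q^{\tau-2m}=\exp\bigl(2m\log\tau-\alpha(np/2)^{\ell-2}(\tau-2m)\bigr)$. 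The hypothesis only guarantees $\alpha(np/2)^{\ell-2}\tau(\tau-2m)\ge 2\log n+2m\log\tau$; dividing by $\tau$, it gives $\alpha(np/2)^{\ell-2}(\tau-2m)\ge (2\log n+2m\log\tau)/\tau$, which (since $\tau\to\infty$, e.g.\ $\tau\asymp \log n/\log\log n$ in the intended application) can be $o(\log n)$, so your exponent need not even be negative, let alone $\le -2\log n$; the claim that the hypothesis ``forces'' your expression below $-2\log n$ is false. The paper avoids this by noting that each good child $j$ of $i$ itself has at least $\tau$ children, so its depth-$(\ell-1)$ subtree has at least $\tau(np/2)^{\ell-2}$ leaves (this is exactly what Proposition~\ref{prop:tree_process} delivers), giving the exponent $\alpha\tau(np/2)^{\ell-2}(\tau-2m)$ to which the hypothesis is calibrated. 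The fix in your framework is easy — demand branching at least $np/2$ (or at least $\tau$) already at the first level of each neighbor's exploration, so $|S|\ge (np/2)^{\ell-1}\ge \tau(np/2)^{\ell-2}$, with the per-neighbor Chernoff failure $\exp(-\Omega(np))$ simply absorbed into $q$ — but as written the accounting does not prove the lemma under its stated assumption.

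Two smaller points. First, your assertion that ``at any point of the construction we have used at most $2m\ell=O(1)$ vertices'' is not accurate: to find each path you must expose whole levels of a neighborhood, so the conditioned/touched set can be of order $(4np)^{\ell}$; this is harmless because $p(4np)^{\ell}=o(1)$ makes it $o(n)$ (exactly how the paper's branching-process lemma argues), and your frontier-only exposure remark is the right repair, but the $O(1)$ claim should be replaced by the $o(n)$ bound. Second, your sequential exposure is a legitimate alternative to the paper's single global BFS tree from $i$ (where vertex-disjointness of the $2m$ paths is automatic because distinct children's subtrees are disjoint); just make sure the domination argument is stated for the whole vector of per-neighbor indicators, since the neighbors' explorations are only conditionally, not literally, independent.
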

\begin{proof}
In view of \prettyref{prop:tree_process}, we have
$\prob{\calH} \ge 1-3n^{-1+o(1)}$, where on event $\calH$, for every 
vertex $i$, there exists a tree $T_\ell(i) \subset G$ of depth $\ell$ rooted at $i$ such that:
\begin{enumerate}
\item 
Root $i$ has at most one children $j$ who has fewer than $\tau$ children in $T(i)$, \ie,  $|\Pi_1(j)| \le \tau$. 
\item For each children $j$ of $i$ with $|\Pi_1(j)| \ge \tau$, the subtree $T_{\ell-1}(j)$ 
of depth $\ell-1$ rooted at
$j$ has at least $\tau (np/2)^{\ell-2}$ leaves, \ie, $|\Pi_{\ell-1}(j)| \ge \tau \left( np/2 \right)^{\ell-2}$.
\end{enumerate}

Fix a vertex $i$ in $G$. Then $i$ has at least
$d_i -1$ children $j$ such that 
$|\Pi_1(j)| \le \tau$. For each such $j$, define
$Y_{ij}=1$ if there is a path of length $\ell-1$ from $j$ to some vertex in $I_0$ in $T_\ell(i)$. Then the number of independent paths from $i$ to $I_0$
is at least $
 \sum_{j=1}^{d_i-1} Y_{ij}.$
 
Since each leaf vertex of $T_{\ell-1}(j)$ is included in $I_0$  with probability $\alpha$
independently across different vertices and from  graph $G$, 
it follows that 
$$
\prob{  Y_{ij} =1 \mid  d_i \ge \tau, \calH } 
= 1- (1- \alpha)^{ |\Pi_{\ell-1}(j)| } 
\ge  1- \exp\left( - \alpha  \tau \left( np/2 \right)^{\ell-2} \right),
$$
where we used $1-x \le e^{-x}$ and $|\Pi_{\ell-1}(j)| \ge \tau \left( np/2 \right)^{\ell-2}$ on event $\calH$.  
Therefore, 
 \begin{align*}
 \prob{ \sum_{j=1}^{d_i-1} Y_{ij} \le 2m-1 | d_i \ge \tau, \calH} & \le
\prob{ \sum_{j=1}^{\tau-1} Y_{ij} \le 2m-1 } \\
 & \le \prob{ \Binom \left( \tau -1,  1 - e^{- \alpha \tau  \left( np/2 \right)^{\ell-2} } \right)
 \le 2m-1 } \\
 & = \sum_{k=0}^{2m-1} \binom{\tau-1}{k} e^{- \alpha \tau \left( np/2 \right)^{\ell-2} (\tau-1-k ) }  \\
 & \le e^{- \alpha \tau \left( np/2 \right)^{\ell-2} (\tau-2m)}
 \sum_{k=0}^{m-1} \tau^k \\
 & \le 2 e^{- \alpha \tau \left( np/2 \right)^{\ell-2} (\tau-2m)}
 \tau^{2m}
 \le 2 n^{-2},
 \end{align*}
 where the last equality holds due to 
 the assumption $\alpha(\tau-2m) \left( np/2 \right)^{\ell-2} - 2m \log \tau \ge 2\log n.$

Define event 
$$
\calF_i =\left\{  d_i \ge \tau \right\} \cap\left\{ \sum_{j=1}^{d_i-1} Y_{ij}  \le 2m -1 \right\}.
$$
Then we have that 
$$
\prob{\calF_i \cap \calH } 
\le \prob{ \sum_{j=1}^{d_i-1} Y_{ij} \le 2m-1 | d_i \ge \tau, \calH}
\le 2 n^{-2}.
$$

Let $\calF=\cup_i \calF_i$. 
By the union bound, 
$$
\prob{\calF} = \prob{\calF \cap \calH} + \prob{\calH^c}
\le \sum_i \prob{\calF_i \cap \calH} + \prob{\calH^c}
\le 2n^{-1} + 3n^{-1+o(1)} \le 5n^{-1+o(1)}. 
$$
Therefore, with high probability, for all vertices $i$ with $d_i \ge \tau$, 
$\sum_{j=1}^{d_i-1} Y_{ij} \ge 2m$. 


\end{proof}

The following lemma will be useful to conclude that in $G_1^* \vee G_2$, 
with high probability, there is no pair of subgraphs $T_1, T_2
\subset G_1^*\vee G_2$ that are isomorphic to $T$,
such that $r(T_1)\neq r(T_2)$ and $L(T_1)=L(T_2).$
See \prettyref{fig:STexample} for an illustration of 
$T_1$ and $T_2$ isomorphic to $T$ such that 
$r(T_1)\neq r(T_2)$ and $L(T_1)=L(T_2).$

\begin{lemma}\label{lmm:subgraph_count_general}
Suppose $G \sim \calG(n,p)$ and $\ell,m \ge 1$. Then it holds that 
\begin{align*}
& \prob{ \exists\; T_1, T_2 \subset G \text{ that are isomorphic to } T: r(T_1)\neq r(T_2), L(T_1) = L(T_2) 
} \\
& \le \left( 2+ \frac{8}{np} \right)^{m(\ell-1)}  n^{2m\ell+2-m} p^{2m\ell}. 
\end{align*}
\end{lemma}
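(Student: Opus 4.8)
The plan is a first-moment bound. Let $\mathcal N$ denote the number of ordered pairs $(S_1,S_2)$ of subgraphs of $G$, each isomorphic to $T$, with $r(S_1)\neq r(S_2)$ and $L(S_1)=L(S_2)$. Since the edges of $G$ are independent with probability $p$, for fixed subgraphs $S_1,S_2$ of $K_n$ we have $\P\{S_1\subseteq G,\ S_2\subseteq G\}=p^{|E(S_1)\cup E(S_2)|}$, so by Markov's inequality it suffices to prove $\Exp[\mathcal N]=\sum p^{|E(S_1)\cup E(S_2)|}\le (2+\tfrac{8}{np})^{m(\ell-1)}n^{2m\ell+2-m}p^{2m\ell}$, the sum being over all labelled pairs as above.

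I would expose $S_1$ first and $S_2$ second. Since $T$ is starlike, its $m$ branches are internally vertex-disjoint, hence pairwise edge-disjoint; thus for fixed $S_1$ with leaf set $\{\ell_1,\dots,\ell_m\}$, writing $B_1,\dots,B_m$ for the branches of $S_2$ (with $B_k$ the length-$\ell$ path from $r(S_2)=:r_2$ to the leaf $\ell_k$, well defined because $L(S_2)=L(S_1)$),
\begin{align*}
|E(S_1)\cup E(S_2)| &= m\ell + \sum_{k=1}^m |E(B_k)\setminus E(S_1)|, \\
\text{hence}\quad \Exp[\mathcal N] &= p^{m\ell}\sum_{S_1}\ \sum_{S_2}\ \prod_{k=1}^m p^{|E(B_k)\setminus E(S_1)|}.
\end{align*}
There are at most $n^{m\ell+1}$ choices of $S_1$. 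For fixed $S_1$ choose $r_2$ in at most $n$ ways; then drop the requirement that the branches be internally disjoint from one another and from $r_2$ (an over-count), so the inner sum is at most $n\prod_{k=1}^m\Sigma_k$, where $\Sigma_k:=\sum_{B}p^{|E(B)\setminus E(S_1)|}$ and $B$ ranges over simple length-$\ell$ paths from $r_2$ to $\ell_k$. Let $M$ be the $n\times n$ matrix with $M_{uv}=1$ for $uv\in E(S_1)$, $M_{uv}=p$ for $uv\notin E(S_1)$ with $u\neq v$, and $M_{uu}=0$. For a simple path, $p^{|E(B)\setminus E(S_1)|}=\prod_{i=1}^\ell M_{u_{i-1}u_i}$, and summing $\prod_i M_{u_{i-1}u_i}$ over all walks (not only simple ones) only increases the total, so $\Sigma_k\le (M^\ell)_{r_2,\ell_k}$. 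Collecting the factors, the lemma follows once one shows
\begin{align*}
(M^\ell)_{r_2,\ell_k}\ \le\ p\,(2np+8)^{\ell-1}\qquad\text{for every }r_2\notin\{\ell_1,\dots,\ell_m\},
\end{align*}
because $p^{m\ell}\cdot n^{m\ell+1}\cdot n\cdot\big(p\,(2np+8)^{\ell-1}\big)^m = (2+\tfrac{8}{np})^{m(\ell-1)}n^{2m\ell+2-m}p^{2m\ell}$ after writing $2np+8=np\,(2+\tfrac{8}{np})$ and collecting powers of $n$ and $p$.

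The main obstacle is the matrix estimate. Expanding $(M^\ell)_{r_2,\ell_k}=\sum_{u_1,\dots,u_{\ell-1}}\prod_{i=1}^\ell M_{u_{i-1}u_i}$ with $u_0=r_2$, $u_\ell=\ell_k$, each intermediate sum contributes a row sum of $M$, which at a vertex $u$ equals $\deg_{S_1}(u)+p\,(n-1-\deg_{S_1}(u))\le np+\deg_{S_1}(u)$; since $S_1$ is a tree this correction is bounded (at most $m$ in general, and at most $2$ away from the single branch point $r(S_1)$), and the walk is pinned at both ends to the $S_1$-degree-$\le 1$ vertices $r_2$ and $\ell_k$. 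The pinning at $\ell_k$ yields the extracted factor $p$: the vector $Me_{\ell_k}$ has all coordinates $\le p$ except one (the unique $S_1$-neighbour of $\ell_k$). Turning these observations into the explicit bound $p\,(2np+8)^{\ell-1}$ amounts to decomposing a walk into maximal ``fresh'' stretches and stretches inside $S_1$, bounding the number and contribution of each type; the constants $2$ and $8$ then comfortably absorb the several over-counts made above (unordered versus ordered branches, the dropped disjointness, walks versus simple paths, and the degree of $r(S_1)$). Once this is done, the remainder is routine bookkeeping.
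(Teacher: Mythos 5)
Your overall frame (first moment over ordered pairs, exposing $S_1$ and then paying only for the fresh edges of $S_2$) is fine, and your final bookkeeping identity does reproduce the stated bound. The genuine gap is the matrix estimate $(M^\ell)_{r_2,\ell_k}\le p\,(2np+8)^{\ell-1}$, which you assert for every non-leaf $r_2$ but do not prove --- and which is in fact false exactly where it matters, namely when $r_2\in V(S_1)$. Take $r_2$ to be the $S_1$-parent of the leaf $\ell_k$ and $\ell$ odd: the walk that bounces back and forth along the tree edge $\{r_2,\ell_k\}$ has every step inside $E(S_1)$, so it contributes weight $1$ and $(M^\ell)_{r_2,\ell_k}\ge 1$; meanwhile, in the regime where the lemma is actually used ($(np)^\ell\ll n$, $np\ge\log n$), the claimed right-hand side $p(2np+8)^{\ell-1}$ is $o(1)$. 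The culprit is precisely your relaxation from simple paths to walks: for simple paths it is true that any length-$\ell$ path from a non-root vertex of $S_1$ to $\ell_k$ must use a fresh edge (the only tree vertex at tree-distance exactly $\ell$ from $\ell_k$ is the root), but walks can backtrack inside $S_1$ at weight $1$, destroying the factor of $p$ you need to extract at the pinned endpoint. Repairing this requires the case analysis you wave away with ``comfortably absorb'': e.g.\ splitting $r_2\notin V(S_1)$ (where a column-sum bound does give $p(np+m)^{\ell-1}$) from the $O(m\ell)$ choices $r_2\in V(S_1)$, and for the latter treating separately the at most one branch reachable by tree walks while crediting the smaller number of such $r_2$. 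That accounting can be made to close, but it is the actual content of the argument, not routine bookkeeping, and as written the proof does not establish the lemma.

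For contrast, the paper avoids walk counts entirely: it classifies pairs $(T_1,T_2)$ by the isomorphism class $S$ of the intersection $T_1\cap T_2$, uses the structural observation that any such $S$ contains all $m$ leaves but not the root of a starlike tree and is therefore a forest with at least $m$ components, hence $|E(S)|\le |V(S)|-m$, and then counts labelings; the saving of $n^{-m}$ from the $m$ forest components plays exactly the role you wanted the $m$ extracted factors of $p$ (one per branch) to play, and the factor $(2+8/np)^{m(\ell-1)}$ emerges from summing over $|V(S)|$ rather than from a transfer-matrix bound.
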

\begin{proof}
Let $\calT$ denote the set of all possible subgraphs that are isomorphic to $T$ in the complete graph $K_n$.  
By the union bound, we have
\begin{align*}
& \prob{ \exists\; T_1, T_2 \subset G \text{ that are isomorphic to } T: r(T_1)\neq r(T_2), L(T_1) = L(T_2) 
} \\
& \le \sum_{T_1, T_2 \in \calT: \; r(T_1)\neq r(T_2), L(T_1) = L(T_2) } \prob{ T_1, T_2 \subset G }
\end{align*}
For each such pair of $T_1, T_2$, 
$$
\prob{T_1, T_2 \subset G} = p^{ |E(T_1)| + |E(T_2)| - |E(T_1 \cap T_2)| } = p^{2m\ell-|E(T_1 \cap T_2)|},
$$
where the last equality holds because $T_1$ and $T_2$ are isomorphic to $T$ and $|E(T)|=2m\ell.$

Next for any given unlabelled graph $S$, we enumerate 
all the possible distinct pairs of $T_1, T_2 \in \calT$ such that $T_1 \cap T_2$ is isomorphic to $S$,
$r(T_1)\neq r(T_2)$, and $L(T_1) = L(T_2)$.
Let $\kappa_S$ denote the number of subgraphs $S'$ in $T$ such that $S'$ is isomorphic to $S$,
$L(T) \subset V(S')$, and $r(T) \notin V(S').$
Then there 
are at most $\kappa_S^2$ ways of intersecting $T_1$ and $T_2$ such that $T_1 \cap T_2$ is isomorphic to $S$,
$r(T_1)\neq r(T_2)$, and $L(T_1) = L(T_2)$.
For each such type of intersection, there are  at most
$n^{|V(S)|}$ different choices for vertex labelings of $T_1\cap T_2$, and 
$n^{2 \left( |V(T)| - |V(S)| \right) }$ different choices for vertex labelings of
$(T_1 \backslash T_2) \cup (T_2 \backslash T_1)$. 
Hence, the total number of 
distinct pairs of $T_1, T_2 \in \calT$ such that $T_1 \cap T_2$ is isomorphic to $S,$
$r(T_1)\neq r(T_2)$, and $L(T_1) = L(T_2)$ 
is at most 
$$
\kappa^2_S n^{|V(S)| } n^{2 \left( |V(T)| - |V(S)| \right) } = \kappa^2_S  
n^{2m\ell+2-|V(S)|},
$$
where the last equality holds due to $|V(T)|=m\ell+1$.

Combining the last two displayed equations yields that 
\begin{align*}
 \sum_{T_1, T_2 \in \calT: \; r(T_1)\neq r(T_2), L(T_1) = L(T_2) } \prob{ T_1, T_2 \subset G } 
 \le \sum_{S}  \kappa_S^2 n^{2m\ell+2-|V(S)|} p^{2m\ell-|E(S)|}.
\end{align*}
Note that if $\kappa_S \ge 1$, then by the definition of $\kappa_S$, 
$S$ is isomorphic to some $S' \subset T$
such that $L(T) \subset V(S')$ and  $r(T)\notin V(S')$.
By the starlike tree property of $T$, $S'$ is 
a forest with at least $m$ disjoint trees; hence 
so is $S$. See \prettyref{fig:STexample} for two 
illustrating examples. Therefore, 
$$
E(S) \le V(S) -m. 
$$
Hence,
\begin{align*}
\sum_{S} \kappa_S^2 n^{2m\ell+2-|V(S)|} p^{2m\ell-|E(S)|}  \le \sum_{S  }  \kappa_S^2  n^{2m\ell+2-|V(S)|} p^{2m\ell+m-|V(S)|}.
\end{align*}
Finally, we break the summation in the right hand side of the last displayed equation 
according to $|V(S)|$. 
In particular, let $|V(S)|=m+k$
for $0 \le k \le m(\ell-1)$. Note that 
$
\sum_{S} \kappa_S
$
is at most the number of distinct subgraphs $S'$ of $T$
such that $L(T) \subset V(S'), r(T) \notin V(S')$ and $|V(S')|=m+k$, 
which is further upper bounded by
$\binom{m(\ell-1)}{k} 2^{k}$,
because there
are at most $\binom{m(\ell-1)}{k}$ different choices for $V(S')\backslash L(T)$
and at most $2^{|V(S')|-m}$ choices for determining whether 
to include the edges induced by $V(S')$ in $T$ into $S'$. 
Hence, 
\begin{align*}
& \sum_{S} \kappa_S^2 
n^{2m\ell+2-|V(S)|} p^{2m\ell+m-|V(S)|}  \\
 &  =  \sum_{k=0}^{m(\ell-1)}  n^{2m\ell+2-m-k} p^{2m\ell-k} 
\sum_{S: |V(S)|=m+k  } \kappa_S^2 \\
 &  \overset{(a)}{\le} \sum_{k=0}^{m(\ell-1)}  n^{2m\ell+2-m-k}p^{2m\ell-k} \left( \binom{m(\ell-1)}{k} 2^{k} \right)^2  \\
  &   \overset{(b)}{\le} n^{2m\ell+2-m} p^{2m\ell} 2^{m(\ell-1)} 
  \sum_{k=0}^{m(\ell-1)}  n^{-k} p^{-k}  \binom{m(\ell-1)}{k}  4^{k} \\
  & = n^{2m\ell+2-m} p^{2m\ell} 2^{m(\ell-1)}  \left( 1+ \frac{4}{np} \right)^{m(\ell-1) },  
\end{align*}
where $(a)$ follows from $\sum_{S} \kappa_S \le \binom{m(\ell-1)}{k} 2^{k}$,
and $(b)$ holds due to $ \binom{m(\ell-1)}{k} \le 2^{m(\ell-1)}.$
\end{proof}

Finally, we need a result to conclude that with high probability, 
for every vertex $i$, there exist at most 
$m-1$ independent $\ell$-paths from $i$ to $m-1$ distinct vertices in
$N^{G_1^* \vee G_2}_{\ell-1}(i)$.

Fix $m, \ell \ge 1$. We start with any vertex $i$ and $m$ 
independent (vertex-disjoint except for $i$)
paths of length $\ell$ 
from $i$ to $m$ distinct vertices $j_1, \ldots, j_m$, denoted by $P_1, \ldots, P_m$. 
Let $\tilde{P}_k$ denote any path of length at most $\ell-1$ from $i$ to $j_k$
for $k=1, \ldots, m$. Let $H=\cup_{k=1}^m ( P_k \cup \tilde{P}_k )$ 
and $\calH_{m,\ell}$ denote the family of all possible graphs $H$ with $V(H)\subset [n]$
obtained by the above procedure. 

Note that if there is no subgraph isomorphic to some $H \in \calH_{m,\ell}$ in $G_1^* \vee G_2$,
then for every vertex $i$, there exist at most 
$m-1$ independent $\ell$-paths from $i$ to $m-1$ distinct vertices in
$N^{G_1^* \vee G_2}_{\ell-1}(i)$. 
Hence, our task reduces to proving that with high probability, $G_1^* \vee G_2$ does not contain 
some $H \in \calH_{m,\ell}$ as a subgraph.

We first need a lemma showing that any $H \in \calH_{m,\ell}$ is so ``dense'' that 
it appears as a subgraph in $\calG(n,p)$ with a vanishing small probability. 

\begin{lemma}\label{lmm:e_v_diff}
Fix $m,\ell \ge 1$. For any $H \in \calH_{m,\ell}$, 
$$
| E(H)|  \ge  |V(H)| + m -1.
$$
 \end{lemma}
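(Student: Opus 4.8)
The plan is to recognize the quantity $|E(H)|-|V(H)|+1$ as the cycle rank of $H$ and to exhibit $m$ independent cycles. Since each $P_k$ passes through the common vertex $i$, the graph $H=\bigcup_{k=1}^{m}\bigl(P_k\cup\tilde P_k\bigr)$ is connected, so $|E(H)|-|V(H)|+1$ equals the first Betti number of $H$, i.e. the $\mathbb{F}_2$-dimension of its cycle space $\mathcal{Z}(H)$ (the space of even-degree edge subsets of $H$). It therefore suffices to produce $m$ linearly independent elements of $\mathcal{Z}(H)$.

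The candidates I would take are $z_k:=E(P_k)\,\triangle\,E(\tilde P_k)\in\mathbb{F}_2^{E(H)}$ for $k=1,\dots,m$. Because $P_k$ and $\tilde P_k$ are (simple) paths sharing the endpoint pair $\{i,j_k\}$, their symmetric difference is an even subgraph of $H$, so $z_k\in\mathcal{Z}(H)$. The one structural input I would record is that the edge sets $E(P_1),\dots,E(P_m)$ are pairwise disjoint: the $P_k$ are vertex-disjoint apart from $i$, and the initial edges incident to $i$ in distinct legs are distinct, so no edge is shared. Granting this, for every nonempty $S\subseteq[m]$ the symmetric difference $\triangle_{k\in S}E(P_k)$ is a disjoint union of size exactly $|S|\ell$, whereas $\triangle_{k\in S}E(\tilde P_k)$ has at most $\sum_{k\in S}|E(\tilde P_k)|\le|S|(\ell-1)$ edges. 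If some nontrivial combination $\triangle_{k\in S}z_k$ vanished, then $\triangle_{k\in S}E(P_k)$ and $\triangle_{k\in S}E(\tilde P_k)$ would coincide as subsets of $E(H)$, forcing $|S|\ell\le|S|(\ell-1)$, which is impossible. Hence $z_1,\dots,z_m$ are linearly independent, $\dim\mathcal{Z}(H)\ge m$, and therefore $|E(H)|\ge|V(H)|+m-1$.

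I do not expect a genuine obstacle here; the argument rests on three easy facts: (i) $H$ is connected (everything meets $i$), so cycle rank is $|E|-|V|+1$ rather than $|E|-|V|+c$; (ii) the legs $P_k$ are edge-disjoint, immediate from the starlike structure of $P_1\cup\cdots\cup P_m$; and (iii) the strict inequality $|E(\tilde P_k)|\le\ell-1<\ell=|E(P_k)|$, which holds by the definition of $\calH_{m,\ell}$ and is the real engine of the proof. If one prefers to avoid cycle-space language, the same conclusion follows by fixing a spanning tree $\mathcal{T}$ of $H$ and checking that the $z_k$ stay independent after projection onto the non-tree edges via fundamental cycles, the number of which is $|E(H)|-|V(H)|+1$. (Note the construction is vacuous unless $\ell\ge2$, so the bound $|E(\tilde P_k)|\le\ell-1$ is always meaningful.)
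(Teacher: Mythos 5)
Your proof is correct, and it takes a genuinely different route from the paper's. The paper argues by explicit edge deletion: for each $k$ it locates an edge $e_k \in \tilde P_k \setminus P_k$ incident to the first vertex after which $\tilde P_k$ and $P_k$ coincide, orders the indices by a longest-distance criterion so that the $e_k$ are pairwise distinct and $P_k \cup \tilde P_k$ is still intact when $e_k$ is removed, and then deletes $e_1,\dots,e_m$ one at a time, showing by induction that connectivity is preserved (the two endpoints of $e_k$ remain joined through $i$ inside $P_k\cup\tilde P_k$); connectivity of the pruned graph $H_m$ then gives $|E(H)|-m=|E(H_m)|\ge |V(H_m)|-1=|V(H)|-1$. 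You instead bound the cycle rank directly: connectivity of $H$ (every piece passes through $i$) identifies $|E(H)|-|V(H)|+1$ with $\dim_{\mathbb{F}_2}\mathcal{Z}(H)$, and the $m$ even subgraphs $z_k=E(P_k)\,\triangle\,E(\tilde P_k)$ are linearly independent by your counting argument: edge-disjointness of the legs forces $|\triangle_{k\in S}E(P_k)|=|S|\ell$, while $|\triangle_{k\in S}E(\tilde P_k)|\le |S|(\ell-1)$, so no nontrivial combination can vanish. Both arguments rest on the same two structural facts, connectivity through $i$ and the strict length gap $|E(\tilde P_k)|\le \ell-1<\ell=|E(P_k)|$ (which the paper needs to guarantee $\tilde P_k\setminus P_k\neq\emptyset$), but yours replaces the paper's somewhat delicate ordering-and-induction step with a one-line cardinality comparison in the cycle space, at the modest cost of invoking the Betti-number identity rather than only the bound $|E|\ge|V|-1$ for connected graphs. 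Your parenthetical on $\ell\ge 2$ is also consistent with the statement: for $\ell=1$ the family $\calH_{m,1}$ is empty, so the lemma is vacuous there.
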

\begin{proof}
Recall that $H=\cup_{k=1}^m ( P_k \cup \tilde{P}_k )$, where 
$P_1, \ldots, P_m$ is a set of $m $ (vertex-disjoint except for $i$)
paths of length $\ell$ 
from $i$ to $m$ distinct vertices $j_1, \ldots, j_m$, and 
$\tilde{P}_k$ is a path of length at most $\ell-1$ from $i$ to $j_k$
for $k=1, \ldots, m$. 

Note that we order the vertices and edges in paths starting from $i$. 
For each $k=1, \ldots, m$, let $v_k$ denote the
 first vertex after which $P_k$ and $\tilde{P}_k$ completely conincide, 
and $e_k$ denote the edge incident to $v_k$ 
in $\tilde{P}_k$. Then by definition, $v_k \neq i$ and
$e_k \in \tilde{P}_k \backslash P_k$. 
Let $\text{dist}(u,v)$ denote the 
\emph{longest} distance 
between $u$ and $v$ in $H$,
and $\sigma$ denote any permutation on $[m]$ such that 
$$
\text{dist} \left(i, v_{\sigma(1)}  \right) \ge  \text{dist}\left(i, v_{\sigma(2)} \right)
\cdots \ge \text{dist} \left(i, v_{\sigma(m)} \right).
$$
Without loss of generality, we assume $\sigma=id$, \ie, $\sigma(k)=k$. 
We claim that $e_{j} \notin P_k \cup \tilde{P_k}$ for any 
$1 \le j < k \le m.$ 
In fact, $e_j \notin P_k$, because otherwise $P_j$ and $P_k$ share 
a common vertex $v_j \neq i$, which violates the assumption that 
$P_j$ and $P_k$ are vertex-disjoint except for $i.$
Also, $e_j \notin \tilde{P}_k \backslash P_k$, because otherwise, 
$e_j$ is ordered before $e_k$ in path $\tilde{P}_k$ starting
from $i$, which implies $\text{dist}(i,v_k) > \text{dist}(i,v_j)$
and leads to a contradiction.

Finally, we recursively define $H_0=H$ and $H_{k}$ such that
$V(H_k)= V( H_{k-1})$ and $E(H_k) = E(H_{k-1}) \backslash \{e_k\}$
for $k=1, \ldots, m$. We prove that $H_m$
is connected by induction. For the base case $k=0,$ clearly $H_0=H$
is connected. Suppose $H_{k-1} $ is connected. Since we have shown
that $e_{j} \notin P_k \cup \tilde{P_k}$ for any 
$1 \le j < k \le m$, it follows that 
$ P_k \cup \tilde{P_k} \subset H_{k-1}$.
Note that there is a path through $i$ 
between the two endpoints of $e_k$ in 
$P_k \cup \tilde{P_k}$. Hence, the
two endpoints of $e_k$ are still connected in
$H_k$. Moreover, by the induction hypothesis, $H_{k-1}$
is connected. Therefore, $H_k$ is connected. 
and it follows from induction that $H_m$ is connected. 
Thus, 
$
 | E(H_m) | - | V(H_m) | \ge -1. 
$
Since $| E(H) |= |E(H_m)| +m$ and $ | V(H)| = |V(H_m)|$,
it follows that 
$|E(H)|-|V(H)| \ge m-1.$

\end{proof}

Next we state a lemma which upper bounds the number of isomorphism classes in
$\calH_{m,\ell}.$ This upper bound is by no means tight, but suffices for our purpose. 

\begin{lemma}\label{lmm:H_count}
Fix $m,\ell \ge 1$. 
Denote by $\calU_{m,\ell}$ the set of unlabelled graphs (isomorphism classes) in 
$\calH_{m,\ell}$. Then 
$$
\left| \calU_{m,\ell}  \right| \le  (3\ell)^m 3^{2m^2\ell}.
$$
\end{lemma}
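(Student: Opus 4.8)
The plan is to reduce counting isomorphism classes in $\calU_{m,\ell}$ to a short combinatorial encoding and then count admissible encodings. Observe first that the $m$ paths $P_1,\dots,P_m$ always glue, up to isomorphism, into the \emph{same} graph: the starlike tree $T$ with $m$ arms of length $\ell$, rooted at $i$ with leaves $j_1,\dots,j_m$. So I would fix one canonical copy of $T$ and regard an arbitrary $H\in\calH_{m,\ell}$ as built from $T$ by adjoining, one at a time, the extra paths $\tilde P_1,\dots,\tilde P_m$, where $\tilde P_k$ runs from the root to the $k$-th leaf and has some length $t_k\le\ell-1$. Writing $H_0=T$ and $H_k=H_{k-1}\cup\tilde P_k$, the object $H$ is pinned down up to isomorphism once we know, for each $k$: the length $t_k$; and, traversing $\tilde P_k$ from the root, for each of its $t_k-1$ internal vertices whether it is a brand-new vertex or, if not, which vertex of $V(H_{k-1})$ it coincides with. (The two endpoints of $\tilde P_k$ are already determined by $T$.) Hence $|\calU_{m,\ell}|$ is at most the number of admissible such encodings.

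The count then splits into two factors. The lengths contribute at most $\prod_{k=1}^m(\ell-1)\le(3\ell)^m$. For the internal-vertex data at stage $k$, I would first bound the number of vertices created so far by $|V(H_{k-1})|\le|V(T)|+\sum_{j<k}(t_j-1)\le m\ell+1+m(\ell-1)\le 2m\ell$, since each earlier $\tilde P_j$ contributes at most $t_j-1\le\ell-1$ new vertices; consequently each of the at most $\ell-2$ internal vertices of $\tilde P_k$ has at most $2m\ell+1$ possibilities, giving at most $(2m\ell+1)^{\ell-2}$ choices for $\tilde P_k$ and at most $(2m\ell+1)^{m(\ell-2)}$ over all $k$. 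Combining the two factors and absorbing the polynomial terms into the generous exponent $2m^2\ell$ via a routine computation yields $|\calU_{m,\ell}|\le(3\ell)^m3^{2m^2\ell}$.

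The only step that needs care is the bookkeeping in the second factor: one must check that a run of consecutive ``new'' symbols unambiguously describes a run of fresh distinct vertices (so the reconstruction from the encoding is well defined), and that the running bound $|V(H_{k-1})|\le 2m\ell$ indeed holds even though the earlier paths $\tilde P_j$ may themselves have introduced many new vertices and may have been routed through $T$ and through one another in complicated ways — the key being that, for the encoding and for the count, none of that complexity matters; only the cumulative vertex count does. Everything else is an elementary estimate, and since Lemma~\ref{lmm:H_count} is used only inside a union bound against the exponentially small subgraph-probability supplied by Lemma~\ref{lmm:e_v_diff}, a crude bound of this shape is all that is required; I would not attempt any sharper analysis.
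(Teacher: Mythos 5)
Your encoding idea (build a canonical starlike tree $T$ and describe each extra path $\tilde P_k$ vertex-by-vertex as ``new'' or a pointer to an already-created vertex) is sound as a way to upper bound the number of isomorphism classes, but the quantitative step at the end is not a routine computation --- it is false. Your count is $(\ell-1)^m(2m\ell+1)^{m(\ell-2)}$, and because each coincident internal vertex costs a pointer into a pool of size $\Theta(m\ell)$, this is of order $\ell^{\Theta(m\ell)}$, i.e.\ $3^{\Theta(m\ell\log(m\ell))}$, whereas the lemma claims $3^{O(m^2\ell)}$. For fixed $m$ and large $\ell$ the former exceeds the latter: already for $m=1$, $\ell=10$ one has $(2\ell+1)^{\ell-2}=21^{8}\approx 3.8\times 10^{10}$, so the left side is about $3.4\times 10^{11}$, while $(3\ell)\,3^{2\ell}=30\cdot 3^{20}\approx 1.0\times 10^{11}$; the gap only widens as $\ell$ grows. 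So no choice of ``absorbing the polynomial terms'' recovers the stated bound. The paper's proof avoids exactly this: it encodes how $\tilde P_k$ meets $T$ and the other $\tilde P_j$'s through the pairwise intersections $\tilde P_j\cap\tilde P_k$, which are subgraphs of an $\ell$-path (at most $3^{\ell}$ of them) or of $T$ (at most $3^{m\ell+1}$), so each of the $O(m^2)$ pairs costs only a factor exponential in $\ell$ with an absolute base, giving $3^{O(m^2\ell)}$ overall; the per-vertex ``which old vertex'' pointer, which is the source of your extra $\log\ell$ in the exponent, never appears.

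This weakening is not harmless for the way the lemma is used. In \prettyref{lmm:cycle_bound} and then in the proof of \prettyref{thm:main}, $\ell$ is chosen as in \prettyref{eq:def_ell_1} and can be as large as $\Theta(\log n/\log\log n)$ (when $nps^2=\Theta(\log n)$), while $m=\lceil 2/\epsilon\rceil$ is a constant. There, $(3\ell)^m3^{2m^2\ell}=n^{o(1)}$ is exactly what makes the union bound close at $n^{-3+o(1)}$; your factor $(2m\ell+1)^{m(\ell-2)}=\exp\bigl(\Theta(m\ell\log\ell)\bigr)$ is instead $n^{\Theta(m)}$ in that regime (roughly $n^{m(1/2-\epsilon)}$), which overwhelms the $n^{-2\epsilon m+1}$ decay and breaks the downstream argument. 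To repair your approach you would need to encode the reused vertices in blocks (each maximal reused stretch of $\tilde P_k$ is a subpath of an earlier path), or simply switch to the paper's intersection-as-subgraph-of-a-path bookkeeping, rather than pointing to old vertices one at a time.
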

\begin{proof}
Recall that $H=\cup_{k=1}^m ( P_k \cup \tilde{P}_k )$, where 
$P_1, \ldots, P_m$ is a set of $m $ (vertex-disjoint except for $i$)
paths of length $\ell$ 
from $i$ to $m$ distinct vertices $j_1, \ldots, j_m$, and 
$\tilde{P}_k$ is a path of length at most $\ell-1$ from $i$ to $j_k$
for $k=1, \ldots, m$. 
Let $T = \cup_{k=1}^m P_k$. Then $T$ is a starlike tree rooted at $i$
with $m$ branches as depicted in \prettyref{fig:STexample}.

We fix a sequence of $\{ \ell_1, \ldots, \ell_m\}$ with $1 \le \ell_k \le \ell-1$.
Let $\calU_{\ell_1,\ldots, \ell_m}$ denote all the possible 
unlabelled graphs formed by the union of $T$ and 
$\tilde{P}_k$ of length $\ell_k$ for $k \in [m]$.  
For ease of notation, let $\tilde{P}_0=T$. 
We enumerate $\calP_{\ell_1,\ldots, \ell_m}$ according to the pairwise intersections
 $\tilde{P}_j \cap \tilde{P}_k$ for 
$0 \le j < k \le m$.
Specifically, for any given sequence 
 $\{S_{jk}: 0 \le j <k \le m \}$ 
of unlabelled graphs, we enumerate
 all the possible sequences of $(\tilde{P}_1, \ldots, \tilde{P}_k)$ 
 such that $\tilde{P}_j \cap \tilde{P}_k$ is isomorphic to $S_{jk}$ 
 for $0 \le j < k \le m$. 
Let $\kappa_\ell(S)$ denote the number of possible
 different subgraphs that are isomorphic to $S$
 in an $\ell$-path.
 Recall $\beta(S)$ denote the number of possible different
 subgraphs that are isomorphic to $S$ in $\tilde{P}_0=T$. 

Then across all $1 \le j<k \in [m]$, there are at most 
 $\kappa_{\ell_j}(S) \kappa_{\ell_k}(S)$ ways of 
 intersecting $\tilde{P}_j$ and $\tilde{P}_k$
 such that $\tilde{P}_j \cap \tilde{P}_k$ is isomorphic to $S$. 
 Also, for all $k \in [m]$, there are at most 
 $\beta(S) \kappa_{\ell_k}(S)$ ways of 
 intersecting $\tilde{P}_0$ and $\tilde{P}_k$
 such that $\tilde{P}_0 \cap \tilde{P}_k$ is isomorphic to $S$.
 Hence, the total number of 
 distinct sequences of $(\tilde{P}_1, \ldots, \tilde{P}_k)$ 
 such that $\tilde{P}_j \cap \tilde{P}_k$ isat most the
 the number $n_\ell$ of distinct subgraphs in an $\ell$-path isomorphic to $S_{jk}$ 
 for $0 \le j <k \le m$ is at most 
 $$
 \prod_{1 \le j<k \in [m]} \kappa_{\ell_j} (S_{jk} ) \kappa_{\ell_k} (S_{jk})
 \prod_{ k \in [m]} \beta(S_{0k}) \kappa_{\ell_k}(S_{0k}).
 $$
 Therefore,
 \begin{align*}
 \left|\calU_{\ell_1,\ldots, \ell_m} \right| & \le 
 \sum_{ \{ S_{jk}: 0 \le j < k \le m \}} 
 \prod_{j<k \in [m]} \kappa_{\ell_j} (S_{jk} ) \kappa_{\ell_k} (S_{jk}) \prod_{ k \in [m]} \beta(S_{0k}) \kappa_{\ell_k}(S_{0k}) \\
 & \le  \prod_{ 1 \le j<k \in [m] } \left( \sum_{S_{jk}}   \kappa_{\ell_j} (S_{jk} )  \right) 
  \left( \sum_{S_{jk}} \kappa_{\ell_k} (S_{jk}) \right) 
  \prod_{ k \in [m] } \left( \sum_{S_{0k}}  \beta (S_{0k}) \right) \left( \sum_{S_{0k}} \kappa_{\ell_k} (S_{0k})  \right)
  \\
  & \le \prod_{j<k \in [m] } n_{\ell_j} n_{\ell_k}  \prod_{ k \in [m] } n(T) n_{\ell_k}
  =  \left( n(T) \right)^{m} \prod_{ k \in [m] }  \left( n_{\ell_k} \right)^{m} ,
 \end{align*}
 where the last inequality holds because  
 $\sum_{S} \kappa_{\ell} (S)$ is at most the
 the number $n_\ell$ of distinct subgraphs $S'$ in an $\ell$-path, 
 and $\sum_{S} \beta (S)$ is at most the
 the number $n(T)$ of distinct subgraphs $S'$ in $T.$
 Note that 
 $$
 n_\ell \le \sum_{k=0}^{\ell} \binom{\ell}{k} 2^{k}
 =3^{\ell},
 $$
 because if $|V(S')|=k$, then 
 there are at most $\binom{\ell}{ k}$ different
 choices for $V(S')$ and at most $2^{k}$ choices for 
 determining whether to include the edges induced by $V(S')$
 in an $\ell$-path into $S'.$ 
 Also,
 $$
 n(T) \le \sum_{k=0}^{m\ell+1} \binom{m\ell+1}{k} 2^{k} = 3^{m\ell+1}. 
 $$
 Combinining the last three displayed equations yields that
 $$
 \left|\calU_{\ell_1,\ldots, \ell_m} \right| \le 3^{2m^2\ell + m}.
 $$
Therefore,
$$
\left|\calU \right| = \sum_{ (\ell_1, \ldots, \ell_m):
1 \le \ell_k \le \ell-1}
 \left|\calU_{\ell_1,\ldots, \ell_m} \right|
 \le (3\ell)^m 3^{2m^2\ell}.
$$

\end{proof}

With \prettyref{lmm:e_v_diff} and \prettyref{lmm:H_count}, we are ready to bound
the probability that $\calG(n,p)$ contains some $H \in \calH_{m,\ell}$ as a subgraph. 
\begin{lemma}\label{lmm:cycle_bound}
Suppose $G\sim \calG(n,p)$ with $np \ge 1$ and $m,\ell \ge 1$. Then it holds that 
\begin{align}
\prob{ \exists H \in \calH_{m,\ell}: H \subset G }  \le 
n^{2m\ell-2m+1} p^{ 2m\ell-m} (3\ell)^m 3^{2m^2\ell}.
\end{align}
\begin{proof}
Note that for any $H \in \calH_{m,\ell}$, 
$$
 m \ell+1 \le | V(H)|  \le m\ell+1 + (\ell-2) m = 2m\ell - 2m+1,
$$
where the lower bound holds because $H$ contains a starlike tree with $m\ell+1$ distinct
vertices, and the upper bound holds when $P_k$ and $\tilde{P}_k$ are all vertex-disjoint
except for the source vertex and sink vertices. 

For any given integer $ m \ell +1 \le t \le 2m\ell -2m+1$, define
$$
 \calH_{m,\ell,t} = \left\{ H \in \calH_{m,\ell}: V(H) \subset [n], |V(H)| =t \right\}. 
 $$
and let $\calU_{m,\ell_t}$ denote the number of unlabelled graphs (isomorphism class) in 
$\calH_{m,\ell}$. 
Since $V(H) \subset [n]$ and $|V(H)| =t $, there are at most $n^{t}$ different
vertex labelings for a given unlabelled graph $U \in \calU_{m,\ell,t}$. Hence,
\begin{align}
\left|  \calH_{m,\ell,t} \right| \le \left| \calU_{m,\ell,t} \right| n^t. 
\label{eq:isorm_count}
\end{align}
By the union bound, we have
\begin{align*}
\prob{ \exists  H \in \calH_{m,\ell}: H \subset G } & \le \sum_{t=m\ell+1}^{2m\ell-2m+1}  \sum_{H \in \calH_{m,\ell,t}} \prob{ H \subset G} \\
& = \sum_{t=m\ell+1}^{2m\ell-2m+1} \sum_{H \in \calH_{m,\ell,t} } p^{ | E(H) | } \\
& \overset{(a)}{\le} \sum_{t=m\ell+1}^{2m\ell-2m+1} \sum_{H \in \calH_{m,\ell,t}} p^{ t + m-1} \\
& \overset{(b)}{\le} \sum_{t=m\ell+1}^{2m\ell-2m+1}  n^{t} p^{ t + m-1} \left| \calU_{m,\ell,t} \right|   \\
& \overset{(c)}{\le} n^{2m\ell-2m+1} p^{ 2m\ell-m} \left| \calU_{m,\ell} \right| \\
& \le n^{2m\ell-2m+1} p^{ 2m\ell-m} (3\ell)^m 3^{2m^2\ell},
\end{align*}
where $(a)$ holds in view of \prettyref{lmm:e_v_diff};
$(b)$ holds in view of \prettyref{eq:isorm_count}
$(c)$ holds because $np \ge 1$ and $t \le 2m\ell-2m+1$;
the last inequality holds due to \prettyref{lmm:H_count}.
\end{proof}
\end{lemma}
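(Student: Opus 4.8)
The plan is a first-moment (union bound) estimate over all labeled copies of members of $\calH_{m,\ell}$, feeding in the two structural facts already in hand: the density bound $|E(H)| \ge |V(H)| + m - 1$ from \prettyref{lmm:e_v_diff} and the bound $|\calU_{m,\ell}| \le (3\ell)^m 3^{2m^2\ell}$ on the number of isomorphism classes from \prettyref{lmm:H_count}. First I would write
$$
\prob{ \exists H \in \calH_{m,\ell}: H \subset G } \le \sum_{H} \prob{ H \subset G } = \sum_{H} p^{|E(H)|},
$$
where the sum runs over all $H \in \calH_{m,\ell}$ with $V(H) \subset [n]$. I would then group these labeled copies by isomorphism class: an isomorphism class $U$ on $t$ vertices admits at most $n^t$ labelings inside $[n]$, and by \prettyref{lmm:e_v_diff} each of its copies has at least $t + m - 1$ edges, so since $p \le 1$ each labeled copy contributes at most $p^{t+m-1}$. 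Writing $\calU_{m,\ell,t}$ for the set of isomorphism classes in $\calH_{m,\ell}$ on exactly $t$ vertices, this gives $\prob{\exists H \subset G} \le \sum_{t} n^{t} p^{t+m-1} |\calU_{m,\ell,t}|$.

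Next I would pin down the range of $t$. Every $H \in \calH_{m,\ell}$ contains the starlike tree $\cup_{k} P_k$ on $m\ell + 1$ vertices, and each $\tilde P_k$, being a path of length at most $\ell-1$ between two vertices of that tree, adds at most $\ell - 2$ new vertices; hence $m\ell + 1 \le t \le 2m\ell - 2m + 1$. On this range, using $np \ge 1$,
$$
n^{t} p^{t+m-1} = (np)^{t}\, p^{m-1} \le (np)^{2m\ell-2m+1}\, p^{m-1} = n^{2m\ell-2m+1}\, p^{2m\ell-m}.
$$
Bounding every summand this way and then using $\sum_{t} |\calU_{m,\ell,t}| = |\calU_{m,\ell}|$ (the classes are partitioned by vertex count), I would conclude $\prob{\exists H \subset G} \le n^{2m\ell-2m+1} p^{2m\ell-m} |\calU_{m,\ell}|$, and an application of \prettyref{lmm:H_count} then yields the claimed bound.

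There is no genuine obstacle left, since the two hard structural inputs are already proven; the only points needing care are the two monotonicity directions — replacing $p^{|E(H)|}$ by $p^{t+m-1}$ is legitimate because $p \le 1$ and $|E(H)|$ is bounded \emph{below} by $t+m-1$, while replacing $(np)^{t}$ by $(np)^{2m\ell-2m+1}$ is legitimate because $np \ge 1$ — together with the small bookkeeping point of summing the class counts $|\calU_{m,\ell,t}|$ over $t$ only \emph{after} bounding $n^{t}p^{t+m-1}$ by its maximum over the admissible range, so that the number of distinct values of $t$ never enters the final estimate.
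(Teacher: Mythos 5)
Your proposal is correct and matches the paper's own argument essentially step for step: a union bound over labeled copies, the edge lower bound from \prettyref{lmm:e_v_diff} combined with $p\le 1$, the count of at most $n^t$ labelings per isomorphism class on $t$ vertices, maximizing $(np)^t p^{m-1}$ over $m\ell+1 \le t \le 2m\ell-2m+1$ using $np\ge 1$, and finally \prettyref{lmm:H_count} to bound the total number of isomorphism classes. No gaps; nothing further is needed.
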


\subsection{Completing the Proof of \prettyref{thm:main} }

Recall that the choices of 
$\ell$ in \prettyref{eq:def_ell_1}, $m$ in
\prettyref{eq:def_m}, 
and $\tau$ in \prettyref{eq:def_tau}. In particular,
$$
\left( nps^2 \right)^{\ell} \le n^{1/2-\epsilon}. 
$$

Recall that $G_1^* \wedge G_2 \sim \calG(n,ps^2)$. Under the assumption that $\alpha \ge n^{-1/2+3\epsilon}$, we get that
$$
\alpha (nps^2/2)^{\ell-2} \tau (\tau-m) - m \log \tau \ge 2\log n.
$$
Hence, applying \prettyref{lmm:existence_ell_path}, we conclude that $G_1^* \wedge G_2$ satisfy graph property (iii). Combing this result with \prettyref{lmm:graph_properties}, we get that 
$\prob{\calE_1} \ge 1-o(1)$.

Note that $G_1^* \vee G_2 \sim \calG(n, ps(2-s))$. 
We first apply \prettyref{lmm:subgraph_count_general} to $G_1^* \vee G_2$. 
In view of $nps^2 \ge \log n$ and $n \ge e$, we get that $nps(2-s)\ge \log n \ge 1$ and 
thus
\begin{align*}
& \left( 2+ \frac{8}{nps(2-s)} \right)^{m(\ell-1) } n^{2m\ell+2-m} \left( ps (2-s) \right)^{2m\ell}  \\
& \le 10^{m\ell} n^{2-2\epsilon m } \left( \frac{2-s}{s}\right)^{2ml} 
=n^{-2+o(1)}, 
\end{align*}
where the first inequality holds due to $(nps^2)^\ell \le n^{1/2-\epsilon}$; the last equality holds by our choice of $\ell$ and $m$ and $s=\Theta(1)$. 
Hence, applying \prettyref{lmm:subgraph_count_general} to $G_1^* \vee G_2$, we conclude that 
with high probability, there is no pair of subgraphs $T_1, T_2 \subset G_1^* \vee G_2$ that
are isomorphic to $T$ such that $r(T_1)\neq r(T_2)$ and $L(T_1)=L(T_2).$

Then we apply \prettyref{lmm:cycle_bound} to $G_1^* \vee G_2$. Note that 
\begin{align*}
& n^{2m\ell-2m+1} \left( p s (2-s) \right)^{ 2m\ell-m} (3\ell)^m 3^{2m^2\ell} \\
& \le  n^{m (1-2\epsilon) - m+1} (np)^{-m} \left( \frac{2-s}{s} \right)^{2m\ell -m} (3\ell)^m 3^{2m^2\ell}  \le n^{ -3 + o(1)},
\end{align*}
where the first inequality holds due to $(nps^2)^\ell \le n^{1/2-\epsilon}$;
the last equality holds by our choice of $\ell$ and $m$ and $s=\Theta(1)$. 
Hence, applying \prettyref{lmm:cycle_bound} to $G_1^* \vee G_2$,
we conclude that with high probability, $G_1^* \vee G_2$ does not contain 
any graph $H \in \calH_{m,\ell}$ as a subgraph. By the construction of 
$\calH_{m,\ell}$, it further implies that with high probability, for every
vertex $i$, there exist at most $m-1$ independent paths from $i$
to $m-1$ distinct vertices in $N^{G_1^* \vee G_2}_{\ell-1}$. 

Combining the above two points, we get that $\prob{\calE_2} \to 1$.  
Finally, in view of \prettyref{lmm:low_degree_path}, we get that 
$\prob{\calE_3} \ge 1-o(1)$, completing the proof of \prettyref{thm:main}.

\section{Analysis of \prettyref{alg:graph_matching_ell_hop_witness} in  Dense Graph Regime}

Recall that $\Gamma^k_G(u)$ and $N^G_k(u)$ denotes the set of vertices \emph{at} and \emph{within} distance $k$ from $u$ in graph $G$, respectively, as defined in \prettyref{eq:def_Gamma_neighbor} and \prettyref{eq:def_N_neighbor}. The key is to show that $| N^{G_1^* \wedge G_2}_{d-1} (u) |$ is larger than $| N^{G_1^* \vee G_2}_{d-1} (u)  \cap N^{G_1^* \vee G_2}_{d-1} (v)| $ for $u \neq v$ by a constant factor, so that we can matches two vertices correctly based on the number of common seeded vertices in their two large neighborhoods. 

\begin{proof}[Proof of \prettyref{thm:dense}]
Define event 
$$
\calA = 
\left\{ \left| N^{G_1^* \wedge G_2}_{d-1} (u) \right|  \ge \frac{3}{4} (nps^2)^{d-1} , \; \forall u \right\}.
$$
In view of claim (i) in \prettyref{lmm:bollobas_neighbor_expansion} with
$G= G_1^* \wedge G_2$ and
the fact that $\Gamma_k^G(u) \subset N_k^G(u)$, we get that $\prob{\calA} \ge 1-n^{-10}$.

Define event
$$
\calB = 
\left\{ \left| N^{G_1^* \vee G_2}_{d-1} (u)  \cap N^{G_1^* \vee G_2}_{d-1} (v)  \right| \le \frac{1}{2} (nps^2)^{d-1}, \; \forall u \neq v \right\}. 
$$
Note that due to assumption \prettyref{eq:assumption_b}, 
$$
\frac{1}{2} (nps^2)^{d-1}  \ge 8 n^{2d-3} \left( ps(2-s) \right)^{2d-2}.
$$
Hence, applying claim (ii) in \prettyref{lmm:bollobas_neighbor_expansion} with
$G= G_1^* \vee G_2$, we get that $\prob{\calB} \ge 1-n^{-10}$. 

Recall that $I_0$ is the initial set of seeded vertices. 
Define event 
$$
\calC = \left\{ \left| N^{G_1^* \wedge G_2}_{d-1} (u) \cap I_0 \right|  > \frac{3}{5} (nps^2)^{d-1} \alpha , \; \forall u \right\}.
$$
Since each vertex is seeded independently with probability $\alpha$, 
it follows that 
\begin{align*}
\prob{ \calC^c } & \le \prob{\calA^c} + \prob{ \calC^c \mid \calA} \\
& \le n^{-10} + \sum_{u} \prob{ \left| N^{G_1^* \wedge G_2}_{d-1} (u) \cap I_0 \right|  \le \frac{3}{5} (nps^2)^{d-1} \alpha \mid \calA } \\
&  \le n^{-10} + n \prob{ \Bin \left( 
\left \lceil \frac{3}{4} (nps^2)^{d-1} \right \rceil, \alpha \right) \le \frac{3}{5} (nps^2)^{d-1} \alpha } \\
& \le n^{-10} + n \exp \left( - \frac{3}{200} (nps^2)^{d-1} \alpha  \right) \le 2 n^{-1},
\end{align*}
where the last inequality holds due to assumption \prettyref{eq:assumption_alpha_dense}. 

Similarly, define event
$$
\calD = 
\left\{ \left| N^{G_1^* \vee G_2}_{d-1} (u)  \cap N^{G_1^* \vee G_2}_{d-1} (v) \cap I_0  \right| < \frac{3}{5} (nps^2)^{d-1} \alpha, \; \forall u \neq v \right\}. 
$$
It follows that 
\begin{align*}
\prob{ \calD^c } & \le \prob{\calB^c} + \prob{ \calD^c \mid \calB} \\
& \le n^{-10} + \sum_{u} \prob{ \left| N^{G_1^* \vee G_2}_{d-1} (u)  \cap N^{G_1^* \vee G_2}_{d-1} (v) \cap I_0  \right|  \ge \frac{3}{5} (nps^2)^{d-1} \alpha \mid \calB } \\
&  \le n^{-10} + n \prob{ \Bin \left( 
\left \lceil \frac{1}{2} (nps^2)^{d-1} \right \rceil, \alpha \right) \le \frac{3}{5} (nps^2)^{d-1} \alpha } \\
& \le n^{-10} + n \exp \left( - \frac{1}{150} (nps^2)^{d-1} \alpha  \right) \le 2n^{-1}, 
\end{align*}
where the last inequality holds again due to assumption \prettyref{eq:assumption_alpha_dense}. 
Hence, $\prob{\calC \cap \calD} \ge 1- 4n^{-1}$. 

Finally, since $G^*_1\wedge G_2$ is 
a subgraph of both $G^*_1$ and $G_2$,
it follows that 
$$
N^{G_1^* \wedge G_2}_{d-1} (i_2)
\subset 
\left\{ j \in I_0 : \pi_0(j) \in N_{d-1}^{G_1} \left(\pi^*(i_2) \right), j \in N^{G_2}_{d-1} (i_2)  \right\}.
$$
Similarly, both $G^*_1$ and $G_2$
are subgraphs of $G_1^* \vee G_2$,
it follows that 
$$
\left\{ j \in I_0 : \pi_0(j) \in N_{d-1}^{G_1} \left( i_1 \right) , j \in N^{G_2}_{d-1} (i_2)  \right\}
\subset 
N^{G_1^* \vee G_2}_{d-1} \left( (\pi^*)^{-1}(i_1) \right)  \cap N^{G_1^* \vee G_2}_{d-1} (i_2).
$$
Thus, on event
$\calC \cap \calD$, for every vertex $i_2 \in V(G_2) \setminus I_0$, 
\begin{align*}
w_{i_1 , i_2}
\begin{cases}
   > \frac{3}{5} (nps^2)^{d-1} \alpha & \text{ if } i_1 = \pi^*(i_2) \\
   < \frac{3}{5} (nps^2)^{d-1} \alpha & \text{ o.w. }.
\end{cases}
\end{align*}
Hence, \prettyref{alg:graph_matching_ell_hop_witness}
outputs $\hat{\pi} =\pi^*$ on event $\calC \cap \calD$. 
\end{proof}

\section{Analysis of \prettyref{alg:graph_matching_ell_hop_witness_sparse} in Sparse Graph Regime}

Recall that we assume $\pi^*=id$ without loss of generality in the analysis. Before proving \prettyref{thm:sparse_improved}, we present two key lemmas.

The first lemma will be used later to conclude that 
the test statistic $Z_{u,u}$ given in \prettyref{eq:Z_sparse} is large for all high degree vertices $u$.  

\begin{lemma}\label{lmm:signal_sparse}
Suppose $G \sim \calG(n,p)$ with $\log n \le np \le n^{\epsilon}$,
and each vertex is included in $I_0$ with probability $\alpha$. 
Recall that $\ell$ and $\eta$ are given in \prettyref{eq:def_ell}
and \prettyref{eq:def_eta}, respectively. Assume $\eta \ge 4 \log n$.  Let $G\backslash S$ denote the graph $G$ with set of vertices $S$ removed. 
Then with probability at least $1-n^{-1+o(1)}$,
$$
\sum_{j \in \Gamma_1^G(i) }  \indc{ | \Gamma_{\ell}^{ G \backslash S } (j) 
\cap I_0 | \ge \eta } 
\ge d_i -1, \quad \forall S \text{ s.t. } 
i\in S, |S| \le 3. 
$$
\end{lemma}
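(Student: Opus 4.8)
The plan is to split Lemma~\ref{lmm:signal_sparse} into a statement about the graph $G$ alone and a Chernoff estimate for the seed set $I_0$: on the graph side, that for every vertex $i$ all but at most one neighbour $j$ has a depth-$\ell$ neighbourhood that stays of size $\asymp (np)^{\ell}$ even after deleting the $\le 3$ vertices of $S$; on the seed side, that any such large set contains $\ge\eta$ seeds with overwhelming probability, because $(np)^{\ell}\alpha\gg\eta$ for the choices \prettyref{eq:def_ell} and \prettyref{eq:def_eta}.

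First I would invoke the neighbourhood-expansion estimates of \prettyref{app:neighborhood_expansion}, in a robustified form of \prettyref{prop:tree_process}, to produce an event $\calG_{0}$ with $\Prob\{\calG_{0}\}\ge 1-n^{-1+o(1)}$ on which the following holds: writing $\tau=np/\log(np)$ and $T=\tfrac12\lceil\tau\rceil^{\ell}$, for every vertex $i$ and every $S$ with $i\in S$, $|S|\le 3$, at most one neighbour $j\in\Gamma_{1}^{G}(i)$ fails $|\Gamma_{\ell}^{G\setminus S}(j)|\ge T$. The construction grows, from each non-exceptional neighbour $j$, a depth-$\ell$ tree keeping exactly $\lceil\tau\rceil$ children at every explored node, so that it is balanced and removing the at-most-two extra vertices of $S$ destroys at most a $2/\tau\le\tfrac12$ fraction of its depth-$\ell$ leaves, leaving $\ge T$ of them. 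That at most one neighbour of $i$ is exceptional is exactly the at-most-one-low-degree-child phenomenon behind \prettyref{prop:tree_process}: a union bound over unordered pairs of neighbours of a common vertex costs at most $n^{3}p^{2}\bigl(\Prob\{\Bin(n,p)<\tau\}\bigr)^{2}\le (np)^{2}n^{-1+o(1)}=n^{-1+2\epsilon+o(1)}=o(1)$, the crucial inputs being that the choice $\tau=np/\log(np)$ makes $\Prob\{\Bin(n,p)<\tau\}=n^{-1+o(1)}$ and that $\epsilon<1/2$ makes the exponent negative; depletion along the exploration is negligible since the explored vertex set has size $O((np)^{\ell})=O(n^{1-\epsilon})=o(n)$.

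Next I would record the arithmetic that $T\alpha$ dominates $\eta$, and then handle the seeds. Put $t=\log(np)/\log n\in(0,\epsilon]$. From \prettyref{eq:def_ell}, $\ell t=(1-\epsilon)-t\{(1-\epsilon)/t\}\ge 1-2\epsilon$, so $(np)^{\ell}=n^{\ell t}\ge n^{1-2\epsilon}$, with the gap $\ell t-(1-2\epsilon)$ a positive constant when $np$ is polynomial and equal to $\epsilon-o(1)$ when $np$ is sub-polynomial; in both cases $(np)^{\ell}/n^{1-2\epsilon}\to\infty$. Since $\ell\le(1-\epsilon)\log n/\log\log n$, every factor of the form $c^{\ell}$ with $c$ constant, and $(\log(np))^{\ell}$, is $n^{o(1)}$; in particular $4^{2\ell+2}=n^{o(1)}$ and $T\ge(np)^{\ell}/n^{o(1)}$, whence $T\alpha/\eta=T/(4^{2\ell+2}n^{1-2\epsilon})\to\infty$, so that (using the hypothesis $\eta\ge 4\log n$) $T\alpha\ge 40\log n$ for all large $n$. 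Now condition on $\calG_{0}$, which depends only on $G$, and expose $I_{0}$: for each fixed $i$, each admissible $S$, and each non-exceptional $j\in\Gamma_1^G(i)$, the set $\Gamma_{\ell}^{G\setminus S}(j)$ has $\ge T$ elements, each in $I_{0}$ independently with probability $\alpha$, so the multiplicative Chernoff bound gives $\Prob\{|\Gamma_{\ell}^{G\setminus S}(j)\cap I_{0}|<\eta\}\le\Prob\{\Bin(T,\alpha)<\tfrac12 T\alpha\}\le e^{-T\alpha/8}\le n^{-5}$. A union bound over the $\le n$ choices of $i$, the $\le n^{2}$ choices of $S\setminus\{i\}$, and the $\le n$ choices of $j$ shows that, on $\calG_{0}$ and with probability $1-n^{-1}$, every non-exceptional neighbour satisfies $|\Gamma_{\ell}^{G\setminus S}(j)\cap I_{0}|\ge\eta$ for all admissible $(i,S)$ at once; together with the at-most-one-exceptional-neighbour clause this is exactly $\sum_{j\in\Gamma_{1}^{G}(i)}\indc{|\Gamma_{\ell}^{G\setminus S}(j)\cap I_{0}|\ge\eta}\ge d_{i}-1$, with total failure probability $n^{-1+o(1)}$.

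The hard part is the robust event $\calG_{0}$: one must run the tree process of \prettyref{app:neighborhood_expansion} so that it simultaneously tolerates an arbitrary bounded deletion set $S$, retains the "at most one exceptional branch per vertex" conclusion, and loses only an $n^{o(1)}$ factor off $(np)^{\ell}$ in the surviving leaf count. Pruning each explored node down to $\lceil\tau\rceil$ children keeps the skeleton balanced so that the deletion of $S$ is harmless, but one still has to control the depletion and the single low-degree branch all the way down to depth $\ell$, and verify that the pairwise union bound over neighbours of a common vertex stays $o(1)$ — which is precisely where the hypothesis $\epsilon<1/2$ (a fortiori $\epsilon<1/6$) enters.
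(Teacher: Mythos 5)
Your overall architecture (a purely graph-theoretic event giving every vertex at most one ``bad'' neighbor whose depth-$\ell$ neighborhood in $G\setminus S$ is small, followed by a Chernoff bound on the seeds and a union bound using $\eta \ge 4\log n$) is the same as the paper's, and your seed-side argument and the pairwise union bound showing at most one low-degree neighbor per vertex are both fine. The genuine gap is in your mechanism for the graph event $\calG_0$: the ``balanced tree keeping exactly $\lceil\tau\rceil$ children at every explored node'' need not exist. Per-node regularity is a much stronger requirement than the level-size growth you actually need, and it fails in the regime $np$ close to $\log n$: with $\tau = np/\log(np)$, the number of vertices of degree below $\tau$ is of order $n^{\Theta(1/\log\log n)}$ (not just a few), and each such vertex lies within distance $\ell$ of roughly $(np)^{\ell} \approx n^{1-\epsilon}$ roots, so with high probability a polynomially large number of roots $j$ have a low-degree vertex inside their depth-$\ell$ exploration. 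For those roots your $\lceil\tau\rceil$-ary pruned tree cannot be built, and with it goes the claim that deleting the $\le 2$ remaining vertices of $S$ destroys at most a $2/\tau$ fraction of the depth-$\ell$ leaves; a union bound over roots cannot rescue the per-node requirement. You flag this as ``the hard part,'' but the construction you propose does not resolve it, and your pairwise low-degree bound only controls first-level failures at the neighbors of $i$, not deep failures inside their trees.

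The paper sidesteps the deletion-robustness issue entirely, and this is the fix you need: since $S$ is a fixed set of at most $3$ vertices, $G\setminus S \sim \calG(n-|S|,p)$, and the level-wise concentration of \prettyref{lmm:neighbor_grow_sparse} / \prettyref{cor:neighbor_grow_sparse} (a conditional binomial argument on whole levels, which is immune to individual low-degree vertices deep in the exploration) gives $|\Gamma_{\ell}^{G\setminus S}(j)| \ge \tau (np/2)^{\ell-1}$ for every $j$ whose degree in $G\setminus S$ is at least $\tau$, with failure probability $\exp(-\Omega((np)^2/\log(np)))$ per root; this is superpolynomially small, so one can union over all $j$ and all $O(n^3)$ choices of $S$ directly (event $\calA$ in the paper), and then combine with the max-degree event and the at-most-one-low-degree-neighbor event (\prettyref{lmm:low_degree_children}) before exposing $I_0$. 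Note also that the paper's lower bound $\tau(np/2)^{\ell-1}$ loses only a single factor of $\log(np)$, whereas your $T = \tfrac12\lceil\tau\rceil^{\ell}$ loses $(\log(np))^{\ell}$; this still works out to $n^{o(1)}$ and your comparison $T\alpha \gg \eta$ is essentially correct, but the uniformity of the margin $(np)^{\ell}/n^{1-2\epsilon}$ over all admissible sequences $np$ deserves a more careful statement than the one-line parenthetical you give.
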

\begin{proof}
For every vertex $i$ and its neighbor $j \in \Gamma_1^G(i)$, define 
$$
a_{ij}  = \indc{ | \Gamma_{\ell}^{ G\backslash S } (j) | \ge 4 \eta  /\alpha }.
$$
and
$$
b_{ij}  = \indc{ | \Gamma_{\ell}^{ G \backslash S } (j) 
\cap I_0 | \ge \eta }.
$$
Define event
$$
\calA =  \left\{  |\Gamma_\ell^{G \backslash S} (j)| \ge   \frac{(np)^\ell }{2^{\ell-1} \log (np) }, \; \forall S \text{ s.t. } |S| \le 3,\;  \forall j \text{ s.t. } \frac{np}{\log (np) } \le |\Gamma_1^{G \backslash S} (j)| \le 4np \right\}.
$$
Note that $G\backslash S \sim \calG(n-|S|,p)$ and 
$(4np)^\ell =o(n)$. 
Applying \prettyref{cor:neighbor_grow_sparse} together with
union bounds, we
get that 
$$
\prob{\calA} \ge 1- n |S| 
\exp \left\{ -\Omega\left((np)^2/\log (np) \right) \right\} \ge 1-n^{\omega(1)}. 
$$ 
Define event $\calB$ such that for every vertex $i$, there is at most $1$ neighbor $j$ in $G$ such that 
$|\Gamma_1^{G \backslash S} (j) | \le (np)/\log (np)$.
Recall $\calE$ is the event that the maximum degree in $G$ is at most $4np$. In view of \prettyref{lmm:low_degree_children}, 
we have that $\prob{\calB \cap \calE|} \ge 1-n^{-1+o(1)}$. 

Recall that $\ell = \lfloor \frac{
(1-\epsilon)\log n}{\log (np) } \rfloor$ and $\eta = 4^{2\ell+2} n^{1-2\epsilon} \alpha$. Then for sufficiently large $n$, 
$$
\frac{(np)^\ell }{2^{\ell-1} \log (np) }
\ge  \frac{4 \eta}{\alpha} . 
$$
Hence, on event $\calA \cap \calB \cap \calE$,
$$
\sum_{j \in \Gamma_1^G(i) } a_{ij} \ge d_i -1, \quad \forall i. 
$$

Let 
$$
\calX = \cup_{i,j} \left( \left\{ a_{ij}=1   \right\}
\cap \left\{ b_{ij} =0 \right\} \right)
$$
Then on event $\calX^c$, for all $i,j$ such that $a_{ij}=1$, 
$b_{ij}=1$; thus $a_{ij} \le b_{ij}$ for all $i,j$. 
Hence, on event $\calA \cap \calB \cap \calE \cap \calX^c$,
we have
$$
\sum_{j \in \Gamma_1^G(i) } b_{ij} \ge d_i -1, \quad \forall i.
$$

It remains to show $\prob{ \calA \cap \calB \cap \calE \cap \calX^c} \ge 1-n^{-1+o(1)}$, which further reduces to proving  $\prob{\calX}\le n^{-1+o(1)}$ by the union bound. 
Note that 
\begin{align*}
\prob{a_{ij} =1, b_{ij} =0  }
& \le \prob{b_{ij} =0 \mid a_{ij} =1} \\
& \le \prob{ \Bin \left( \lfloor  4 \eta /\alpha \rfloor,  \alpha  \right)  \le \eta } \\
& \le e^{ - \eta },
\end{align*}
where the last inequality follows from the Binomial tail bound~\prettyref{eq:binomial_lower}. 
By the union bound, we have
\begin{align*}
\prob{\calX} \le \sum_{i,j}
\prob{ a_{ij} =1, b_{ij} =0 }
\le n^2 e^{ - \eta } \le n^{-2},
\end{align*}
where the last inequality holds due to the assumption that
$\eta \ge 4\log n$. 


\end{proof}

The second lemma is useful to conclude that 
the test statistic $Z_{u,v}$ given in \prettyref{eq:Z_sparse} is small for all distinct vertices $u,v$.

\begin{lemma}\label{lmm:noise_sparse}
Assume the same setup as \prettyref{lmm:signal_sparse}. 
With probability at least $1-4/n$, for all distinct $u,v$, there exists
a constant $C$ depending only on $\epsilon$ such that 
$$
\sum_{i \in \Gamma^G_1(u) } \sum_{j \in \Gamma^G_1(v) }
\indc{| N_\ell^{G \backslash\{u,v\}}   (i) \cap 
N_\ell^{G \backslash\{u,v\} }(j) \cap I_0| \ge \eta   
} \le C. 
$$
\end{lemma}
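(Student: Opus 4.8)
The plan is to bound $Z_{u,v}$ for $u\neq v$ by a union bound over pairs of neighbors $(i,j)$, showing that for each such pair the probability that $w^{u,v}_{i,j}\ge\eta$ is tiny, and then control the number of neighbors via the max-degree event. The key structural observation is the one flagged in the paper: after deleting $u$ and $v$ from $G$, the vertices $i\in\Gamma_1^G(u)$ and $j\in\Gamma_1^G(v)$ that are \emph{not} common neighbors of $u,v$ become far apart in $G\backslash\{u,v\}$ (their distance grows by at least $2$ compared to the path through $u$ or $v$), so their $\ell$-neighborhoods $N_\ell^{G\backslash\{u,v\}}(i)$ and $N_\ell^{G\backslash\{u,v\}}(j)$ have small intersection. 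Quantitatively, by the neighborhood-growth estimates (the same ones invoked for \prettyref{lmm:signal_sparse} and \prettyref{lmm:bollobas_neighbor_expansion}, adapted to $G\backslash\{u,v\}\sim\calG(n-2,p)$), one has $|N_\ell^{G\backslash\{u,v\}}(i)|\le C'(np)^\ell$ with high probability, and for $i,j$ at distance larger than some threshold the intersection is of order $(np)^{2\ell}/n$; since $\ell=\lfloor(1-\epsilon)\log n/\log(np)\rfloor$ gives $(np)^{2\ell}\le n^{2-2\epsilon}$, the expected number of seeds in the intersection is $O(n^{1-2\epsilon}\alpha)$, which is a constant factor below $\eta=4^{2\ell+2}n^{1-2\epsilon}\alpha$.

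First I would condition on a high-probability ``good'' event $\calA$ asserting: (a) the maximum degree of $G$ is at most $4np$; (b) for every pair $u\neq v$, the number of common neighbors is $O(1)$ (true since $np\le n^\epsilon$ with $\epsilon<1/6$, so expected common-neighbor count is $np^2\cdot n = (np)^2/n\to 0$, and a union bound controls the max); and (c) for every vertex $i$ and every small deletion set $S$ with $|S|\le 3$, $|N_\ell^{G\backslash S}(i)|\le C'(np)^\ell \le C' n^{1-\epsilon}$. Using the results cited earlier (in particular \prettyref{cor:neighbor_grow_sparse} and the max-degree bound underlying \prettyref{lmm:signal_sparse}), $\prob{\calA}\ge 1-n^{-1+o(1)}$. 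On $\calA$, for a fixed pair $u\neq v$, the contribution to $Z_{u,v}$ from the $O(1)$ common-neighbor pairs is itself $O(1)$ deterministically (bound the indicator by $1$). For the remaining pairs $(i,j)$ with $i\in\Gamma_1^G(u)\setminus\Gamma_1^G(v)$, $j\in\Gamma_1^G(v)\setminus\Gamma_1^G(u)$, the minimum in $w^{u,v}_{i,j}$ is at most the value obtained by choosing $x=v,y=u$, so it suffices to bound $|N_\ell^{G\backslash\{u,v\}}(i)\cap N_\ell^{G\backslash\{u,v\}}(j)\cap I_0|$.

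The main step is then: for such a non-common pair $(i,j)$, with high probability over the graph $|N_\ell^{G\backslash\{u,v\}}(i)\cap N_\ell^{G\backslash\{u,v\}}(j)|\le \tfrac{1}{2}\eta/\alpha$, and conditionally on that, the seed count $\Bin(\lfloor\tfrac12\eta/\alpha\rfloor,\alpha)$ exceeds $\eta$ with probability at most $e^{-c\eta}\le e^{-c\cdot 4\log n}= n^{-\Omega(1)}$ by the Chernoff bound \prettyref{eq:binomial_lower}. Combining: for a fixed pair $(i,j)$, $\prob{w^{u,v}_{i,j}\ge\eta}$ is polynomially small, with exponent we can make as large as we like by absorbing constants (we need it below $n^{-5}$ or so to beat the $\le n^2\cdot(4np)^2$ pairs $(u,v,i,j)$). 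A union bound over all $u\neq v$ and all $(i,j)$ with $i\in\Gamma_1^G(u)$, $j\in\Gamma_1^G(v)$ — at most $n^2(4np)^2 = O(n^{2+2\epsilon})$ of them on $\calA$ — then shows that with probability $1-n^{-\Omega(1)}$, \emph{every} non-common pair contributes $0$ to $Z_{u,v}$, so $Z_{u,v}\le |\Gamma_1^G(u)\cap\Gamma_1^G(v)|= O(1)=:C$, with $C$ depending only on $\epsilon$ (through the max common-neighbor bound). Intersecting with $\calA$ and $\prob{\calA^c}\le n^{-1+o(1)}$, we reach the claimed $1-4/n$ (after adjusting constants).

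The hard part will be making the geometric claim precise: that after deleting $u,v$ the non-common neighbors $i,j$ really are far enough apart that their $\ell$-balls barely intersect, \emph{uniformly} over all pairs and with a strong enough probability bound to survive the union bound over $O(n^{2+2\epsilon})$ quadruples. Concretely, I would reduce this to a second-moment / direct subgraph-counting argument: if $|N_\ell^{G\backslash\{u,v\}}(i)\cap N_\ell^{G\backslash\{u,v\}}(j)|>\tfrac12\eta/\alpha = 2\cdot 4^{2\ell+1}n^{1-2\epsilon}$, then $G\backslash\{u,v\}$ contains a ``theta-like'' configuration (two short paths from a common vertex $w$ to $i$ and to $j$, with $w$ itself reachable within $\ell$ from both), and the expected number of such configurations is $O((np)^{2\ell}/n)=O(n^{1-2\epsilon-\text{(something)}})$, which is much smaller than the threshold $n^{1-2\epsilon}$; a Markov/Chernoff bound on this count, together with the deterministic bound $|N_\ell^{G\backslash\{u,v\}}(i)|\le C'(np)^\ell$ on $\calA$, then gives the needed per-pair estimate with exponent growing in $\ell$ (hence in $\log n$), comfortably beating the polynomial number of terms. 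The bookkeeping on the exponents — verifying that $4^{2\ell+2}$ in the definition of $\eta$ is exactly the slack needed to absorb the $2^{\ell}$-type factors from peeling the neighborhood growth layer by layer, and that $\epsilon<1/6$ is what keeps the common-neighbor count $O(1)$ — is the delicate part, but it is routine once the configuration-counting lemma is in place.
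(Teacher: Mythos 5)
There is a genuine gap, and it is exactly at the place you flag as ``the hard part'': the geometric claim that deleting $u,v$ makes non-common neighbors $i\in\Gamma_1^G(u)$, $j\in\Gamma_1^G(v)$ far apart is false, and with it the whole union-bound-over-quadruples strategy collapses. Removing $u$ and $v$ only destroys paths through those two vertices; $i$ and $j$ can perfectly well be adjacent to each other (or joined by a short path avoiding $u,v$), in which case $N_\ell^{G\backslash\{u,v\}}(i)\cap N_\ell^{G\backslash\{u,v\}}(j)$ contains an $(\ell-1)$-ball of size roughly $(np)^{\ell-1}$, which over most of the allowed range of $np$ exceeds $\eta/\alpha = 4^{2\ell+2}n^{1-2\epsilon}$ (note $4^{2\ell}=n^{o(1)}$ when $np$ is polylogarithmic). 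So the per-quadruple probability that the indicator fires is at least of order $p$ (the probability that $i\sim j$), not $n^{-5}$, and it cannot be made ``as large as we like by absorbing constants''. Worse, with high probability there \emph{do} exist many $3$-paths $u-i-j-v$, so the conclusion you want from the union bound --- that every non-common pair contributes $0$ to $Z_{u,v}$ --- is simply not true; the lemma only asserts that the number of contributing pairs is $O(1)$ for each $(u,v)$. Your fallback sketch (count ``theta-like'' configurations and apply Markov) also does not close the gap: the expected number of contributing pairs for a fixed $(u,v)$ is about $p^2\cdot\#\{(i,j):\text{large overlap}\}\le 2p^2n^{1+4\epsilon}=O(n^{-1+6\epsilon})$, so Markov gives a failure probability of only $O(n^{-1+6\epsilon})$ per pair, which does not survive the union bound over the $n^2$ pairs $(u,v)$; and Chernoff is not directly applicable because the summands $b_{ij}X_iX_{n+j}$ share edge variables.

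The paper's proof supplies precisely the two ingredients your sketch is missing. First, \prettyref{lmm:neighbor_intersect_small} is used to show that for every vertex the number of ``bad partners'' $j$ with $|N_\ell^G(i)\cap N_\ell^G(j)|\ge \eta/(4\alpha)$ is at most $2n^{4\epsilon}$ (close pairs with huge overlap are allowed to exist; only their number per vertex is controlled), and a Binomial tail bound transfers this to the seeded counts $a_{ij}$ and then, by monotonicity, to $b_{ij}$ defined on $G\backslash\{u,v\}$. Second, and crucially, since $\{b_{ij}\}$ depends only on $G\backslash\{u,v\}$, it is independent of the edge indicators $X_i=G(u,i)$, $X_{n+j}=G(v,j)$, so $Z_{u,v}\le R_{u,v}=\sum_{i,j}b_{ij}X_iX_{n+j}$ is a degree-$2$ polynomial in independent Bernoulli variables with $\mu_0\le 2n^{-1+6\epsilon}$ and $\mu_1\le 2n^{-1+5\epsilon}$; Vu's concentration inequality for multivariate polynomials then gives $\prob{R_{u,v}\ge C}\le n^{-3}$, which is what makes the union bound over the $n^2$ pairs $(u,v)$ (not over quadruples) go through. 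Any correct repair of your argument would have to reproduce this step in some form, e.g.\ by a bespoke higher-moment or counting argument for $e_b(A,B)$ with $A=\Gamma_1(u)$, $B=\Gamma_1(v)$, rather than a per-quadruple union bound.
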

\begin{proof}
For two vertices $i,j$, define
$$
c_{ij} = \indc{ | N_{\ell}^{G} (i) \cap 
N_{\ell}^{G} (j)| \ge \eta/(4\alpha) }
$$
and an event
$$
\calC= \left\{ \max_i \sum_j c_{ij} \le 2n^{4\epsilon} \right\}
\cap \left\{ \max_j \sum_i c_{ij} \le 2n^{4\epsilon} \right\}
$$
In view of \prettyref{lmm:neighbor_intersect_small},
$\prob{\calC} \ge 1-2/n$.

Define  
$$
a_{ij} = \indc{ | N_{\ell}^{G} (i) \cap 
N_{\ell}^{G} (j) \cap I_0|   \ge \eta }.
$$
and an event 
$$
\calA= \left\{ \max_i \sum_j a_{ij} \le 2n^{4\epsilon} \right\}
\cap \left\{ \max_j \sum_i a_{ij} \le 2n^{4\epsilon} \right\}
$$

Moreover, let 
$$
\calY = \cup_{i,j} \left[ \left\{ c_{ij}=0   \right\}
\cap \left\{ a_{ij} =1 \right\} \right]
$$
Then on $\calY^c$, for all $i,j$ such that $c_{ij}=0$, 
$a_{ij}=0$; thus $a_{ij} \le c_{ij}$ for all $i,j$. 
Hence, $\calC \cap \calY^c \subset \calA$ and thus
$$
\prob{\calA} \ge \prob{ \calC \cap \calY^c}
\ge \prob{ \calC } - \prob{\calY }. 
$$
Note that 
\begin{align*}
\prob{c_{ij} =0, a_{ij} =1  }
& \le \prob{a_{ij} =1 \mid c_{ij} =0} \\
& \le \prob{ \Bin \left( \lfloor   \eta / 4\alpha \rfloor,  \alpha  \right)  \ge \eta } \\
& \le e^{ - 2\eta }
\end{align*}
By the union bound, we have
\begin{align*}
\prob{\calY} \le \sum_{i,j}
\prob{ c_{ij} =0, a_{ij} =1 }
\le n^2 e^{ - 2\eta } \le n^{-6},
\end{align*}
where the last inequality follows from the 
assumption that $\eta \ge 4 \log n$. 
Thus $\prob{\calA} \ge 1-3/n$.

Fix a pair of vertices $u \neq v$ in the sequel, and 
let 
$$
b_{ij} = \indc{| N_\ell^{G \backslash\{u,v\}}   (i) \cap 
N_\ell^{G \backslash\{u,v\} }(j) \cap I_0| \ge \eta   
}
$$
and
$$
\calB_{u,v} = \left\{ \max_i \sum_j b_{ij} \le 2n^{4\epsilon} \right\}
\cap \left\{ \max_j \sum_i b_{ij} \le 2n^{4\epsilon} \right\}
$$
Then by construction, 
$b_{ij} \le a_{ij}$ and thus
$\calA \subset \calB_{u,v}$.

Let $X_i = G(u,i)$ for $i \in [n]$ 
and $X_{n+j}=G (v, j)$ for $j \in [n]$. 
Define
$$
R_{u,v} = \sum_{i ,j \in [n] \backslash \{u, v\} } b_{ij} X_i X_{n+j},
$$
which is a degree-2 polynomial of $X_i$'s. 
Note that $\{b_{ij}; i,j \in [n] \backslash \{u, v\} \}$ only depends on $G\backslash \{u,v\}$
and hence is independent from
$X_i$'s. Moreover, $X_i$'s are i.i.d.\ $\Bern(p)$.

We condition on $\{b_{ij}\}$ such that event $\calB_{u,v}$ holds. 
Let  
$$
\mu_0= \expect{R_{u,v} \mid b } = p^2\sum_{ij} b_{ij} \le 2p^2n^{1+4\epsilon} \le 2 n^{-1+6\epsilon}.
$$
and
$$
\mu_1= 
\max\left\{ \max_i \expect{ \sum_j b_{ij} X_j \mid b},
\max_j \expect{ \sum_i b_{ij} X_i  \mid b} \right\}
\le 2p n^{4\epsilon} \le 2 n^{-1+5\epsilon}.
$$

By a concentration inequality for multivariate polynomials~\cite[Corollary 4.9]{Vu2002}, there exists a constant $C>0$ depending only on $\epsilon$ such that
\begin{align*}
\prob{  R_{u,v} \ge C \mid b}
\le  n^{-3} .
\end{align*}
Thus $\prob{ R_{u,v} \ge C \mid \calB_{u,v} } \le n^{-3}.$
Define event 
$
\calR_{u,v} =
\left\{ R_{u,v} \le C  \right\}
$
and 
$\calR=\cap_{u \neq v} \calR_{u,v}$.
It follows that  
$$
\prob{ \calR_{u,v} \cap \calB_{u,v}} 
\le \prob{ \calR_{u,v} \mid \calB_{u,v} }
\le   n^{-3} . 
$$
Since $\calA \subset \calB_{u,v}$, it 
further follows that 
$
\prob{ \calR_{u,v} \cap \calA }
\le  n^{-3} . 
$
By a union bound over $u,v$, we have
$
\prob{ \calR \cap \calA }
\le  n^{-1}. 
$
Hence, 
$
\prob{ \calR }
\le \prob{\calR \cap \calA} + \prob{ \calA^c }
\le  4/n. 
$

\end{proof}

With \prettyref{lmm:signal_sparse} and \prettyref{lmm:noise_sparse}, 
we are ready to finish the proof of
\prettyref{thm:sparse_improved}.

\begin{proof}[Proof of \prettyref{thm:sparse_improved}]

Recall that $\tau$ is given in \prettyref{eq:def_tau} and the definition of high-degree vertices. 
We first prove  that \prettyref{alg:graph_matching_ell_hop_witness} correctly matches the high-degree
vertices in $G_1^* \wedge G_2$ with high probability. 

Recall the definition of $Z$ give in \prettyref{eq:Z_sparse}.
Applying \prettyref{lmm:signal_sparse} with $G= G_1^*\wedge G_2$, we get that with high probability, 
for all high-degree vertices $u$, 
$$
Z_{u,u} \ge \tau -1 = \frac{nps^2}{\log (nps^2) } -1 . 
$$

Moreover, by definition, 
\begin{align*}
w^{u,v}_{i,j} 
& \le \left| \left\{ k \in I_0: \pi_0(k) \in N^{G_{1} \backslash \{ u, v\} }_{\ell} (i), \;
k \in N^{G_{2} \backslash \{u,v\} }_{\ell} (j) \right\} \right| \\
 & \le
 \left| N^{G_1^* \vee G_2 \backslash \{ u, v\} }_{\ell} (i)
 \cap  N^{G_{1}^* \vee G_2 \backslash \{u,v\} }_{\ell} (j) \cap I_0   \right|.
\end{align*}
Applying \prettyref{lmm:noise_sparse} with $G= G_1^*\vee G_2$, we get that with high probability, 
$$
Z_{u,v} \le C, \quad \forall u \neq v
$$
for a constant $C>0$ only depending on $\epsilon$. 
Since for sufficiently large $n$, $\tau \ge C+1$, it follows
that \prettyref{alg:graph_matching_ell_hop_witness} correctly
matches all high-degree vertices with high probability.

The proof of correctness for matching low-degree vertices is the same as \prettyref{alg:graph_matching_ell_hop}
and thus omitted. 

\end{proof}

\begin{appendices}

\section{Proof of \prettyref{thm:converse}}\label{app:converse}
\begin{proof}[Proof of \prettyref{thm:converse}]
Suppose $nps^2-\log n=c$ for $c<+\infty$. 
Since $G_1^* \wedge G_2 \sim \calG(n, ps^2)$, 
classical random graph theory shows that 
the distribution of the number of isolated vertices in $G_1^* \wedge G_2$ converges to $\Pois(e^{-c})$, see, \eg, \cite[Theorem 3.1]{bollobas1998random}.
Let $\calF_1$ denote the event that there are at least two isolated vertices in $G_1^* \wedge G_2$. 
Then $\prob{\calF_1} =\Omega(1)$. 

Let $\calF_2$ denote the event that there are at least two isolated vertices that are unseeded in $G_1^* \wedge G_2$. Since each vertex is seeded  with probability $\alpha$ independently across different vertices and from the graphs $G_1$ and $G_2$, it follows that 
$\prob{F_2} \ge \prob{F_1} (1-\alpha)^2=\Omega
\left((1-\alpha)^2 \right) $. 

Since the prior distribution of $\pi^*$ is uniform, the maximum likelihood estimator $\hat{\pi}_{\rm ML}$  minimizes the error probability $\prob{ \hat{\pi} \neq \pi^*}$ among all possible estimators and thus we only need to find when MLE fails. 

Recall that $I_0$ is the seed set. 
Let $\calS$ denote the set of all possible permutations $\pi$ such that $\pi(i)=\pi^*(i)$ for $i \in I_0$. 
Under the seeded model $\calG(n,p;s,\alpha)$, the maximum likelihood estimator $\hat{\pi}_{\rm ML}$ is given by the minimizer of the (restricted) quadratic assignment problem, namely, 
$$
\hat{\pi}_{\rm ML} \in \arg \min \min_{ \pi \in \calS }\|  G_1  -  \Pi G_2 \Pi^\top  \|_F^2,
$$
where $\Pi$ is the permutation matrix corresponding to permutation $\pi$; or equivalently, 
$$
\hat{\pi}_{\rm ML} \in \arg \max _{\pi \in \calS} \iprod{G_1}{\Pi G_2 \Pi^\top }.
$$

Let $I$ denote 
the union of the initial seed set and the set of all
 non-isolated vertices in $G_1^* \wedge G_2$. Then $I^c$
 is the set of isolated vertices that are unseeded in $G_1^* \wedge G_2$.
Let  $\tilde{\calS}$ denote the set of 
all possible permutations $\pi$ such that $\pi(i)=\pi^*(i)$ for $i \in I$. Then $\pi^* \in \tilde{\calS} \subset \calS$. Note that for any $\pi \in \tilde{\calS}$, we have
\begin{align*}
\iprod{ G_1  }{\Pi G_2 \Pi^\top }
& \ge \sum_{(i, j) \in I\times I} G_1( \pi(i),\pi(j) ) G_2(i, j)  \\
& \overset{(a)}{=} \sum_{(i, j) \in I\times I} G_1( \pi^*(i), \pi^*(j) ) G_2(i,j) \\
& =\sum_{(i, j) } G_1( \pi^*(i), \pi^*(j) ) G_2(i,j),
\end{align*}
where $(a)$ follows from $\pi(i)=\pi^*(i)$ for $i \in I$;
the last equality holds due to $G_1( \pi^*(i), \pi^*(j) ) G_2(i,j) =0$ for all $(i,j) \notin I \times I$. 
Hence, there at at least $|I^c|! -1 $ different permutations in $\tilde{\calS}$ whose likelihood is at least as large as the ground truth $\pi^*$, and hence the 
MLE is correct with probability at most $1/(|I^c|!-1).$ Note that on event $\calF_2$, $|I^c| \ge 2$; hence,  
MLE is correct with probability at most $1/2.$ 
In conclusion, MLE is correct with probability at most $(1/2)\prob{\calF_2} =\Omega((1-\alpha)^2)$.

%

\end{proof}

\section{Proof of \prettyref{lmm:graph_properties} }

\begin{proof}
Claim (i): For each vertex $i$, its degree $d_i \sim \Binom(n-1,p)$. 
By the union bound, the probability that $G$ has an isolated vertex is 
$$
n(1-p)^{n-1} \le  n e^{ - (n-1) p} =o(1),
$$
where the last equality holds due to the assumption that  $np - \log n \to +\infty$. \\

Claim (ii):
Fix any pair of two distinct vertices $i,j$, define
\begin{align*}
\calE_{ij} = \{ G(i,j) =1 \} \cap \left\{ d_i \le \tau \right\} \cap 
\left\{  d_j \le \tau \right\}.
\end{align*}
It suffices to show
$$
\prob{ \cup_{i\neq j} \calE_{ij}} \le n^{-1+o(1)}.
$$
Note that 
\begin{align*}
\prob{  d (i) \le \tau, d (j) \le \tau | G(i,j)=1}
& = \left( \prob{ \Bin ( n-2, p ) \le \tau -1} \right)^2 \\
& \le \left( \prob{ \Bin ( n-2, p ) \le \tau }  \right)^2
\end{align*}
In view of Binomial tail bounds given in \prettyref{thm:binom_lower_tail} and $\tau=o(np)$, we have that
\begin{align*}
\prob{ \Bin ( n-2, p ) \le \tau  } 
\le \exp \left( - (n-2) p \left( 1  - \sqrt{ \frac{\tau }{ (n-2) p}} \right)^2 \right)
=\exp \left( - (1-o(1)) np  \right).
\end{align*}
Combining the last two displayed equations yields that
\begin{align*}
\prob{ \calE_{ij}} 
& = \prob{G(i,j)=1} \prob{  d (i) \le \tau, d (j) \le \tau | G(i,j)=1} \\
& \le  p \exp \left( - 2 (1-o(1)) np \right)
\end{align*}
By the union bound,
\begin{align*}
\prob{ \cup_{i\neq j} E_{ij} } & \le n^2 \prob{E_{ij}} \\
& \le n^2 p \exp \left( - 2 (1-o(1)) np \right)
=n^{-1+o(1)},
\end{align*}
where the last equality holds due to $np-\log n\to +\infty$.
\end{proof}

\section{Proof of \prettyref{lmm:low_degree_path} }
\begin{proof}
Let $d_i$ denote the degree of vertex $i$ in $G_1^* \wedge G_2$ and 
$A$ denote the adjacency matrix of $G_1^* \vee G_2.$ 
For every pair of three distinct vertices $i,j,k$, define
$$
\calF_{ijk} = \{ A_{ik}=1, A_{jk}=1 \}  \cap \left\{ d_i  \le \tau \right\} \cap \left\{ d_j  \le \tau \right\}.
$$
It suffices to show that $\prob{\cup_{i,j,k} F_{ij}} \le n^{-1+o(1)}$. 
Since $G_1^*\vee G_2 \sim \calG(n, ps(2-s))$, it follows that
$$
\prob{ A_{ik}=1, A_{jk}=1 } = \prob{A_{ik}=1} \prob{A_{jk}=1} = \left( ps (2-s) \right)^2  
\le p^2. 
$$
Moreover, since $G_1^* \wedge G_2 \sim \calG(n, ps^2)$, it follows that 
\begin{align*}
  \prob{  \left\{ d_i  \le \tau \right\} \cap \left\{ d_j  \le \tau \right\} 
 \mid A_{ik}=1, A_{jk}=1}
  \le \left( \prob{ \Binom(n-3, ps^2) \le \tau  } \right)^2.
 \end{align*}
 In view of Binomial tail bound \prettyref{eq:binomial_lower} and $\tau=o(nps^2)$, we have that
 \begin{align*}
 \prob{ \Binom(n-3, ps^2) \le \tau } 
 & \le \exp \left( - (n-3) ps^2 \left( 1  - \sqrt{ \frac{\tau}{ (n-3) ps^2}} \right)^2 \right) \\
 & =\exp \left( - n ps^2 \left( 1 -o(1 ) \right) \right)
 \end{align*}
 It follows that
 $$
 \prob{ \calF_{ijk}} \le p^2   \exp \left( - 2 n ps^2   \left( 1 -o(1 ) \right) \right)
 $$
By the union bound, we have that
$$
\prob{\cup_{i,j,k} F_{ij}} \le n^3 p^2 \exp \left( - 2 n ps^2   \left( 1 -o(1 ) \right) \right) =n^{-1+o(1)}.
$$
where the last equality holds due to $nps^2 \ge \log n$ 
and $\log (np) = o (nps^2).$ 
\end{proof}

\section{Neighborhood Exploration in $\calG(n,p)$}\label{app:neighborhood_expansion}
Throughout this section, we assume graph $G \sim \calG(n,p)$
with $n p \ge \log n$. We first claim that the max degree in $G$ 
is at most $4np$ with probability at least $1-1/n$.  
\begin{lemma}\label{lmm:max_degree}
Assume graph $G \sim \calG(n,p)$ with $np \ge \log n$.  Let 
\begin{align}
\calE=\left\{ \max_{v \in V(G)} d_v \le  4np \right\}.
\label{eq:def_calE}
\end{align}
Then 
$$
\prob{\calE} \ge 1- n^{-1}.  
$$
\end{lemma}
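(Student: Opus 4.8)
The plan is a routine Chernoff-bound-plus-union-bound argument. Fix a vertex $v$. Since $G \sim \calG(n,p)$, its degree $d_v$ is distributed as $\Binom(n-1,p)$, with mean $\mu := (n-1)p \le np$, and the target threshold $4np$ satisfies $4np \ge 4\mu$. I would invoke the sharp multiplicative Chernoff upper-tail bound $\prob{\Binom(N,q) \ge (1+\delta)Nq} \le \bigl( e^{\delta} / (1+\delta)^{1+\delta} \bigr)^{Nq}$ with $1+\delta = 4$, which gives
$$
\prob{ d_v \ge 4np } \le \prob{ \Binom(n-1,p) \ge 4\mu } \le \left( \frac{e^{3}}{4^{4}} \right)^{\mu} = \exp\bigl( -(4\ln 4 - 3)\,(n-1)p \bigr).
$$
The one numerical fact that matters is that $4 \ln 4 - 3 \approx 2.545 > 2$, so this exponent comfortably beats $2\log n$.

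It then remains to turn this into a per-vertex bound of the form $n^{-2}$. Using $np \ge \log n$ we have $(n-1)p = \frac{n-1}{n}\, np \ge \frac{n-1}{n}\log n$, so for all sufficiently large $n$ one has $(4\ln 4 - 3)(n-1)p \ge 2np \ge 2\log n$, hence $\prob{ d_v \ge 4np } \le n^{-2}$. A union bound over the $n$ choices of $v$ (which needs no independence among the degrees) then yields
$$
\prob{ \calE^{c} } = \prob{ \exists\, v :\ d_v > 4np } \le n \cdot n^{-2} = n^{-1},
$$
which is exactly the claim. The finitely many small values of $n$ not covered by ``sufficiently large'' are harmless: when $np \ge \log n$ with $n$ small, $4np$ already exceeds the maximum possible degree $n-1$, so $\calE$ holds surely.

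I do not expect a genuine obstacle here. The only step requiring a little care is the choice of Chernoff bound: a crude form such as $(e\mu/t)^{t}$ produces only the exponent $4(\ln 4 - 1) np \approx 1.545\, np$, which after the union bound would be insufficient, whereas the sharp multiplicative bound --- equivalently the Chernoff exponent $\mu\, h(3)$ for $h(x) = (1+x)\ln(1+x) - x$ --- is precisely what makes the constant $4$ in the statement work.
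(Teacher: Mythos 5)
Your proof is correct and follows essentially the same route as the paper: a per-vertex binomial tail bound giving $\prob{d_v \ge 4np} \le \exp(-cnp)$ with $c \ge 2$, followed by a union bound over the $n$ vertices together with $np \ge \log n$. The only difference is cosmetic --- you use the multiplicative Chernoff bound with exponent $(4\ln 4-3)\mu$, while the paper invokes its Okamoto-type bound \prettyref{eq:binomial_upper}, $\prob{X \ge nt} \le \exp(-2n(\sqrt{t}-\sqrt{p})^2)$, which with $t \approx 4p$ gives the exponent $2np$ directly.
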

\begin{proof}
By the Binomial tail bound~\prettyref{eq:binomial_upper},
$$
\prob{ d_i \ge 4 n p  } =  \prob{ \Binom(n-1, p) \ge 4 n p } \le \exp( - 2 np).  
$$
The proof follows by the union bound and the assumption
that $np \ge \log n$. 
\end{proof}

We fix a vertex $u$ throughout this section, and abbreviate $\Gamma^G_k(u)$ as $\Gamma_k(u)$ and $N^G_k(u)$ as $N_k(u)$ for
simplicity. We are interested in studying the growth of $|\Gamma_k(u)|$ as $k$ increases. Note that $|\Gamma_1(u)|$ is the degree $d_u$ of vertex $u$ in $G$.
Since the average
degree is $(n-1)p$, we expect typically $|\Gamma_k(u)|$ grows as $(np)^{k}.$ This is indeed true in the dense regime with $np \ge n^{\epsilon}$.

\subsection{Dense Regime}
The following lemma is adapted from~\cite[Lemma 10.9]{bollobas1998random}.

\begin{lemma}\label{lmm:bollobas_neighbor_expansion}
Suppose $np \ge n^{\epsilon}$ for an arbitrarily small constant $\epsilon>0$ and $d$ is chosen such that 
$$
(np)^{d-1} \le \frac{n}{8} \quad \text{ and } \quad (np)^d \ge n \log n
$$
If $n$ is sufficiently large, then with probability at least $1-n^{-10}$, the following claims hold:
\begin{enumerate}[(i)]
\item For every vertex $u$, 
$$
\left|  \Gamma_{k} (u) - (np)^{k} \right| \le \frac{1}{4} (np)^{k}. \quad \forall 0 \le k \le d-1.   
$$
\item For every two distinct vertices $u$ and $v$, 
\begin{align*}
\left| N_{d-1} (u) \cap N_{d-1} (v) \right| & \le 8 n^{2d-3} p^{2d-2}.
\end{align*}
\end{enumerate}
\end{lemma}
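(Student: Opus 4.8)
The plan is to adapt the neighborhood exploration argument of Bollob\'as~\cite[Lemma 10.9]{bollobas1998random} to the present setting where $np \ge n^\epsilon$. I would expose the neighborhoods $\Gamma_0(u) \subseteq \Gamma_1(u) \subseteq \cdots$ of a fixed vertex $u$ layer by layer, tracking $a_k := |\Gamma_k(u)|$ and the total $|N_k(u)| = \sum_{i\le k} a_i$. At step $k+1$, conditionally on the exposed set $N_k(u)$, each of the $n - |N_k(u)|$ unexposed vertices joins $\Gamma_{k+1}(u)$ independently with probability $1 - (1-p)^{a_k}$, so that $a_{k+1} \sim \Binom\big(n - |N_k(u)|,\, 1 - (1-p)^{a_k}\big)$. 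The idea is to show inductively that $a_k$ stays within a $(1 \pm \delta_k)$ factor of $(np)^k$ for a slowly growing sequence $\delta_k$ with $\delta_{d-1} \le 1/4$. For the induction step one uses: (a) $1 - (1-p)^{a_k} = a_k p (1 + O(a_k p))$, and since $a_k \le 2(np)^{d-1} \le n/4$ throughout the relevant range we have $a_k p = O((np)^{d-1}/n) = o(1)$, so the per-vertex attachment probability is $(1+o(1)) a_k p$; (b) $n - |N_k(u)| = n(1 - o(1))$ since $|N_{d-1}(u)| = O((np)^{d-1}) = o(n)$; and (c) a Chernoff bound on the Binomial to control the deviation. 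The multiplicative error accumulates as $\delta_{k+1} \le \delta_k + O((np)^{k}/n) + (\text{Chernoff term})$, and since $(np)^d \ge n\log n$ the Chernoff term at each layer is at most $\exp(-c (np)^k)$, summable and negligible; the geometric sum $\sum_k (np)^k/n \le 2(np)^{d-1}/n \le 1/4$ is where the hypothesis $(np)^{d-1} \le n/8$ enters. A union bound over the $n$ choices of $u$ (using that $(np)^{d} \ge n \log n$ forces the failure probability per vertex below $n^{-11}$) gives claim~(i) with probability $1 - n^{-10}$.

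For claim~(ii), I would condition on the event in claim~(i). Fix distinct $u, v$. On the good event, $|N_{d-1}(u)| \le 2 (np)^{d-1}$ and likewise for $v$ — but that crude bound only gives $|N_{d-1}(u) \cap N_{d-1}(v)| \le 2(np)^{d-1}$, which is far weaker than $8 n^{2d-3} p^{2d-2}$. The sharper bound requires a joint exposure argument: expose $N_{d-1}(u)$ first, then expose $N_{d-1}(v)$ in the remaining graph. A vertex $w$ lies in $N_{d-1}(u) \cap N_{d-1}(v)$ only if, for some $0 \le i \le d-1$, $w \in \Gamma_i(v)$ and $w \in N_{d-1}(u)$; the expected number of such $w$ reached at distance $i$ from $v$ is roughly $|N_{d-1}(u)| \cdot (\text{prob. a fixed } N_{d-1}(u)\text{-vertex is at distance } \le d-1 \text{ from } v)$. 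Summing the layer sizes $(np)^i$ for $v$ and noting each layer of $v$'s tree hits the $O((np)^{d-1})$-size set $N_{d-1}(u)$ with density $O((np)^{d-1}/n)$ per vertex, one gets an expectation of order $(np)^{d-1} \cdot (np)^{d-2} / n \asymp n^{2d-3}p^{2d-2}$, with the dominant contribution from $i = d-1$. A second-moment or Chernoff argument on this (conditionally Binomial-like) count, followed by a union bound over the $\binom{n}{2}$ pairs, upgrades the expectation to a high-probability upper bound; here again $(np)^d \ge n\log n$ guarantees the deviation probability beats $n^{-12}$. The constant $8$ is generous enough to absorb the $(1+o(1))$ slack.

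The main obstacle I anticipate is claim~(ii), specifically making the joint exposure rigorous without circular conditioning: once $N_{d-1}(u)$ is exposed, the edges used to grow $N_{d-1}(v)$ must be fresh, so I must be careful that $v \notin N_{d-1}(u)$ (or handle that case separately, where the intersection is trivially at most $|N_{d-1}(u)|$, but then one needs $2(np)^{d-1} \le 8 n^{2d-3}p^{2d-2}$, i.e. $(np)^{d-1} \ge 1/4$ — true — wait, this needs $n^{2d-3}p^{2d-2} = (np)^{2d-2}/n^{} \ge (np)^{d-1}/4$, i.e. $(np)^{d-1} \ge n/4$, which fails; so the case $v \in N_{d-1}(u)$ genuinely needs the finer layered estimate too, not the crude one). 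Thus the argument cannot split cleanly and one must run the two-vertex BFS exposure simultaneously or in a carefully interleaved order, exposing $\Gamma_i(u)$ and $\Gamma_i(v)$ together at each depth $i$, which is exactly the bookkeeping in~\cite[Lemma 10.9]{bollobas1998random}; I would follow that template, adjusting the error bounds to the regime $np \ge n^\epsilon$ (which is actually \emph{easier} than the sparse diameter regime there, since the layers grow polynomially and concentration is very strong).
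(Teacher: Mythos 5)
Your proposal follows essentially the same route as the paper: the paper gives no self-contained proof of this lemma, stating only that it is adapted from Bollob\'as \cite[Lemma 10.9]{bollobas1998random} and sketching, for claim (ii), the same heuristic you describe (a vertex outside $N_{d-2}(u)\cup N_{d-2}(v)$ attaches independently to the disjoint sets $\Gamma_{d-2}(u)\setminus\Gamma_{d-2}(v)$ and $\Gamma_{d-2}(v)\setminus\Gamma_{d-2}(u)$ with probability at most $2p(np)^{d-2}$ each, over at most $n$ candidates, giving $8n^{2d-3}p^{2d-2}$), so your layered BFS exposure with Chernoff bounds for (i) and an interleaved two-vertex exposure for (ii) is exactly the intended adaptation. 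The outline is sound; only note the small arithmetic slip in your expectation heuristic, where $(np)^{d-1}(np)^{d-2}/n$ is $n^{2d-4}p^{2d-3}$, not $n^{2d-3}p^{2d-2}$ — the correct back-of-envelope is $|N_{d-1}(u)|\,|N_{d-1}(v)|/n \asymp (np)^{2d-2}/n$.
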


\prettyref{lmm:bollobas_neighbor_expansion} also upper bounds 
$|\Gamma_{d-1}(u) \cap \Gamma_{d-1}(v)|$ for two distinct vertices $u, v$ by $8p^{2d-2}n^{2d-3}$. To see this intuitively, note that in the dense regime, $\Gamma_{d-2}(u) \cap \Gamma_{d-2}(v)$ is typically 
of a much smaller size comparing to either $\Gamma_{d-2}(u) $ or $\Gamma_{d-2}(v)$. Hence, the majority of vertices $w$ 
 in $\Gamma_{d-1}(u) \cap \Gamma_{d-1}(v)$ are connected to some vertex in $\Gamma_{d-2}(u) \setminus \Gamma_{d-2}(v)$ and to some vertex in $\Gamma_{d-2}(v) \setminus \Gamma_{d-2}(u)$.  For a given vertex $w \notin N_{d-2}(u) \cup N_{d-2}(v)$, since $|\Gamma_{d-2}(u) \setminus \Gamma_{d-2}(v)| \le |\Gamma_{d-2}(u)| \le 2 (np)^{d-2}$ and similarly for
 $|\Gamma_{d-2}(v) \setminus \Gamma_{d-2}(u)|$, $w$ connects to some vertex in $\Gamma_{d-2}(u) \setminus \Gamma_{d-2}(v)$ with probability at most $2p (np)^{d-2}$, and connects to some vertex in $\Gamma_{d-2}(v) \setminus \Gamma_{d-2}(u)$ independently with probability $2p(np)^{d-2}$. Moreover, there are at most $n$ such potential vertices $w$ to consider. Hence, we expect $|\Gamma_{d-1}(u) \cap \Gamma_{d-1} (v)|$ to be smaller than $ 2n [ 2p (np)^{d-2} ]^2= 8p^{2d-2}n^{2d-3}.$

\subsection{Sparse Regime}
In contrast, in the sparse regime where
$$
np - \log n \to + \infty.
$$
there exist vertices with small degrees, \ie, $|\Gamma_1(u)|$ is much smaller than
$np.$ 
Hence, we cannot expect $|\Gamma_k(u)|$ grows like $(np)^k$ for all vertices $u$. Nevertheless, the following lemma shows that
conditional on $|\Gamma_1(u)|$ is large, then $|\Gamma_k (u)| \asymp (np) |\Gamma_{k-1}(u)| $
for all $2 \le k \le d$ for some $d$
with high probability.

\begin{lemma}\label{lmm:neighbor_grow_sparse}
Suppose 
\begin{align}
np \ge \log n \quad \text{ and } \quad  
 p (4np)^{d-1} = o(1).  \label{eq:neighbor_p_assumption}
\end{align} 
Let $u$ be a fixed vertex. For each $1 \le k \le d$, define
 $$
 \calQ_k =\left\{ |\Gamma_{k}(u)| \in I_k = \left[ \tau \left( \frac{np}{2} \right)^{k-1}, (4np)^{k} \right] \right\}
 $$ 
 for $1 \le \tau \le np$. Then  for $2 \le k \le d$, 
$$
\prob{\calQ_k \mid \calQ_1, \ldots, \calQ_{k-1}} \ge 
1 -\exp \left( -  \Omega \left(\tau \left( \frac{np}{2} \right)^{k-1} \right) \right).
$$
It readily follows that
$$
\prob{\calQ_d \cap \calQ_{d-1} \cap \cdots \cap \calQ_2 \mid \calQ_1} 
\ge  1 - \exp \left( -  \Omega \left(\tau np \right) \right).
$$
\end{lemma}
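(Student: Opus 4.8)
The plan is to run the breadth-first exploration of $G$ from the fixed vertex $u$ and to analyze $|\Gamma_k(u)|$ one level at a time. Let $\calF_{k-1}$ denote the $\sigma$-algebra generated by the exploration up to distance $k-1$, i.e.\ by the sets $\Gamma_0(u),\dots,\Gamma_{k-1}(u)$ together with all edges incident to $N_{k-1}(u)$ that have been examined; the events $\calQ_1,\dots,\calQ_{k-1}$ are $\calF_{k-1}$-measurable. Conditionally on $\calF_{k-1}$, for each vertex $w\notin N_{k-1}(u)$ the event $\{w\in\Gamma_k(u)\}$ is determined by a block of $|\Gamma_{k-1}(u)|$ as-yet-unexposed independent $\Bern(p)$ coin flips, and these blocks are disjoint over distinct $w$. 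Hence, conditionally on $\calF_{k-1}$, $|\Gamma_k(u)|$ is distributed as $\Bin\!\big(N_k',\,q_k\big)$ with $N_k'=n-|N_{k-1}(u)|$ and $q_k=1-(1-p)^{|\Gamma_{k-1}(u)|}$.

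Next I would control $N_k'$ and $q_k$ on the conditioning event $\calQ_1\cap\dots\cap\calQ_{k-1}$. There $|N_{k-1}(u)|\le\sum_{j=0}^{k-1}(4np)^j\le 2(4np)^{k-1}\le 2(4np)^{d-1}=o(1/p)=o(n)$ by \prettyref{eq:neighbor_p_assumption} and $p\ge\log n/n$, so $N_k'=(1-o(1))n$. Also $p\,|\Gamma_{k-1}(u)|\le p(4np)^{k-1}=o(1)$, so from $1-(1-p)^m\le pm$ and $1-e^{-x}\ge x(1-x/2)$ we get $(1-o(1))\,p\,|\Gamma_{k-1}(u)|\le q_k\le p\,|\Gamma_{k-1}(u)|$. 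Therefore the conditional mean $\mu_k:=N_k'q_k$ obeys, on $\calQ_1\cap\dots\cap\calQ_{k-1}$, the bounds $(1-o(1))\,np\,|\Gamma_{k-1}(u)|\le\mu_k\le np\,|\Gamma_{k-1}(u)|$. Using $|\Gamma_{k-1}(u)|\ge\tau(np/2)^{k-2}$ together with $np\cdot(np/2)^{k-2}=2(np/2)^{k-1}$, the lower bound becomes $\mu_k\ge(2-o(1))\,\tau(np/2)^{k-1}$; using $|\Gamma_{k-1}(u)|\le(4np)^{k-1}$ the upper bound becomes $\mu_k\le np(4np)^{k-1}=\tfrac14(4np)^k$. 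A multiplicative Chernoff bound for the binomial now gives, pointwise on this event, $\prob{|\Gamma_k(u)|<\tau(np/2)^{k-1}\mid\calF_{k-1}}\le\exp(-\Omega(\mu_k))\le\exp\!\big(-\Omega(\tau(np/2)^{k-1})\big)$, since $\tau(np/2)^{k-1}\le\tfrac23\mu_k$; and $\prob{|\Gamma_k(u)|\ge(4np)^k\mid\calF_{k-1}}\le(e\mu_k/(4np)^k)^{(4np)^k}\le(e/4)^{(4np)^k}\le\exp\!\big(-\Omega((4np)^k)\big)\le\exp\!\big(-\Omega(\tau(np/2)^{k-1})\big)$, where the last step uses $(4np)^k\ge np(np/2)^{k-1}\ge\tau(np/2)^{k-1}$ (valid as $\tau\le np$). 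Since both estimates depend only on $\calF_{k-1}$-measurable quantities, taking conditional expectation given $\calQ_1\cap\dots\cap\calQ_{k-1}$ yields $\prob{\calQ_k\mid\calQ_1,\dots,\calQ_{k-1}}\ge1-\exp(-\Omega(\tau(np/2)^{k-1}))$ for each $2\le k\le d$.

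For the final ``readily follows'' assertion I would chain the conditional probabilities: writing $B_k=\calQ_1\cap\dots\cap\calQ_k$, we have $\prob{B_d\mid\calQ_1}=\prod_{k=2}^{d}\prob{\calQ_k\mid B_{k-1}}\ge\prod_{k=2}^{d}\big(1-\exp(-\Omega(\tau(np/2)^{k-1}))\big)\ge1-\sum_{k=2}^{d}\exp(-\Omega(\tau(np/2)^{k-1}))$. The sum is dominated by its first ($k=2$) term, hence is at most $d\exp(-\Omega(\tau np))$; and \prettyref{eq:neighbor_p_assumption} forces $d-1\le\log n/\log(4np)$, so $\log d=O(\log\log n)=o(\log n)\le o(\tau np)$ because $\tau\ge1$ and $np\ge\log n$. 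Thus $d\exp(-\Omega(\tau np))=\exp(-\Omega(\tau np))$, giving $\prob{\calQ_d\cap\cdots\cap\calQ_2\mid\calQ_1}\ge1-\exp(-\Omega(\tau np))$.

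The main obstacle is arithmetic bookkeeping rather than anything conceptual: one must use $N_k'=(1-o(1))n$ (not merely $N_k'\ge n/2$) so that the conditional mean $\mu_k$ exceeds the target lower threshold $\tau(np/2)^{k-1}$ by a genuine constant factor — essentially $2$ — which is precisely what the Chernoff lower tail needs to reproduce the stated exponent; and one must invoke the regime assumption $p(4np)^{d-1}=o(1)$ to keep $q_k=\Theta(p\,|\Gamma_{k-1}(u)|)$ and $|N_{k-1}(u)|=o(n)$ uniformly over all $k\le d$, which is what makes the per-level estimate valid all the way to depth $d$.
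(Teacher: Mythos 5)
Your proposal is correct and follows essentially the same route as the paper's proof: condition on the exploration up to level $k-1$, observe that $|\Gamma_k(u)|$ is then $\Bin\bigl(n-|N_{k-1}(u)|,\,1-(1-p)^{|\Gamma_{k-1}(u)|}\bigr)$, use \prettyref{eq:neighbor_p_assumption} to get $|N_{k-1}(u)|=o(n)$ and $q_k=\Theta(p|\Gamma_{k-1}(u)|)$ on the conditioning event, apply binomial tail bounds to both tails, and chain the conditional probabilities. Your bookkeeping for absorbing the factor $d$ at the end is in fact slightly more explicit than the paper's, but the argument is the same.
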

\begin{proof}
Fix $2 \le k \le d$. 
Conditional on $\Gamma_{k-1}(u)$ and $N_{k-1}(u)$, the probability of a given
vertex $v \notin N_{k-1}(u)$ being connected to some vertices in $\Gamma_{k-1}(u)$ is
$$
p_k \triangleq 1- (1-p)^{|\Gamma_{k-1}(u)|}.
$$
Therefore, conditional on $|\Gamma_{k-1}(u)|$ and $|N_{k-1}(u)| $, 
$$
|\Gamma_{k}(u)| \sim  \Bin \left( n - |N_{k-1}(u)|, p_k  \right)
$$
Note that conditional on $\calQ_1, \ldots, \calQ_{k-1}$, 
$$
|N_{k-1}(u)| = \sum_{i=0}^{k-1} | \Gamma_i (u) | \le \sum_{i=0}^{k-1} (4np)^i = \frac{ (4 np)^k -1 }{4np-1} =o(n), 
$$
where the last equality holds due to the assumption \prettyref{eq:neighbor_p_assumption} and 
$k \le d.$
Moreover, in view of the assumption \prettyref{eq:neighbor_p_assumption}, conditional on $\calQ_1, \ldots, \calQ_{k-1}$, 
$$
 \left( 1-o(1) \right) p  \tau \left( \frac{np}{2} \right)^{k-2}  \le p_k \le  p (4np)^{k-1}.
$$
Hence, for $2 \le k \le d$,
\begin{align*}
\prob{   |\Gamma_{k}(u)|  \notin I_k \mid \calQ_1, \ldots, \calQ_k }
& \le \prob{ \Bin \left( n -o(n), \left( 1-o(1) \right) p  \tau \left( \frac{np}{2} \right)^{k-2} \right) \le \tau \left( \frac{np}{2} \right)^{k-1} }  \\
& + \prob{ \Bin \left( n , p (4np)^{k-1}\right) \ge  (4np)^{k}  } \\
& \le \exp \left( - \Omega \left(\tau \left( \frac{np}{2} \right)^{k-1} \right) \right) + \exp \left( - 4^{k-1} (np)^k \right) \\
& \le  \exp \left( -  \Omega \left(\tau \left( \frac{np}{2} \right)^{k-1} \right) \right).
\end{align*}
Finally, we note that 
\begin{align*}
\prob{\calQ_d \cap \calQ_{d-1} \cap \cdots \cap \calQ_2 \mid \calQ_1} 
& =\prob{\calQ_2 \mid \calQ_1}  \prob{ \calQ_3 \mid \calQ_1, \calQ_2}  \cdots  \prob{\calQ_d \mid \calQ_1, \ldots, \calQ_{d-1} } \\ 
& \ge \prod_{i=0}^{d-1} \left( 1 - \exp \left( -  \Omega \left(\tau \left( \frac{np}{2} \right)^{k-1} \right) \right) \right) \\
& \ge 1 - \sum_{i=0}^{d-1}  \exp \left( -  \Omega \left(\tau \left( \frac{np}{2} \right)^{k-1} \right) \right) \\
& \ge 1- \exp \left( -  \Omega \left(\tau np \right) \right).
\end{align*}
\end{proof}

With \prettyref{lmm:neighbor_grow_sparse}, we have the following 
immediate corollary. 
\begin{corollary}\label{cor:neighbor_grow_sparse}
Suppose \prettyref{eq:neighbor_p_assumption} holds. Define
event 
$$
\calQ = \left\{  |\Gamma_{k } (u) | \in I_k, \; \forall 1 \le k \le d,  \;  \forall u \text{ s.t. }
\tau \le |\Gamma_1(u)| \le 4np \right\}
$$
Then 
$$
\prob{\calQ} \ge 1- n \exp \left( -  \Omega \left(\tau np \right) \right).
$$
\end{corollary}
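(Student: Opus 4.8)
The plan is to deduce the corollary from \prettyref{lmm:neighbor_grow_sparse} by a single union bound over the choice of the base vertex. To emphasize the dependence on that vertex, write $\calQ_k(u)$ for the event $\calQ_k$ of \prettyref{lmm:neighbor_grow_sparse} defined relative to the fixed vertex $u$; note that $\calQ_1(u)$ is exactly $\{\tau \le |\Gamma_1(u)| \le 4np\}$, since $I_1 = [\tau, 4np]$. First I would observe that the complement of $\calQ$ can be written as
$$
\calQ^c = \bigcup_{u \in V(G)} \left( \calQ_1(u) \cap \left( \calQ_2(u) \cap \cdots \cap \calQ_d(u) \right)^c \right).
$$
Indeed, for a vertex $u$ with $|\Gamma_1(u)| \notin [\tau, 4np]$ the defining constraint of $\calQ$ imposes nothing on $u$, whereas for $u$ with $|\Gamma_1(u)| \in [\tau, 4np]$ this constraint says precisely that $|\Gamma_k(u)| \in I_k$ for all $1 \le k \le d$.

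Next, for each fixed $u$ I would bound the corresponding term by conditioning on $\calQ_1(u)$:
$$
\prob{ \calQ_1(u) \cap \left( \calQ_2(u) \cap \cdots \cap \calQ_d(u) \right)^c } \le \prob{ \left( \calQ_2(u) \cap \cdots \cap \calQ_d(u) \right)^c \mid \calQ_1(u) } \le \exp\left( - \Omega(\tau np) \right),
$$
where the last inequality is exactly the final displayed bound in \prettyref{lmm:neighbor_grow_sparse}, valid because \prettyref{eq:neighbor_p_assumption} is assumed. A union bound over the at most $n$ choices of $u$ then yields $\prob{\calQ^c} \le n \exp\left( - \Omega(\tau np) \right)$, which is the claim.

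There is essentially no serious obstacle; the only point that needs a little care is the bookkeeping in the first step, namely checking that the ``for all $u$ such that $\tau \le |\Gamma_1(u)| \le 4np$'' quantifier in the definition of $\calQ$ translates into applying the conditional statement of \prettyref{lmm:neighbor_grow_sparse} separately to each $u$, and noting that no independence across different base vertices is needed because we only ever take a union bound over the associated bad events. One should also note that the constant hidden in $\Omega(\cdot)$ is uniform over $u$ (it originates from the Chernoff bounds inside the proof of \prettyref{lmm:neighbor_grow_sparse}, which do not depend on the identity of the base vertex), so the union bound is legitimate.
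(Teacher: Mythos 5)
Your proposal is correct and follows essentially the same route as the paper: write $\calQ^c$ as a union over vertices $u$ of $\{\tau \le |\Gamma_1(u)| \le 4np\}$ intersected with the failure of $\calQ_2(u)\cap\cdots\cap\calQ_d(u)$, bound each term by the conditional probability from the final display of \prettyref{lmm:neighbor_grow_sparse}, and take a union bound over the $n$ vertices. No gaps.
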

\begin{proof}
Note that 
$$
\calQ^c = \cup_{u} \left( \left\{ \tau \le |\Gamma_1(u)| \le 4np  \right\}  \cap \left\{  |\Gamma_{k } (u) | \notin I_k, \; \forall 1 \le k \le d \right\} \right).
$$
Hence, it follows from the union bound that 
\begin{align*}
\prob{\calQ^c } &\le \sum_u 
\prob{ \left\{ \tau \le |\Gamma_1(u)| \le 4np  \right\}  \cap \left\{  |\Gamma_{k } (u) | \notin I_k, \; \forall 1 \le k \le d \right\}  } \\
& \le \sum_u \prob{ |\Gamma_{k } (u) | \notin I_k, \; \forall 1 \le k \le d  \mid   \tau \le |\Gamma_1(u)| \le 4np } \\
& \le n \exp \left( -  \Omega \left(\tau np \right) \right),
\end{align*}
where the last inequality follows from \prettyref{lmm:neighbor_grow_sparse}.
\end{proof}


Next, we upper bounds
$|N_{d}(u) \cap N_{d}(v)|$ for two distinct vertices $u, v$ in the sparse regime. We need to introduce 
 $$
 \Gamma^*_{k,\ell} (u, v) = \left\{  
 w \in \Gamma_k(u) \cap \Gamma_\ell(v) : 
 \Gamma_1 ( w) \cap \left( \Gamma_{k-1}(u) \setminus \Gamma_{\ell-1}(v) \right) \neq \emptyset, \; \Gamma_1 ( w) \cap \left( \Gamma_{\ell-1}(v) \setminus \Gamma_{k-1}(u) \right) \neq \emptyset
 \right\}
 $$
 and we abbreviate $\Gamma^*_{k,k}(u,v)$ as $\Gamma^*_k (u,v)$
 for simplicity. 
By definition, for any $d \ge 1$, 
$$
\Gamma_{d} (u) \cap \Gamma_{d} (v) \subset \cup_{k=1}^{d} \Gamma_{d-k}\left( \Gamma^*_{k} (u, v) \right). 
$$
and 
$$
N_{d}(u) \cap N_d(v) \subset
\cup_{\ell=-d}^d \cup_{k=0}^d N_{d-k - \max\{\ell, 0\} } \left( \Gamma^*_{k+\ell, k} (u,v) \right).
$$
The following lemma gives an upper bound to $\left| \Gamma^*_{k,\ell} (u,v)  \right|$ in high probability.

\begin{lemma}\label{lmm:neighbor_intersect}
For two distinct vertices $u, v$, define
$$
\Delta_{k,\ell} = \left\{ | \Gamma_{k-1}(u) |  \le (4np)^{k-1}, \; 
| \Gamma_{\ell-1}(v) |  \le (4np)^{\ell-1}\right\}. 
$$
For all $k \ge 1$,  
\begin{align}
\prob{ \left| \Gamma^*_{k,\ell}(u,v) \right| \ge \gamma_{k+\ell} \mid \Delta_{k,\ell} } \le \frac{1}{n^8}, \label{eq:neighbor_interset_high_prob}
\end{align}
where 
\begin{align}
\gamma_{k}  = 
\begin{cases}
24 \log n & \text { if } np^2 ( 4np)^{k-2} \le 4 \log n \\
4 np^2 ( 4np)^{k-2} & \text { o.w. }
\end{cases}
\end{align}
\end{lemma}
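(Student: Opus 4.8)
The plan is to run the standard breadth-first exposure of the two balls and then show that, conditionally on what has been revealed, $\Gamma^*_{k,\ell}(u,v)$ is stochastically dominated by a sum of independent indicators, to which a Chernoff bound applies. First I would reveal exactly the edges needed to determine $N_{k-1}(u),\Gamma_{k-1}(u),N_{\ell-1}(v),\Gamma_{\ell-1}(v)$ (run BFS from $u$ to depth $k-1$, then from $v$ to depth $\ell-1$), and work conditionally on any exposure outcome lying in $\Delta_{k,\ell}$, so that $|\Gamma_{k-1}(u)|\le (4np)^{k-1}$ and $|\Gamma_{\ell-1}(v)|\le (4np)^{\ell-1}$.

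The key step is a description of $\Gamma^*_{k,\ell}(u,v)$ purely in terms of edges not yet examined. I claim that, for $w\notin N_{k-1}(u)\cup N_{\ell-1}(v)$ (and every vertex of $\Gamma_k(u)\cap\Gamma_\ell(v)$ is such a $w$), one has $w\in\Gamma^*_{k,\ell}(u,v)$ if and only if $w$ has at least one edge into $\Gamma_{k-1}(u)\setminus N_{\ell-1}(v)$ and at least one edge into $\Gamma_{\ell-1}(v)\setminus N_{k-1}(u)$: here I have replaced the sets $\Gamma_{k-1}(u)\setminus\Gamma_{\ell-1}(v)$ and $\Gamma_{\ell-1}(v)\setminus\Gamma_{k-1}(u)$ from the definition by the smaller sets obtained by subtracting the whole balls, which is legitimate because if $a\in\Gamma_{k-1}(u)$ with $d(v,a)\le\ell-2$ then the edge $\{w,a\}$ was already revealed, and found absent, when certifying $w\notin N_{\ell-1}(v)$, so such an $a$ can never witness membership. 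The two vertex sets $\Gamma_{k-1}(u)\setminus N_{\ell-1}(v)$ and $\Gamma_{\ell-1}(v)\setminus N_{k-1}(u)$ are disjoint; for $w\neq w'$ the pairs incident to $w$ and those incident to $w'$ are disjoint; and none of these pairs was queried during the exposure of the four balls. Hence, conditionally on the exposure, the events $\{w\in\Gamma^*_{k,\ell}(u,v)\}$, over the at most $n$ candidate vertices $w$, are mutually independent, each of probability
\[
\bigl(1-(1-p)^{|\Gamma_{k-1}(u)\setminus N_{\ell-1}(v)|}\bigr)\bigl(1-(1-p)^{|\Gamma_{\ell-1}(v)\setminus N_{k-1}(u)|}\bigr)\ \le\ p^{2}\,|\Gamma_{k-1}(u)|\,|\Gamma_{\ell-1}(v)|\ \le\ p^{2}(4np)^{k+\ell-2}
\]
on $\Delta_{k,\ell}$, using $1-(1-p)^{m}\le mp$. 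Thus, conditionally on any exposure in $\Delta_{k,\ell}$, $|\Gamma^*_{k,\ell}(u,v)|$ is stochastically dominated by $\Bin(n,q)$ with $q\le p^{2}(4np)^{k+\ell-2}$, a variable of mean at most $\mu:=np^{2}(4np)^{k+\ell-2}$; averaging over exposures then gives the same domination after conditioning on the event $\Delta_{k,\ell}$.

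It remains to bound the upper tail of $\Bin(n,q)$. If $\mu\le 4\log n$ then $\gamma_{k+\ell}=24\log n$, which exceeds $\mu$ by a multiplicative constant and is $\Theta(\log n)$ in absolute size, so the Chernoff bound $\prob{\Bin(n,q)\ge t}\le e^{-\mu'}(e\mu'/t)^{t}$ (mean $\mu'\le\mu$, at $t=24\log n$) is at most $n^{-8}$; if $\mu>4\log n$ then $\gamma_{k+\ell}=4\mu$ and the same bound at $t=4\mu$ gives $e^{-\Omega(\mu)}\le e^{-\Omega(\log n)}\le n^{-8}$. The constants $4$ and $24$ in the definition of $\gamma$ are exactly what make these exponents at least $8$. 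Combining the two regimes with the stochastic domination proves \prettyref{eq:neighbor_interset_high_prob}.

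I expect the main obstacle to be the exposure bookkeeping in the second paragraph: one must be sure that the edges deciding membership in $\Gamma^*_{k,\ell}(u,v)$ are genuinely fresh, which is precisely what forces the replacement of the spheres $\Gamma_{\ell-1}(v),\Gamma_{k-1}(u)$ by the balls $N_{\ell-1}(v),N_{k-1}(u)$ when forming the set differences, and why one conditions on the full balls rather than only on the spheres. Once the reveal is organised correctly, the conditional independence, the resulting binomial domination, and the Chernoff estimate are all routine; the only other ingredients are the elementary inequality $1-(1-p)^{m}\le mp$ and a standard Chernoff/binomial tail bound as in \prettyref{eq:binomial_upper}.
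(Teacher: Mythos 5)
Your proposal is correct and follows essentially the same route as the paper: condition on the exposed neighborhoods (on $\Delta_{k,\ell}$), bound the per-vertex probability of joining $\Gamma^*_{k,\ell}(u,v)$ by $p^2(4np)^{k+\ell-2}$ using disjointness of the two witness sets, dominate $|\Gamma^*_{k,\ell}(u,v)|$ by $\Bin\left(n, p^2(4np)^{k+\ell-2}\right)$, and split into the two regimes matching the definition of $\gamma_{k+\ell}$ via binomial/Chernoff tails. Your exposure bookkeeping (replacing the spheres by balls in the set differences and checking the deciding edges are fresh) is a more careful justification of the conditional independence that the paper leaves implicit, but it is the same argument in substance.
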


\begin{proof}
Conditional on $\calN_{k-1}(u), \Gamma_{k-1}(u)$ and $\calN_{\ell-1}(v),\Gamma_{\ell-1}(v)$, the probability that a  vertex $w \notin \calN_{k-1}(u) \cup \calN_{\ell-1}(v)$ being connected to some vertex in $\Gamma_{k-1}(u) \setminus \Gamma_{\ell-1}(v)$ is 
$$
1- (1-p)^{| \Gamma_{k-1}(u) \setminus \Gamma_{\ell-1}(v)| } \le p | \Gamma_{k-1}(u) \setminus \Gamma_{\ell-1}(v)| \le p | \Gamma_{k-1}(u)|  .
$$
Similarly, the probability that $w$ is connected to some vertex in $\Gamma_{\ell-1}(v) \setminus \Gamma_{k-1}(u)$ is 
$$
1- (1-p)^{| \Gamma_{\ell-1}(v) \setminus \Gamma_{k-1}(u)| } \le p | \Gamma_{\ell-1}(v) \setminus \Gamma_{k-1}(u)| \le p | \Gamma_{\ell-1}(v)|.
$$
Since $\Gamma_{k-1}(u) \setminus \Gamma_{\ell-1}(v)$ is disjoint from $\Gamma_{\ell-1}(v) \setminus \Gamma_{k-1}(u)$, the probability that $w \in \Gamma^*_{u,v}$ is at most $p^2 | \Gamma_{k-1}(u)| | \Gamma_{\ell-1}(v)|$. 
Moreover, there are at most $n$ vertices $w \notin \calN_{k-1}(u) \cup \calN_{\ell-1}(v)$.  
Hence,
$$
\prob{ \left| \Gamma^*_{k,\ell}(u,v) \right| \ge \gamma_{k+\ell} \mid \Delta_{k,\ell} } \le 
\prob{ \Bin \left( n, p^2 (4np)^{k+\ell-2 } \right)  \ge \gamma_{k+\ell} }.
$$
If $n p^2 (4np)^{k+\ell-2 } \le 4 \log n$, then by the choice of $\gamma_{k+\ell}=24 \log n$, we have $\gamma_{k+\ell} \ge 6 n p^2 (4np)^{k+\ell-2 }$. It follows from \prettyref{eq:binomial_upper_2} that 
$$
\prob{ \Bin \left( n, p^2 (4np)^{k+\ell-2 } \right)  \ge \gamma_{k+\ell} }
\le 2^{-\gamma_{k+\ell} } =2^{-24 \log n} \le \frac{1}{n^8}.
$$
If $n p^2 (4np)^{k+\ell-2 } \ge 4 \log n$, then by the choice of 
$\gamma_{k+\ell} = 4 n p^2 (4np)^{k+\ell-2 }$, it follows from \prettyref{eq:binomial_upper} that 
$$
\prob{ \Bin \left( n, p^2 (4np)^{k+\ell-2 } \right)  \ge \gamma_{k+\ell} }
\le  \exp \left( - 2n p^2 (4np)^{k+\ell-2 } \right) \le \frac{1}{n^8}.
$$
\end{proof}

With \prettyref{lmm:neighbor_intersect}, we are ready to upper bound 
$|N_{d} (u) \cap N_{d} (v) |$ for $d$ large enough. 

\begin{lemma}\label{lmm:neighbor_intersect_small}
For a given small constant $\epsilon>0$, choose an integer $1 \le d \le n$ such that
$$
\left( 4np \right)^d \ge n^{1-\epsilon}
$$
For each vertex $u$, define event
$$
\calR_u = \left\{ 
\sum_{v} \indc{ |N_{d} (u) \cap N_{d} (v) | > 4^{2d+1} p^{2d} n^{2d-1} } \le 2 n^{4\epsilon} \right\}
$$
and $\calR=\cap_u \calR_u$. Then  
 \begin{align}
 \prob{ \calR } \ge 1- 2n^{-1}.
 \end{align}
\end{lemma}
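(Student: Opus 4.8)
The plan is to deduce the statement from a deterministic counting estimate valid on a single ``good'' event $\calG$, defined as the intersection of two events: (a) $|\Gamma_k(w)| \le (4np)^k$ for every vertex $w$ and every $0 \le k \le d$ --- which in particular gives $|N_k(w)| \le 2(4np)^k$ and $|N_k(S)| \le 2|S|(4np)^k$ for any set of vertices $S$; and (b) $|\Gamma^*_{k+\ell,k}(u,v)| \le \gamma_{2k+\ell}$ for every ordered pair $u \ne v$ and every $k \ge 1$, $\ell$ with $k+\ell \ge 1$. At level $k=1$, (a) is the event $\calE$ of \prettyref{lmm:max_degree}; for $k \ge 2$ I would proceed exactly as in the proof of \prettyref{lmm:neighbor_grow_sparse}: conditionally on the exposure of $N_{k-1}(w)$ (and on (a) holding at levels $<k$), $|\Gamma_k(w)|$ is stochastically below $\Bin(n, p(4np)^{k-1})$, whose mean is $\tfrac{1}{4}(4np)^k$, so $|\Gamma_k(w)| > (4np)^k$ has probability $\exp(-\Omega((np)^2))$, and this survives a union bound over all $w$ and all $2 \le k \le d$. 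Event (b) is \prettyref{lmm:neighbor_intersect} applied with ``first index'' $k+\ell$ and ``second index'' $k$ (its conditioning event $\Delta_{k+\ell,k}$ is implied by (a)), union-bounded over the $O(n^2 d^2)$ relevant tuples, costing at most $O(n^{-4})$. Since $\prob{\calE^c}\le n^{-1}$ and the other two contributions are $o(n^{-1})$, we get $\prob{\calG} \ge 1-2n^{-1}$, so it suffices to show $\calG \subseteq \calR$.

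The core step is to bound $|N_d(u)\cap N_d(v)|$ on $\calG$ for a fixed pair $u \ne v$, writing $t := d(u,v)$. I would start from the inclusion $N_d(u)\cap N_d(v) \subseteq \bigcup_{\ell=-d}^d\bigcup_{k=0}^d N_{d-k-\max\{\ell,0\}}(\Gamma^*_{k+\ell,k}(u,v))$ recalled just before \prettyref{lmm:neighbor_intersect}, and bound each summand by $2|\Gamma^*_{k+\ell,k}(u,v)|\,(4np)^{d-k-\max\{\ell,0\}}$ via (a). Then: (i) $\Gamma^*_{k+\ell,k}(u,v)=\emptyset$ unless $t \le 2k+\ell$, because every vertex in it lies in $\Gamma_{k+\ell}(u)\cap\Gamma_k(v)$; (ii) the boundary terms with $k=0$ or $k+\ell=0$ have their set contained in $\{v\}$ or $\{u\}$ and contribute at most $2(4np)^{d-t}$ in total; (iii) all remaining $|\Gamma^*_{k+\ell,k}(u,v)|$ are replaced by $\gamma_{2k+\ell}$ using (b). Grouping the remaining terms by $m=2k+\ell$ (so only $m \ge \max\{t,2\}$ survive), observing that for fixed $m$ there are $O(m)$ admissible pairs $(k,\ell)$ whose exponent $d-k-\max\{\ell,0\}$ peaks at $d-m/2$ (at $\ell=0$) and decays geometrically off the peak, and summing over $m$ after splitting at the value $m_0$ where $\gamma_m$ switches from $24\log n$ to $(4np)^m/(4n)$, the low-$m$ part is a geometric series dominated by $m=t$ and the high-$m$ part is a geometric series dominated by $m=2d$ (the unique admissible pair $(d,0)$ there, contributing $2\gamma_{2d}=(4np)^{2d}/(2n)$). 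Carrying the constants carefully one gets, for an absolute constant $C_1$,
\[
|N_d(u)\cap N_d(v)| \ \le\ C_1\log n\,(4np)^{d-t/2} \ +\ (4np)^{2d}/n ,
\]
the boundary term having been absorbed into the first summand (legitimate since $t\ge 1$ and $C_1\log n \ge 2$).

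Finally I would count the bad vertices. The threshold in the lemma is $4^{2d+1}p^{2d}n^{2d-1} = 4(4np)^{2d}/n$, i.e.\ at least four times the second summand above; hence if $|N_d(u)\cap N_d(v)|$ exceeds the threshold then $C_1\log n\,(4np)^{d-t/2} > (4np)^{2d}/n$, which rearranges to $(4np)^{d+t/2} < C_1 n\log n$, and combining with $(4np)^d \ge n^{1-\epsilon}$ forces $(4np)^t < M := C_1^2\, n^{2\epsilon}(\log n)^2$. So every bad $v$ has $d(u,v) \le k_0$, the largest integer with $(4np)^{k_0} < M$, and on (a) the number of bad $v$ is at most $\sum_{k:(4np)^k<M}|\Gamma_k(u)| \le \sum_{k:(4np)^k<M}(4np)^k \le 2M$. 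Since $M = n^{2\epsilon+o(1)}$, for $n$ large $2M \le 2n^{4\epsilon}$, so $\calR_u$ holds on $\calG$; as the intersection over $u$ is already built into $\calG$, this gives $\calR \supseteq \calG$ and hence $\prob{\calR} \ge 1-2n^{-1}$.

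The step I expect to be the main obstacle is the bookkeeping in the middle paragraph: arranging the double sum so that the geometric series in $m$ collapses to exactly the two stated terms, keeping the constant on the $(4np)^{2d}/n$ term strictly below the threshold constant $4$ (this slack is precisely what lets an over-threshold value be charged entirely to the $\log n$-term), and verifying that the $O(m)$-fold multiplicity at each level and the boundary terms are negligible. Everything else --- the probabilistic inputs and the final neighbourhood-size count $\sum_{(4np)^k<M}|\Gamma_k(u)| \le 2M \le 2n^{4\epsilon}$ --- is then routine.
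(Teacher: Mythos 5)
Your proposal is correct and takes essentially the same route as the paper: the same good events (the max-degree/level-size bound plus \prettyref{lmm:neighbor_intersect} applied to $\Gamma^*_{k+\ell,k}(u,v)$, union-bounded), the same decomposition of $N_{d}(u)\cap N_{d}(v)$ with the same $\gamma$-bounds, and the same conclusion that any over-threshold $v$ must lie within distance roughly $2\epsilon \log n/\log(4np)$ of $u$, so the bad count is at most $2n^{4\epsilon}$. The only differences are bookkeeping: the paper fixes the cutoff radius $2k_0$ with $k_0=\lfloor 2\epsilon\log n/\log (4np)\rfloor$ upfront and shows the sum for far-away $v$ lands exactly at the threshold $4^{2d+1}p^{2d}n^{2d-1}$, whereas you invert a distance-dependent bound (a valid variant with more slack), and your extra Chernoff argument for $|\Gamma_k(w)|\le (4np)^k$, $k\ge 2$, is unnecessary since the max-degree event $\calE$ already implies it deterministically.
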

\begin{proof}
Define an event 
$$
\calA = \cap_{u \neq v} \cap_{1\le k \le d } \cap_{1\le k \le \ell}  \left\{ |\Gamma^*_{k,\ell} (u, v) | \le \gamma_{k+\ell} \right\} 
$$
Recall $\calE$ defined in \prettyref{eq:def_calE}. Note that 
$$ 
(\calA \cap \calE)^c = (\calA^c \cap \calE) \cup \calE^c.
$$
Therefore,  
$$
\prob{ (\calA \cap \calE)^c } 
\le \prob{ \calA^c \cap \calE} + \prob{\calE^c}.
$$
Note that $\prob{\calE^c} \le 1/n$. Moreover,
\begin{align*}
\prob{ \calA^c \cap \calE} &\le \sum_{u\neq v} \sum_{ 1 \le k \le d } \sum_{1\le \ell \le d }
\prob{  \left\{  |\Gamma^*_{k,\ell} (u, v)| \ge \gamma_{k+\ell} \right\} \cap \calE} \\
& \overset{(a)}{\le} \sum_{u\neq v} \sum_{1 \le k \le d } \sum_{1\le \ell \le d }
\prob{  \left\{  |\Gamma^*_{k,\ell} (u, v)| \ge \gamma_{k+\ell} \right\} \cap \Delta_{k,\ell}  } \\
&\le \sum_{u\neq v} \sum_{ 1 \le k \le d } \sum_{1\le \ell \le d }
\prob{   |\Gamma^*_{k,\ell} (u, v)| \ge \gamma_{k+\ell}  \mid \Delta_{k,\ell} }
 \le  n^{-4},
\end{align*}
where $(a)$ follows from  $\calE \subset \Delta_k$
and the last inequality holds in view of 
\prettyref{lmm:neighbor_intersect} and $d \le n$. 
Therefore, $\prob{(\calA \cap \calE)^c} \le 2/n$. 

To prove the lemma, it suffices to argue that 
$ \calA \cap \calE \subset \calR$. 
To see this, let us assume that $\calA \cap \calE$ holds in the 
sequel. 
Note that 
$$
N_{d}(u) \cap N_d(v) \subset
\cup_{\ell=-d}^d \cup_{k=0}^d N_{d-k - \max\{\ell, 0\} } \left( \Gamma^*_{k+\ell, k} (u,v) \right).
$$
It follows that 
$$
\left| N_{d}(u) \cap N_d(v) \right| 
\le \sum_{\ell=-d}^d \sum_{k=0}^d 
\left| \Gamma^*_{k+\ell, k} (u,v) \right| (4np)^{d-k-\max\{\ell,0 \} }
$$
Set $k_0$
$$
k_0= \left \lfloor \frac{2\epsilon \log n} {\log (4np) } \right \rfloor
$$
Then 
$$
|N_{2k_0}(u)| \le \sum_{k=0}^{2k_0} (4np)^k = \frac{(4np)^{2k_0+1} -1 }{4np -1 } \le 2 (4np)^{2k_0} \le 2n^{4\epsilon}, 
$$
where the second-to-the-last inequality holds due to $2np \ge 1$. 
Note that for all $v \notin N_{2k_0}(u)$, we have
$$
| \Gamma^*_{k,\ell} (u,v) | =0 , \quad \forall 0 \le k +\ell \le 2k_0
$$
and thus
\begin{align*}
\left| N_{d} (u) \cap N_{d} (v) \right| 
& \le \sum_{\ell=-d}^d \sum_{k=0}^d \indc{0 \le k+\ell \le d} \indc{2k+\ell \ge 2k_0+1} \gamma_{2k+\ell} (4np)^{d-k-\max\{\ell,0\} } \\
& \le \sum_{\ell=-d}^d \sum_{k=0}^d \indc{0 \le k+\ell \le d} \indc{2k+\ell \ge 2k_0+1}   \left( 24 \log n + 4 np^2 
(4np)^{2k+\ell-2} \right)  (4np)^{ d -k -\max\{\ell,0\}  } \\
& \le  192 \log n (4np)^{ d-k_0-1/2 } 
+ 32 np^2 (4np)^{ 2d-2 }  \\
& \le 64 np^2 (4np)^{ 2d-2 }  = 4^{2d+1} p^{2d} n^{2d-1},
\end{align*}
where the last inequality holds due to 
$(4np)^{d+k_0+1/2} \ge 6 n \log n$ for $n$ sufficiently large. 
Hence, for every $u$, 
$$
\sum_{v} \indc{ |N_{d} (u) \cap N_{d} (v) | > (4)^{2d+1} p^{2d} n^{2d-1} }
\le |N_{2k_0}(u)|
\le 2 n^{4\epsilon}. 
$$
As a consequence, $ \calA \cap \calE \subset \calR$. 
\end{proof}

\subsection{Graph Branching in Sparse Regime}
\label{app:graph_branching_process}
In this subsection, we describe a branching process to explore the vertices in $N_k(u)$. See, \eg, \cite[Section 11.5]{AlonSpencer08} for a reference.

\begin{definition}[Graph Branching Process]
We begin with $u$ and apply breadth-first-search to explore the vertices in $N_k(u)$. In this process,
all vertices will be ``live'', ``dead'', or ``neutral''. The live vertices will be contained in a queue. Initially,
at time $0$, $u$ is live and the queue consists of only $u$, and all the other vertices are neutral. At each time step
$t$,  a live vertex $v$ is popped from the head of the queue, and we check all pairs $\{v, w\}$ for all neutral vertices
$w$ for adjacency. The poped vertex $v$ is now dead and those neutral vertices $w$ adjacent to $v$ are added to the end
of the queue (in an arbitrary order) and now are live. The process ends when the queue is empty.
\end{definition}

Note that such a branching process constructs a tree $T(u)$ rooted at $u$. In particular, 
at each time step, those neutral vertices $w$ adjacent to the poped vertex $v$ can be viewed as children of $v.$ 
For each vertex $v$ in $T(u)$, 
abusing notation slightly, we let 
$T_k(v) $ denote the subtree rooted at $v$ of depth $k$  in $T(u)$ 
and $\Pi_k(v)$ denote the 
set of vertices at distance $k$ from root $v$ in subtree $T_k(v)$.
Note that by construction, $\Pi_k(u)=\Gamma_k(u)$ for root $u$.

We are interested in bounding $|\Pi_k(v)|$ for each children $v$ of root $u.$ The following lemma shows that with high probability, 
for all childen $v$ of root $u$ such that 
$|\Pi_{1}(v)| \ge \tau$, $|\Pi_{k}(v)|$ grows at least  as $\tau \left( np/2 \right)^{k-1}$.
\begin{lemma}\label{lmm:branching_process}
Let $u$ be the root vertex and $1 \le \tau \le np$. Define 
$$
\calF_1 = \left\{ |\Pi_1(u)| \le 4np \right\} \cap \left\{ |\Pi_{1}(v)| \le 4np ,  \forall v \in \Pi_1(u) \right\},
$$
and for each $2 \le k \le d$ define
 $$
 \calF_k =
 \left\{  |\Pi_{k}(v)| \le (4np)^{k},  \forall v \in \Pi_1(u)   \right\}
 \cap
 \left\{ |\Pi_{k}(v)| \ge \tau \left( np/2 \right)^{k-1}, 
 \forall v \in \Pi_1(u) \text{ s.t. }  |\Pi_{1}(v)| \ge \tau \right\}
 $$ 
Suppose 
\begin{align}
np \ge \log n \quad \text{ and } \quad  
  (4np)^{d+1} = o(n).  \label{eq:neighbor_p_assumption_2}
\end{align} 
Then  for $2 \le k \le d$, 
$$
\prob{\calF_k \mid \calF_1, \ldots, \calF_{k-1}} \ge 
1- 
8np \exp \left(  - \Omega \left( \tau \left( \frac{np}{2} \right)^{k-1} \right) \right).
$$
It readily follows that
$$
\prob{\calF_d \cap \calF_{d-1} \cap \cdots \cap \calF_2  \mid \calF_1 } 
\ge  1  - 8np \exp \left( -\Omega\left(\tau np \right) \right). 
$$
Moreover, by letting 
$$ 
\calA_u = \left( \calF_d \cap \calF_{d-1} \cap \cdots \calF_2 \right) \cup \calF_1^c,
$$
we have 
$$
\prob{\calA_u^c} \le 8np \exp \left( -\Omega\left(\tau np \right) \right).
$$
\end{lemma}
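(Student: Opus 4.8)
The plan is to run the graph branching process of \prettyref{app:graph_branching_process} round by round and argue inductively, closely paralleling the proof of \prettyref{lmm:neighbor_grow_sparse}. Two structural facts make this work: for $v\in\Pi_1(u)$ one has $\Pi_k(v)\subseteq\Gamma_{k+1}(u)$, and $\Pi_k(v)$ is exactly the set of distance-$(k+1)$ vertices (from $u$) whose BFS-parent lies in $\Pi_{k-1}(v)$. In particular $\calF_1\cap\cdots\cap\calF_{k-1}$ is measurable with respect to the exposure up to the end of round $k-1$ of the process, at which point $N_k(u)$, $\Gamma_k(u)$ and the BFS-tree on $N_k(u)$ — hence every $\Pi_i(v)$ with $i\le k-1$ — are determined; and conditionally on that information the edges joining $\Gamma_k(u)$ to the neutral set $[n]\setminus N_k(u)$, whose cardinality $n-|N_k(u)|$ is then fixed, have not been examined and are i.i.d.\ $\Bern(p)$. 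Exposing round $k$ reveals precisely these edges.

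First I record the size bookkeeping. On $\calF_1\cap\cdots\cap\calF_{k-1}$ we have $|\Pi_1(u)|\le 4np$ and $|\Pi_{k-1}(v)|\le(4np)^{k-1}$ for all $v\in\Pi_1(u)$, so $|\Gamma_k(u)|\le(4np)^k$ and, summing a geometric series, $|N_k(u)|\le(4np)^{k+1}/(3np)\le(4np)^{d+1}/(3np)=o(n)$ by \prettyref{eq:neighbor_p_assumption_2}; likewise $p(4np)^{k-1}=(4np)^{k}/(4n)=o(1)$ and $p(4np)^{k}=o(1)$ for $2\le k\le d$. Hence during round $k$ the neutral set has size $(1-o(1))n$.

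Now fix $v\in\Pi_1(u)$ and set $A=\Pi_{k-1}(v)$ and $B=\Gamma_k(u)\setminus\Pi_{k-1}(v)$, disjoint sets with union $\Gamma_k(u)$, both determined by the conditioning. For the upper bound, every vertex of $\Pi_k(v)$ is a round-$k$ child of a vertex of $A$, so is a neutral vertex adjacent to $A$; thus $|\Pi_k(v)|$ is stochastically dominated by $\Bin(n-|N_k(u)|,1-(1-p)^{|A|})\preceq\Bin(n,p(4np)^{k-1})$, of mean $(4np)^{k}/4$, and a Chernoff upper tail gives $\prob{|\Pi_k(v)|>(4np)^k}\le\exp(-\Omega((4np)^k))\le\exp(-\Omega(\tau(np/2)^{k-1}))$, using $\tau\le np$. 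For the lower bound — needed only when $|\Pi_1(v)|\ge\tau$, in which case $\calF_{k-1}$ for $k\ge3$ and the hypothesis itself for $k=2$ give $|A|\ge\tau(np/2)^{k-2}$ — I discard the order-sensitive vertices: any neutral $z$ adjacent to a vertex of $A$ but to no vertex of $B$ has all its distance-$k$ neighbours in $A$, hence BFS-parent in $A=\Pi_{k-1}(v)$, hence $z\in\Pi_k(v)$. The number of such $z$ is $\Bin(n-|N_k(u)|,p_A(1-p_B))$ with $p_A=1-(1-p)^{|A|}\ge(1-o(1))p|A|$ and $p_B\le p|B|\le p(4np)^k=o(1)$; its mean is at least $(1-o(1))np\,|A|\ge\tfrac{3}{2}\tau(np/2)^{k-1}$ for $n$ large, so a Chernoff lower tail gives $\prob{|\Pi_k(v)|<\tau(np/2)^{k-1}}\le\exp(-\Omega(\tau(np/2)^{k-1}))$. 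A union bound over the at most $4np$ vertices $v\in\Pi_1(u)$ yields $\prob{\calF_k^c\mid\calF_1,\dots,\calF_{k-1}}\le 8np\exp(-\Omega(\tau(np/2)^{k-1}))$.

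The displayed consequences then follow routinely: by the chain rule and a union bound $\prob{\calF_d\cap\cdots\cap\calF_2\mid\calF_1}\ge 1-\sum_{k=2}^d 8np\exp(-\Omega(\tau(np/2)^{k-1}))$, and since $(np/2)^{k-1}\ge 2^{k-2}(np/2)$ for $k\ge2$ and $n$ large, this is a convergent series dominated up to a constant by its $k=2$ term, giving $1-8np\exp(-\Omega(\tau np))$; the estimate for $\prob{\calA_u^c}$ is immediate since $\calA_u^c\subseteq(\calF_2\cap\cdots\cap\calF_d)^c\cap\calF_1$. The one genuinely delicate point is the lower bound: unlike $\Gamma_k(u)$ in \prettyref{lmm:neighbor_grow_sparse}, the set $\Pi_k(v)$ is not a deterministic function of which vertices are adjacent to $\Pi_{k-1}(v)$, because the BFS-parent of a distance-$(k+1)$ vertex depends on the processing order of its distance-$k$ neighbours; discarding every neutral vertex that touches $\Gamma_k(u)\setminus\Pi_{k-1}(v)$ restores an honest Binomial, and one must check — which the estimate $p_B=o(1)$ does — that this discarding costs only a $(1-o(1))$ factor and hence does not spoil the lower bound. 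Everything else is a transcription of the argument behind \prettyref{lmm:neighbor_grow_sparse}.
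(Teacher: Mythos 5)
Your proof is correct, and it reaches the same bounds by the same basic machinery (BFS exposure, binomial domination, Chernoff tails from \prettyref{thm:binom_lower_tail}, a union bound over the at most $4np$ children of $u$), but it organizes the conditioning differently from the paper at exactly the point you flag as delicate. The paper exposes level $k$ \emph{sequentially over the children blocks}: it orders $\Pi_1(u)=\{v_1,\dots,v_{d_u}\}$, introduces the partial events $\calF_{k,i}$, and uses the fact that when the block $\Pi_{k-1}(v_i)$ is processed all earlier blocks have already claimed their offspring, so that conditional on the past $|\Pi_k(v_i)|$ is \emph{exactly} binomial in the then-current neutral population (whose size is controlled via $\calF_{k,i-1}$ together with $\calF_1,\dots,\calF_{k-1}$); no within-level competition term appears, but one pays for this with the chaining over $i$ inside level $k$. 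You instead condition once on the full exposure through level $k$ (so all $\Pi_{k-1}(v)$ and the neutral set are fixed and the remaining edges are i.i.d.\ $\Bern(p)$), and for the lower bound you thin to neutral vertices adjacent to $A=\Pi_{k-1}(v)$ but to no vertex of $B=\Gamma_k(u)\setminus A$, which is an honest $\Bin\bigl(n-|N_k(u)|,\,p_A(1-p_B)\bigr)$ and costs only the factor $1-p_B=1-o(1)$ thanks to $p(4np)^k=o(1)$ under \prettyref{eq:neighbor_p_assumption_2}. Both resolutions of the parent-assignment issue are valid and give the same $8np\exp\bigl(-\Omega(\tau(np/2)^{k-1})\bigr)$ bound per level; your version avoids the bookkeeping of the $\calF_{k,i}$ but needs the extra estimate $p_B=o(1)$, while the paper's ordering argument keeps every count an exact binomial. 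The remaining steps (size bookkeeping on $|N_k(u)|$, the chain rule over $k$, domination of the sum by its $k=2$ term, and $\calA_u^c=(\calF_2\cap\cdots\cap\calF_d)^c\cap\calF_1$) match the paper's proof.
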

\begin{proof}
Fix $2 \le k \le d.$
Suppose the neighbors of root vertex $u$ are added to the queue in the order of 
$v_1, v_2, \ldots, v_{d_u}$, where $d_u= | \Pi_1(u)| $. 
Then by the branching process aforementioned, 
$\Pi_{k}(v_1), \ldots, \Pi_{k}(v_{i-1})$ are revealed
before $\Pi_{k}(v_{i})$. 

Fix $1 \le i \le d_u$ and define 
$$
\calF_{k,i} = 
\left\{ | \Pi_{k} (v_j) | \le (3np)^k, \; \forall j \in [i] \right\}
\cap \left\{ |\Pi_{k}(v_j)| \ge \tau \left( \frac{np}{2} \right)^{k-1}, 
 \forall j \in [i] \text{ s.t. }  |\Pi_{1}(v_j)| \ge \tau \right\}.
$$
Then $\calF_k=\calF_{k,d_u}.$

Conditional on $\Pi_{k-1}(v_i)$, the probability of a given
neutral vertex $w$ being connected to some vertices in $\Pi_{k-1}(v_i)$ is
$$
p_k \triangleq 1- (1-p)^{|\Pi_{k-1}(v_i)|} \le p |\Pi_{k-1}(v_i)|.
$$ 

On the one hand, there are at most $n$ neutral vertices. 
Therefore, conditional on $|\Pi_{k-1}(v_i)|$,  
$|\Pi_{k}(v_i)|$ is stochastically dominated by 
$
 \Bin \left( n , \; p \left|\Pi_{k-1}(v_i) \right|  \right)
$
and hence 
\begin{align}
\prob{   |\Pi_{k}(v_i)|  \ge  (4np)^{k} \mid \calF_1,\ldots, \calF_{k-1}, \calF_{k,i-1} }
&\le \prob{ \Bin \left(  n, p (4np)^{k-1 }\right) \ge (4np)^k } \nonumber \\
& \le  \exp \left( -  4^{k-1} (np)^k \right), \label{eq:neighbor_upper}
\end{align}
where the last inequality follows from the Binomial tail bound~\prettyref{eq:binomial_upper}.

On the other hand, in view of assumption \prettyref{eq:neighbor_p_assumption_2}, 
conditional on $\calF_1,\ldots, \calF_{k-1}, \calF_{k,i-1}$ 
there are at least 
$$
n - 1 - \sum_{i=1}^{d_u} 
\sum_{\ell=0}^{k-1} | \Pi_{\ell} (v_i) |
-\sum_{j=1}^{i-1}  |\Pi_{k}(v_j)| 
\ge n - 1 - 4np \sum_{\ell=0}^{k} (4np)^\ell  
= n-\frac{ (4 np)^{k+2} -1 }{4np-1} = n-o(n)
$$ 
neutral vertices to be connected to some vertices in $\Pi_{k-1}(v_i)$,
and for each  $v_i$ such that $|\Pi_1(v_i)| \ge \tau$, 
$$
 p_k = 1- (1-p)^{|\Pi_{k-1}(v_i)|} \ge  \left( 1-o(1) \right) p  \tau \left( \frac{np}{2} \right)^{k-2}. 
$$
Therefore, conditional on $\calF_1,\ldots, \calF_{k-1}, \calF_{k,i-1}$,
$|\Pi_{k}(v_i)|$ is stochastically lower bounded by  
$$
 \Bin \left( n -o(n) , \left( 1-o(1) \right) p  \tau \left( \frac{np}{2} \right)^{k-2} \right)
$$
and hence for $2 \le k \le d$,
\begin{align}
& \prob{   |\Pi_{k}(v_i)|  \ge  \tau \left( \frac{np}{2} \right)^{k-1}   \mid  \calF_1,\ldots, \calF_{k-1}, \calF_{k,i-1} } \nonumber \\
& \le \prob{ \Bin \left( n -o(n), \left( 1-o(1) \right) p  \tau \left( \frac{np}{2} \right)^{k-2} \right) \le \tau \left( \frac{np}{2} \right)^{k-1} }  \nonumber \\
& \le \exp \left( - \Omega \left(  \tau \left( \frac{np}{2} \right)^{k-1} \right) \right). \label{eq:neighbor_lower}
\end{align}
Combining \prettyref{eq:neighbor_upper} and \prettyref{eq:neighbor_lower} yields that
\begin{align*}
\prob{ \calF_{k,i} \mid  \calF_1,\ldots, \calF_{k-1} } \ge \prob{ \calF_{k,i-1} \mid  \calF_1,\ldots, \calF_{k-1} }
\left( 1- 2\exp \left( - \Omega \left(  \tau \left( \frac{np}{2} \right)^{k-1} \right) \right)
 \right).
\end{align*}
Therefore,
\begin{align*}
\prob{ \calF_k \mid  \calF_1,\ldots, \calF_{k-1} } 
 \ge 1- 8np \exp \left( - \Omega \left( \tau \left( \frac{np}{2} \right)^{k-1}  \right) \right). 
\end{align*}
Finally, we note that 
\begin{align*}
&\prob{\calF_d \cap \calF_{d-1} \cap \cdots \cap \calF_2 \mid \calF_1} \\
& =\prob{ \calF_2 \mid \calF_1 } 
\prob{\calF_3 \mid \calF_1, \calF_2}  
 \cdots  \prob{\calF_d \mid \calF_1, \ldots, \calF_{d-1} } \\ 
& \ge
\prod_{k=2}^{d} \left( 1- 8np \exp \left( - \Omega \left( \tau \left( \frac{np}{2} \right)^{k-1}  \right) \right) \right) \\
& \ge 1 -  8np
\sum_{k=2}^{d} \exp \left( - \Omega \left( \tau \left( \frac{np}{2} \right)^{k-1}  \right) \right) \\
& \ge 1- 8 np \exp \left( -\Omega(\tau np )\right).
\end{align*}
Moreover, by the definition of $\calA_u$,
we have
$$
\calA_u^c = \left(\calF_d \cap \calF_{d-1} \cap \cdots \cap \calF_2 \right)^c \cap \calF_1 .
$$
Hence,
\begin{align*}
\prob{\calA_u^c} & =
\prob{\calF_1} \prob{\left(\calF_d \cap \calF_{d-1} \cap \cdots \cap \calF_2 \right)^c \mid \calF_1 } \\
& \le \prob{\left(\calF_d \cap \calF_{d-1} \cap \cdots \cap \calF_2 \right)^c \mid \calF_1 } \\
& \le 8 np \exp \left( -\Omega(\tau np )\right),
\end{align*}
completing the proof. 
\end{proof}

The following lemma shows that with high probability, for all possible root vertex $u$,
it has at most one children $v$ with $|\Pi_1(v)| \le \tau$ for $\tau=o(np)$.
Let $A$ denote the adjacency matrix of $G.$
For three distinct vertices $u, v, w$, define
$$
\calB_{u,v,w} = \{ A_{u,v} =1, A_{u, w} =1\} \cap \{ |\Pi_1(v)| \le \tau \} \cap \{  \Pi_1(w) \le \tau \}. 
$$
and 
$
\calB= \cup_{u, v, w} \calB_{u,v,w}. 
$
\begin{lemma} \label{lmm:low_degree_children}
Assume 
\begin{align}
np \ge \log n, \text{ and } 
np =o(n^{1/2}), \text{ and } \tau=o(np).
\label{eq:low_degree_childen_assump}
\end{align}
Then 
$$
\prob{\calB \cap \calE } \le n^{-1+o(1)}.
$$
\end{lemma}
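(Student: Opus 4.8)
The plan is to take a union bound over all triples of distinct vertices $(u,v,w)$, showing that for each one $\prob{\calB_{u,v,w}\cap\calE}\le p^2\exp(-2(1-o(1))np)$. Since there are at most $n^3$ such triples this gives $\prob{\calB\cap\calE}\le n^3 p^2\exp(-2(1-o(1))np)$, and a short calculation using $np\ge\log n$ finishes: when $np\le\log^2 n$ we have $n^3p^2\le n(\log n)^4$ and $\exp(-2(1-o(1))np)\le n^{-2+o(1)}$, so the product is $n^{-1+o(1)}$, while when $np>\log^2 n$ the exponential factor is super-polynomially small and dominates $n^3 p^2\le n^3$. All the $o(1)$ terms below involve only the global quantities $np$, $\tau/(np)$ and $(np)^2/n$, each $o(1)$ under the hypotheses \prettyref{eq:low_degree_childen_assump}, so they are uniform over the triples.

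Fix distinct $u,v,w$. The crux is a deterministic, \emph{order-independent} lower bound on the number of children $|\Pi_1(x)|$ of a level-$1$ vertex $x\in\Gamma_1(u)$ in the BFS tree $T(u)$: because BFS dequeues every level-$1$ vertex before any level-$2$ vertex, any vertex $y\notin N_1(u)$ whose unique neighbour inside $\Gamma_1(u)$ is $v$ must be discovered (and hence made a child) by $v$, regardless of the dequeueing order. To also decouple $v$ from $w$, let $B:=\{y\in[n]\setminus N_1(u): y\sim v' \text{ for some } v'\in\Gamma_1(u)\setminus\{v,w\}\}$ and set $D_v:=|\{y\in[n]\setminus(N_1(u)\cup B): y\sim v,\ y\not\sim w\}|$, and $D_w$ symmetrically. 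Then $D_v\le|\Pi_1(v)|$ and $D_w\le|\Pi_1(w)|$, so
$\calB_{u,v,w}\cap\calE\subseteq\{A_{uv}=A_{uw}=1\}\cap\{D_v\le\tau\}\cap\{D_w\le\tau\}\cap\calE''$,
where $\calE'':=\{|\Gamma_1(u)|\le 4np\}\cap\{d_{v'}\le 4np\ \forall v'\in\Gamma_1(u)\setminus\{v,w\}\}$ is implied by $\calE$. On $\calE''$ one has $|N_1(u)\cup B|\le 1+4np+16(np)^2=o(n)$, which is exactly where the hypothesis $np=o(n^{1/2})$ is used.

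Now expose the randomness in two stages. In Stage~A reveal all edges incident to $u$ and to $\Gamma_1(u)\setminus\{v,w\}$; this determines $N_1(u)$, $B$, the indicator of $\calE''$, and the value $n':=n-|N_1(u)\cup B|$. In Stage~B reveal the $2n'$ edges $\{vy,wy: y\in[n]\setminus(N_1(u)\cup B)\}$, which are fresh i.i.d.\ $\Bern(p)$ variables; hence, conditionally on Stage~A, $D_v+D_w\sim\Binom(n',2p(1-p))$. Since $\{D_v\le\tau\}\cap\{D_w\le\tau\}\subseteq\{D_v+D_w\le2\tau\}$, and on $\calE''$ the mean $2n'p(1-p)=2np(1-o(1))$ while $2\tau=o(np)$, the Binomial lower tail \prettyref{eq:binomial_lower} gives $\prob{D_v+D_w\le2\tau\mid\text{Stage A}}\le\exp(-2np(1-o(1)))$ on $\calE''$. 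Taking expectations over Stage~A,
$\prob{\calB_{u,v,w}\cap\calE}\le\Exp\!\big[\Indc_{\{A_{uv}=A_{uw}=1\}\cap\calE''}\exp(-2np(1-o(1)))\big]\le p^2\exp(-2np(1-o(1)))$,
as claimed, and summing over triples and applying the case split completes the proof. I expect the main obstacle to be the honest handling of the order-dependence of the BFS tree — one cannot simply declare $v$ to be dequeued first — which forces the passage through the order-independent quantities $D_v,D_w$; a secondary nuisance is checking that $n'=n(1-o(1))$ and the tail estimate hold uniformly on $\calE''$.
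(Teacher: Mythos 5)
Your proposal is correct, and its skeleton is the same as the paper's: a union bound over the at most $n^3$ triples $(u,v,w)$, a per-triple bound of the form $p^2\exp\left(-2(1-o(1))np\right)$ obtained from the Binomial lower tail \prettyref{eq:binomial_lower} after using $\calE$ and $np=o(n^{1/2})$ to show the effective number of trials is $n-o(n)$, and then the elementary calculation with $np\ge\log n$ (your explicit case split at $np\le\log^2 n$ is in fact tidier than the paper's one-line conclusion). Where you differ is in how the order-dependence of the BFS tree is handled: the paper conditions on the numbers $N_v,N_w$ of neutral vertices at the moments $v$ and $w$ are popped, so that $|\Pi_1(v)|$ and $|\Pi_1(w)|$ become two conditionally independent Binomials with $n-o(n)$ trials each; you instead replace $|\Pi_1(v)|,|\Pi_1(w)|$ by the order-independent lower bounds $D_v,D_w$ (vertices outside $N_1(u)\cup B$ adjacent to exactly one of $v,w$), which by the level structure of BFS are necessarily children of $v$ resp.\ $w$, and then a two-stage edge exposure makes $D_v+D_w\sim\Binom\bigl(n',2p(1-p)\bigr)$ so that a single lower-tail bound suffices. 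Your variant buys a cleaner justification (no reasoning about the state of the queue at random stopping times, only a deterministic containment plus deferred decisions on fresh edges), at the small cost of introducing $B$ and checking its size on $\calE$; the paper's conditioning is shorter to state but leaves the conditional-independence and freshness of the exposed edges more implicit. Both yield the same per-triple bound, so the proof goes through as you wrote it.
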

\begin{proof} 
By the union bound,
$$
\prob{ \calB \cap \calE}
 \le \sum_{u,v,w} \prob{ \calB_{u,v,w} \cap \calE}
$$
it reduces to bounding $\prob{ \calB_{u,v,w} \cap \calE}.$

Let $N_v$ and $N_w$ denote the number of 
neutral vertices in the branching process when $v$ and $w$ are popped from the head of the queue,
respectively. Then conditional on $N_v$ and $N_w$, 
$|\Pi_1(v)|$ and $|\Pi_1(w)|$ are independent and
$|\Pi_1(v)| \sim \Binom( N_v, p ) $  and $|\Pi_1(w)| \sim \Binom(N_w, p).$
On event $\calE,$ both $N_v$ and $N_w$ is at least 
$n-1-4np-(4np)^2=n-o(n)$ in view of the assumption 
$np=o(n^{1/2}).$ Therefore, 
\begin{align*}
&\prob{ \left\{ |\Pi_1(v)| \le \tau, \Pi_1(w) \le \tau\right\}\cap \calE \mid A_{u,v}=1, A_{u,w}=1 } \\
&=\sum_{i,j=1}^{n-o(n)} \prob{ 
\left\{ |\Pi_1(v)| \le \tau, \Pi_1(w) \le \tau, N_v=i, N_w=j  \right\} \cap \calE \mid A_{u,v}=1, A_{u,w}=1 } \\
& \le \sum_{i,j=n-o(n)}^{n} \prob{ |\Pi_1(v)| \le \tau, \Pi_1(w) \le \tau,  N_v=i, N_w=j \mid A_{u,v}=1, A_{u,w}=1 } \\
& = \sum_{i,j=n-o(n)}^{n} \prob{ N_v=i, N_w=j \mid A_{u,v}=1, A_{u,w}=1 } 
\prob{ |\Pi_1(v)| \le \tau, \Pi_1(w) \le \tau \mid N_v =i, N_w=j} \\
& = \sum_{i,j=n-o(n)}^{n} \prob{ N_v=i, N_w=j \mid A_{u,v}=1, A_{u,w}=1 }
\left( \prob{ \Binom\left( n-o(n), p \right) \le \tau } \right)^2\\
& \le \exp \left(- 2 (1- o(1) ) np \right),
\end{align*}
where the last inequality holds due to the Binomial tail bound~\prettyref{eq:binomial_lower}
and the assumption that $\tau=o(np).$
It follows that 
$$
\prob{ \calB_{u,v,w} \cap \calE} \le p^2 \exp \left( -2 (1- o(1) ) np \right) =o(1/n^3),
$$
where the last equality holds due to $np \ge \log n$.
\end{proof}

Now we are ready to prove our main proposition. Let $\calH_u$ denote the event that tree $T(u)$ satisfies 
\begin{enumerate}
\item $u$ has at most one children $v$ 
such that $|\Pi_1(v)| \le \tau$. 
\item For each children $v$ of $u$ with $|\Pi_1(v)| \ge \tau$, $|\Pi_{k}(v)| \ge \tau \left( \frac{np}{2} \right)^{k-1}$
for all $1 \le k \le d$. 
\end{enumerate}
Define $\calH=\cap_u \calH_u$. 
\begin{proposition}\label{prop:tree_process}
Suppose \prettyref{eq:neighbor_p_assumption_2} and
\prettyref{eq:low_degree_childen_assump} hold and $\tau \to \infty$. Then 
$$
\prob{\calH} \ge 1- 3n^{-1+o(1)}. 
$$
\end{proposition}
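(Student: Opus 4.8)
The plan is to split the event $\calH_u$ into the two properties it asserts and control each one globally. For a fixed root $u$, let $\calH_u^{(1)}$ be the event that $u$ has at most one child $v$ in $T(u)$ with $|\Pi_1(v)| \le \tau$, and let $\calH_u^{(2)}$ be the event that every child $v$ of $u$ with $|\Pi_1(v)| \ge \tau$ satisfies $|\Pi_k(v)| \ge \tau(np/2)^{k-1}$ for all $1 \le k \le d$, so that $\calH_u = \calH_u^{(1)} \cap \calH_u^{(2)}$ and hence $\calH = \bigl(\bigcap_u \calH_u^{(1)}\bigr) \cap \bigl(\bigcap_u \calH_u^{(2)}\bigr)$. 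By a union bound it then suffices to show that each of $\bigcap_u \calH_u^{(1)}$ and $\bigcap_u \calH_u^{(2)}$ fails with probability $n^{-1+o(1)}$.

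The first part is essentially a restatement of \prettyref{lmm:low_degree_children}. Since in the graph branching process every neighbour of $u$ in $G$ is precisely a child of $u$ in $T(u)$, the event $\bigcap_u \calH_u^{(1)}$ is exactly the complement of the event $\calB = \bigcup_{u,v,w}\calB_{u,v,w}$ appearing there. Therefore $\prob{(\bigcap_u \calH_u^{(1)})^c} = \prob{\calB} \le \prob{\calB \cap \calE} + \prob{\calE^c} \le n^{-1+o(1)} + n^{-1}$, using \prettyref{lmm:low_degree_children} (whose hypotheses are exactly \prettyref{eq:low_degree_childen_assump}, assumed here) and \prettyref{lmm:max_degree}.

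For the second part I would work on the max-degree event $\calE = \{\max_v d_v \le 4np\}$ of \prettyref{lmm:max_degree}. On $\calE$, the event $\calF_1$ of \prettyref{lmm:branching_process} holds for every root $u$ simultaneously, since $|\Pi_1(u)| = d_u \le 4np$ and every child $v$ of $u$ has $|\Pi_1(v)| \le d_v \le 4np$; consequently, on $\calE$, the event $\calA_u = (\calF_2 \cap \cdots \cap \calF_d) \cup \calF_1^c$ of that lemma reduces to $\calF_2 \cap \cdots \cap \calF_d$, which is contained in $\calH_u^{(2)}$ (the lower bounds $|\Pi_k(v)| \ge \tau(np/2)^{k-1}$ for $2 \le k \le d$ are supplied by $\calF_k$, and the case $k=1$ is the defining hypothesis $|\Pi_1(v)| \ge \tau$). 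Hence $(\calH_u^{(2)})^c \subseteq \calE^c \cup \calA_u^c$, and summing the bound of \prettyref{lmm:branching_process} over the $n$ choices of root gives
\[
\prob{\Bigl(\bigcap_u \calH_u^{(2)}\Bigr)^c} \;\le\; \prob{\calE^c} + \sum_u \prob{\calA_u^c} \;\le\; n^{-1} + 8n^2p\,\exp\bigl(-\Omega(\tau np)\bigr).
\]
Since $np \ge \log n$ by \prettyref{eq:neighbor_p_assumption_2} and $\tau \to \infty$, we have $\tau np \ge \tau\log n = \omega(\log n)$, so the last term is $n^{-\omega(1)}$ and this probability is $O(n^{-1})$. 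Adding the two parts yields $\prob{\calH^c} \le 3n^{-1+o(1)}$, as claimed.

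This argument is assembly rather than invention; the only point that requires care is the interaction between the single global event $\calE$ and the per-root events $\calA_u$ — specifically, that $\calE$ by itself forces $\calF_1$ for all roots at once, so that when we union-bound over the choice of root we pay only the exponentially small $\prob{\calA_u^c}$ and do not re-incur $\prob{\calE^c}$ a linear number of times. The assumption $\tau \to \infty$ is exactly what makes that union bound close: it upgrades the per-root failure probability $8np\exp(-\Omega(\tau np))$ from polynomially small to super-polynomially small, absorbing both the factor $n$ from the union and the extra factor $np$ coming from \prettyref{lmm:branching_process}.
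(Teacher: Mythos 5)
Your proof is correct and follows essentially the same route as the paper: it combines \prettyref{lmm:branching_process}, \prettyref{lmm:low_degree_children}, and the max-degree event $\calE$ of \prettyref{lmm:max_degree} via union bounds, with the same key observation that $\calE$ forces $\calF_1$ for every root simultaneously so only the super-polynomially small $\prob{\calA_u^c}$ is paid per root. Splitting $\calH_u$ into the two sub-events (and paying $\prob{\calE^c}$ twice) is only a cosmetic repackaging of the paper's inclusion $(\cap_u \calA_u)\cap\calB^c\cap\calE\subset\calH$.
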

 \begin{proof}
 Note that 
 $$
 \left( \cap_u \calA_u \right) \cap \left(\calB^c \cup \calE^c \right) \cap \calE \subset \calH.  
 $$
 Hence, 
 $$
 \prob{\calH} \ge 1- \sum_u \prob{ \calA^c_u } - \prob{ \calB \cap \calE } - \prob{\calE^c }.
 $$
 In view of \prettyref{lmm:branching_process}, we have
 $$
 \prob{ \calA^c_u } \le n^{-\omega(1) }.
 $$
 By \prettyref{lmm:low_degree_children}, we have
 $$
 \prob{ \calB \cap \calE }  \le n^{-1+o(1)}.
 $$
By \prettyref{lmm:graph_properties}, we have
$\prob{\calE} \ge 1-1/n$. 
Then the conclusion readily follows. 
 \end{proof}

\section{Time Complexity of Algorithm~\ref{alg:graph_matching_ell_hop} } \label{app:time_complexity_1}

Recall that in Algorithm~\ref{alg:graph_matching_ell_hop}, we need to 
efficiently check whether there exist $m$ independent $\ell$-paths from a given vertex $i_2$ to a set of
$m$ seeded vertices $L \subset \Gamma^{G_2}_\ell(i_2)$ in $G_2$ and 
$m$ independent $\ell$-paths from a given vertex $i_1$ to the corresponding seed set $\pi_0(L) \subset \Gamma^{G_1}_\ell(i_1)$ in
$G_1$. Below we give the specific procedure to reduce this task to a maximum flow problem in a directed graph with source $i_1$ and sink $i_2$.

First, we explore the local neighborhood $N^{G_1}_\ell(i_1)$ of $i_1$ in $G_1$ up to radius $\ell$. We delete all the edges $(u,v)$ found if $(u,v)$ are at the same distance from $i_1$. Also, we direct all the edges $(u,v)$ from $u$ to $v$ if $u$ is closer to $i_1$ than $v$ by distance $1$. Afterwards, we get a local neighborhood of $i_1$, denoted by $\tilde{N}_{\ell}^{G_1}(i_1)$, with edges pointing away from $i_1$. Note that $\tilde{N}_{\ell}^{G_1}(i_1)$ is not exactly a tree because a vertex  may have multiple parents. 

Next, we repeat the above procedure for vertex $i_2$ in $G_2$ in exactly the same manner except that the edges are directed towards $i_2$. Let $\tilde{N}_{\ell}^{G_2}(i_2)$ denote the resulting local neighborhood of $i_2$. 

Finally, we take the graph union of $\tilde{N}_{\ell}^{G_1}(i_1)$ and 
$\tilde{N}_{\ell}^{G_2}(i_2)$, by treating seeded vertex $u \in \Gamma_{\ell}^{G_2}(i_2)$
with its correspondence $\pi_0(u) \in \Gamma_{\ell}^{G_1}(i_1) $ as the same vertex. 
All the other vertices, seeded or non-seeded, 
from the two different local neighborhoods are treated 
as distinct vertices. We denote the resulting graph union as $N_\ell (i_1, i_2)$. 

Recall that we aim to find independent (vertex-disjoint except for $i_1$) $\ell$-paths from $i_1$ to seeded vertices in 
$\Gamma_{\ell}^{G_1}(i_1)$. Thus, we need to enforce the constraint that every vertex other than $i_1$ can appear at most once. Similarly for $i_2$. To this end, we apply the following procedure. 
\begin{enumerate}
\item Split each vertex $u$ in $N_\ell(i_1,i_2)$ into to two vertices: $uin$ and $uout$;
\item Add an edge of capacity $1$ from $uin$ to $uout$;
\item Replace each other edge $(u,v)$ in $N_\ell(i_1,i_2)$ 
 with an edge  from $uout$ to $vin$ of capacity 1;
\item Find a max-flow from $i_1out$ to $i_2in$. 
\end{enumerate}

The idea behind this construction is as follows. Any flow path from the source vertex $i_1out$ to the sink vertex $i_2in$ must have capacity 1, since all edges have capacity 1. Since all capacities are integral, there exists an integral max-flow in which all flows are integers~\cite{ford1956maximal}.
No two flow paths can pass through the same intermediary vertex, because in passing through a vertex $u$ in the graph the flow path must cross the edge from $uin$ to $uout$, and the capacity here has been restricted to one.
Also, the flow path must pass exactly $2\ell$ distinct $uout$ vertices (including the source vertex $i_1out$, because all the edges are pointing away from $i_1out$ and towards $i_2in$.
Thus each flow path from $i_1out$ to $i_2in$ represents a vertex-disjoint $2\ell$-path from the source vertex $i_1$ to sink vertex $i_2$ in $N_\ell (i_1, i_2)$.  As a consequence, the max-flow from $i_1out$ to $i_2in$ corresponds to the maximum number, $m$, of independent $\ell$-paths from $i_2$ to a set of seeded vertices $L \subset \Gamma^{G_2}_\ell(i_2) $ in $G_2$, and 
of independent $\ell$-paths from $i_1$ to the corresponding seed set $\pi_0(L) \subset \Gamma^{G_1}_\ell(i_1)$ in
$G_1$. 

As for time complexity, we can find a  max-flow from $i_1out$ to $i_2in$ via Ford--Fulkerson algorithm~\cite{ford1956maximal} in $O( |E| f)$ time steps, where $|E|$ is the total number of edges of $N_\ell (i_1, i_2)$ after vertex splitting and edge replacement,
and $f$ is the max flow. Under the choice of $\ell$ given in \prettyref{eq:def_ell_1}, the total number of vertices and
edges in $N_\ell(i_1,i_2)$ are $O(n^{1/2-\epsilon})$. Hence, $|E| = O(n^{1/2-\epsilon})$. Moreover, the max flow $f$ is upper bounded by the number of seeded vertices in  $\Gamma_{\ell}^{G_1}(i_1)$ which is at most $O(n^{1/2-\epsilon} \alpha)$ with high probability.  Hence, in total it takes $O(n \alpha)$ time steps to compute the
max-flow from $i_1out$ to $i_2in$ via Ford--Fulkerson algorithm.

\section{Tail Bounds for Binomial Distributions}
\begin{theorem}[\cite{Okamoto1959,Mitzenmacher05}]\label{thm:binom_lower_tail}
Let $X \sim \Bin(n,p)$. It holds that
\begin{align}
\prob{X \le n t} &\le \exp \left( - n \left( \sqrt{p} - \sqrt{t} \right)^2\right), \quad \forall 0 \le t \le p 
\label{eq:binomial_lower}\\
\prob{X \ge n t} & \le \exp \left( - 2n  \left( \sqrt{t} - \sqrt{p} \right)^2\right), \quad \forall p \le t \le 1.
\label{eq:binomial_upper}\\
\prob{X \ge n t} &  \le 2^{- nt}, \quad \forall 6p \le t \ge 1.
\label{eq:binomial_upper_2}
\end{align}
\end{theorem}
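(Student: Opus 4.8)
The plan is to obtain all three inequalities from the Cram\'er--Chernoff method followed by two elementary estimates on the binary relative entropy $D(a\|b) := a\ln\frac{a}{b} + (1-a)\ln\frac{1-a}{1-b}$ (with the convention $0\ln 0 = 0$). First I would record the standard exponential tail bounds: writing $X = \sum_{i=1}^n Y_i$ with $Y_i$ i.i.d.\ $\Bern(p)$ and $\expect{e^{\lambda Y_i}} = 1-p+pe^{\lambda}$, Markov's inequality gives $\prob{X \ge nt} \le e^{-\lambda n t}(1-p+pe^{\lambda})^n$ for every $\lambda > 0$; the minimizer in $\lambda$ is $\lambda^\star = \ln\frac{t(1-p)}{p(1-t)}$, which is nonnegative exactly when $t \ge p$, and substituting it yields $\prob{X \ge nt} \le e^{-nD(t\|p)}$ for $t \ge p$. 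The mirror computation with $e^{-\lambda X}$ gives $\prob{X \le nt} \le e^{-nD(t\|p)}$ for $t \le p$. For $p\in\{0,1\}$ all three bounds are trivial, so assume $p\in(0,1)$ below.

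It then remains to lower bound $D(t\|p)$. For \prettyref{eq:binomial_lower} I would use the chain $D(t\|p) \ge 2\hellinger^2 \ge (\sqrt{t}-\sqrt{p})^2$, where $\hellinger^2 = \tfrac12[(\sqrt t-\sqrt p)^2 + (\sqrt{1-t}-\sqrt{1-p})^2]$ is the squared Hellinger distance between $\Bern(t)$ and $\Bern(p)$: the first inequality is the standard bound, obtained by Jensen ($-\tfrac12 D(t\|p) = t\ln\sqrt{p/t} + (1-t)\ln\sqrt{(1-p)/(1-t)} \le \ln(\sqrt{tp}+\sqrt{(1-t)(1-p)}) = \ln(1-\hellinger^2) \le -\hellinger^2$), and the second just drops the $(\sqrt{1-t}-\sqrt{1-p})^2$ term. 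For \prettyref{eq:binomial_upper} I would prove the sharper one-sided bound $D(t\|p) \ge 2(\sqrt t - \sqrt p)^2$ for $p \le t \le 1$ by reducing to one variable: fix $p$, set $v = \sqrt p$, and study $F(u) := D(u^2\|p) - 2(u-v)^2$ on $[v,1)$. One checks $F(v)=F'(v)=0$ and
\begin{align*}
F''(u) = 4\ln(u/v) + 2\ln\frac{1-v^2}{1-u^2} + \frac{4u^2}{1-u^2} \ge 0 \qquad (v \le u < 1),
\end{align*}
since each summand is nonnegative once $u \ge v$; hence $F' \ge 0$ and $F \ge 0$ on $[v,1)$, which is the claim after the substitution $t = u^2$.

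For \prettyref{eq:binomial_upper_2} (where the range ``$6p \le t \ge 1$'' should read $6p \le t \le 1$) I would \emph{not} pass through $D$ but instead plug the suboptimal choice $\lambda = \ln(t/p)>0$ into $\prob{X \ge nt} \le e^{-\lambda nt}e^{np(e^{\lambda}-1)}$ (using $1-p+pe^{\lambda} = 1+p(e^\lambda-1) \le e^{p(e^{\lambda}-1)}$), obtaining
\begin{align*}
\prob{X \ge nt} \le \exp\!\Big(-nt\big[\ln(t/p) - 1 + p/t\big]\Big).
\end{align*}
Since $t \ge 6p$ forces $\ln(t/p) \ge \ln 6$ and $p/t \ge 0$, the bracket is at least $\ln 6 - 1 > \ln 2$, so the right-hand side is at most $2^{-nt}$.

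The main obstacle is purely the analytic step for \prettyref{eq:binomial_upper}: one has to make sure $F$ stays nonnegative on the entire interval $[\sqrt p,1)$ and not only near $u=\sqrt p$, which is exactly why I would argue via the sign of $F''$ — a sum of three terms each monotone in $u$ on $u\ge\sqrt p$ — rather than through a local Taylor expansion; note also that the factor $2$ genuinely cannot be used in \prettyref{eq:binomial_lower} (already $D(0\|p)=-\ln(1-p)<2p=2(\sqrt 0-\sqrt p)^2$ for small $p$), which is why the two bounds are asymmetric. Everything else — the optimization of $\lambda$, and verifying $F(v)=F'(v)=0$ — is routine bookkeeping.
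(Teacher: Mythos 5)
Your proposal is correct, and I verified the key computations: the Chernoff step giving $\prob{X\ge nt}\le e^{-nD(t\|p)}$ for $t\ge p$ (and its mirror for $t\le p$), the chain $D(t\|p)\ge 2\hellinger^2\ge(\sqrt t-\sqrt p)^2$ via Jensen, the second derivative $F''(u)=4\ln(u/v)+2\ln\frac{1-v^2}{1-u^2}+\frac{4u^2}{1-u^2}\ge 0$ on $[v,1)$, and the third bound where $\ln 6-1\approx 0.79>\ln 2$. Note, however, that the paper does not prove this theorem at all: it is imported verbatim from Okamoto (1959) (the two square-root bounds \prettyref{eq:binomial_lower} and \prettyref{eq:binomial_upper}) and from Mitzenmacher--Upfal (the bound \prettyref{eq:binomial_upper_2}), so your argument is necessarily a different, self-contained route. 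It is in fact close in spirit to the originals: Okamoto's asymmetric exponents come from exactly the kind of one-variable convexity analysis in $\sqrt t$ that you carry out with $F$, and your choice $\lambda=\ln(t/p)$ for the $2^{-nt}$ bound reproduces the standard multiplicative-Chernoff derivation behind the Mitzenmacher--Upfal statement (there phrased as $\delta\ge 5$ in $\bigl(e^{\delta}/(1+\delta)^{1+\delta}\bigr)^{\mu}$). What your write-up buys is a single unified entry point (Cram\'er--Chernoff plus two elementary lower bounds on binary KL) rather than three separate citations, and your side remark explaining why the lower tail only gets exponent $1$ while the upper tail gets $2$ is a genuine clarification the cited sources do not emphasize. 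Two cosmetic points: your formula for $\lambda^\star$ degenerates at $t=1$, and your convexity argument lives on $[\,\sqrt p,1)$, so you should add one sentence extending \prettyref{eq:binomial_upper} to the endpoint $t=1$ by continuity of $t\mapsto D(t\|p)$ (equivalently, checking $-\ln p\ge 2(1-\sqrt p)^2$ directly); and you correctly flagged that the range ``$6p\le t\ge 1$'' in the statement is a typo for $6p\le t\le 1$.
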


\end{appendices}

\section*{Acknowledgment}
 J.~Xu would also like to thank Cris Moore, Jian Ding, Zongming Ma, and Yihong Wu 
for inspiring discussions
on graph matching and isomorphism. J.~Xu was supported by the NSF Grant CCF-1755960.

\bibliographystyle{alpha}
\bibliography{graphical_combined}

\end{document}